\documentclass[a4paper,fleqn]{cas-sc}

\usepackage[authoryear]{natbib}
\usepackage{adjustbox}
\usepackage{stfloats}
\usepackage{subcaption}

\usepackage[capitalize,noabbrev]{cleveref}
\crefname{figure}{Fig.}{Figs.}
\crefname{equation}{Eq.}{Eqs.}
\usepackage{algorithmic}
\usepackage{algorithm}
\usepackage{amsthm}
\theoremstyle{plain}
\newtheorem{theorem}{Theorem}[section]

\newtheorem{lemma}{Lemma}[section]

\theoremstyle{definition}
\newtheorem{definition}{Definition}[section]

\theoremstyle{remark}
\newtheorem{remark}{Remark}[section]

\begin{document}
\let\WriteBookmarks\relax

\title [mode = title]{Time-varying Interaction Graph ODE for Dynamic Graph Representation Learning}                      
%
%

\author[1]{Xiaoyi Wang}
\credit{Conceptualization of this study, Formal analysis, Methodology, Resources, Software, Data curation, Validation, Writing - Original draft preparation}
\author[1]{Zhiqiang Wang}
\cormark[1]
\ead{wangzq@sxu.edu.cn}
\credit{Conceptualization of this study, Funding acquisition, Methodology, Project administration, Supervision, Writing – review and editing}
\author[1]{Jianqing Liang}
\credit{Conceptualization of this study, Funding acquisition, Methodology, Project administration, Supervision, Writing – review and editing}
\author[1]{Xingwang Zhao}
\credit{Methodology, Supervision, Writing – review and editing}
\author[2]{Chuangyin Dang}
\credit{Methodology, Supervision, Writing – review and editing}
\author[3]{Zhen Jin}
\credit{Methodology, Supervision, Writing – review and editing}
\author[1]{Jiye Liang}
\credit{Funding acquisition, Writing – review and editing, Project administration, Supervision}
%
%
\address[1]{Key Laboratory of Computational Intelligence and Chinese Information Processing of Ministry of Education, School of Computer and Information Technology, Shanxi University, Taiyuan, Shanxi 030006, China.}
\address[2]{Department of System Engineering and Engineering Management, City University of Hong Kong, Kowloon 999077, Hong Kong, China.}
\address[3]{Key Laboratory of Complex Systems and Data Science of Ministry of Education, Complex Systems Research Center, Shanxi University, Taiyuan 030006, Shanxi, China.}
%
%
%

%
%
%
%

\cortext[cor1]{Corresponding author}

\nonumnote{This manuscript is under review by Elsevier.}

\begin{abstract}
Graph neural Ordinary Differential Equations (ODE) combine neural ODE with the message passing mechanism of Graph Neural Networks (GNN), providing a continuous-time modeling method for graph representation learning. However, in dynamic graph scenarios, existing graph neural ODEs typically employ a unified message passing mechanism, assuming that inter-node interactions share the same message passing function at any time, which makes it challenging to capture the diversity and time-varying nature of inter-node interaction patterns. To address this, we propose Time-varying Interaction Graph Ordinary Differential Equations (TI-ODE). The core idea of TI-ODE is to decompose the evolution function of a graph ODE into a set of learnable interaction basis functions, where each basis function corresponds to a distinct type of inter-node interaction. These basis functions are dynamically combined through time-dependent learnable weights, enabling inter-node interaction patterns to adaptively evolve over time. Experimental results on six dynamic graph datasets demonstrate that TI-ODE consistently outperforms existing methods and achieves state-of-the-art performance on attribute prediction tasks, and experiments on the \textit{Covid} dataset further verify the interpretability and generalizability of our TI-ODE. Furthermore, we demonstrate both theoretically and empirically that TI-ODE exhibits superior robustness compared to models utilizing a unified message-passing mechanism.
\end{abstract}

%

\begin{keywords}
Graph Neural Networks \sep Dynamic Graph \sep Neural Ordinary Differential Equations \sep Deep Learning
\end{keywords}

\maketitle

\section{Introduction}
Dynamic graphs are ubiquitous in real-world systems such as social networks, transportation networks, and biological networks, where nodes and edges evolve over time and exhibit complex dynamic behaviors. To model such network dynamics, Dynamic Graph Neural Networks (DGNN) \citep{yang2024dynamic,zheng2025survey} have been proposed. Building upon Graph Neural Networks (GNN) \citep{scarselli2008graph}, DGNN explicitly incorporate the temporal dimension to characterize the dynamic evolution of both node states and graph structures. However, traditional DGNN based on discrete neural architectures \citep{zheng2025survey} rely on predefined time steps and fixed hierarchical update mechanisms, which limits their ability to accurately capture continuous-time dynamics. Furthermore, such fixed data-processing pipelines reduce model adaptability, making it difficult to handle irregularly observed data. The redundant structures and parameters inherent in discrete-time designs also lead to increased computational and memory overhead \citep{JFYZ202408014}. To address these limitations, neural Ordinary Differential Equations (ODE) \citep{chen2018neural} offer a continuous-time modeling paradigm that characterizes the system's continuous evolution by parameterizing the derivative of states with respect to time. Researchers have further extended this continuous modeling capability to the domain of dynamic graphs by integrating the message passing mechanism of GNN into the evolution equations of neural ODE \citep{liu2025graph}.

Despite these advances, real-world inter-node interaction patterns often exhibit significant diversity and time-varying nature. Specifically, diversity refers to the fact that inter-node interactions can take multiple types, and multiple different types of interactions may occur simultaneously at the same time; these interactions are not mutually exclusive but can coexist in parallel. Time-varying nature refers to the dynamic evolution of inter-node interaction patterns over time, manifested in the emergence and disappearance of interactions, switching among interaction types, and continuous changes in interaction strength. For example, in social networks, user interactions are diverse (e.g., likes, comments, shares) and show significant dynamic evolution characteristics \citep{thorgren2024temporal}; in infectious disease transmission networks, viruses exhibit multiple transmission modes (e.g., sexual transmission, bloodborne transmission, airborne transmission \citep{shaw2012hiv}), which present significant time-dependent features across different stages \citep{karia2020covid,xin2021government,wang2025multi}. Existing graph neural ODE models \citep{JFYZ202408014,liu2025graph} typically rely on a unified message passing mechanism, which assumes that all nodes share the same message passing function. While this assumption simplifies computation to some extent, a unified message passing function fails to capture the diversity and time-varying nature of inter-node interaction patterns, thereby limiting the model’s expressive power and interpretability \citep{11271751}. Consequently, how to effectively model the diversity and time-varying nature of inter-node interaction patterns has emerged as a key challenge in dynamic graph learning.

To address this challenge, we propose Time-varying Interaction Graph Ordinary Differential Equations (TI-ODE). The core idea of this model is to represent the evolution function in the graph ODE as a set of learnable interaction basis functions, each corresponding to a specific type of inter-node interaction, which are dynamically combined with time-dependent learnable weights. Importantly, the proposed formulation departs from conventional attention or reweighting schemes by representing interaction dynamics through a functional basis expansion, instead of relying on a single shared function to parameterize all pairwise relationships. The main contributions of this work are summarized as follows:
\begin{itemize}
	\item  We propose a graph neural ODE model, TI-ODE, which models inter-node interactions in the graph ODE evolution function as a time-dependent combination of multiple basis functions, capturing both the diversity of interaction patterns and their time-varying nature. 
	\item We theoretically prove that TI-ODE exhibits higher robustness compared to models that adopt a unified message passing function. 
	\item Experimental results demonstrate that TI-ODE consistently outperforms existing state-of-the-art models across six diverse benchmarks, including two physical dynamics datasets, two molecular dynamics datasets, and two real-world datasets. To further validate its practical utility, we evaluate TI-ODE on the \textit{Covid} dataset, demonstrating its superior interpretability and generalizability. Furthermore, additional empirical evaluations indicate that TI-ODE exhibits significantly enhanced robustness compared to models that rely on a unified message-passing function.
\end{itemize}

\section{Related Work}
\subsection{GNN for Dynamic Graph Representation}
Owing to the inherent advantages of GNN in modeling complex structural relations \citep{scarselli2008graph}, they have been widely adopted in dynamic graph representation learning in recent years  \citep{zheng2025survey}. Early studies largely followed the paradigm of static GNN by stacking recurrent architectures over discrete time to capture temporal dependencies  \citep{zheng2025survey}, including DCRNN \citep{li2018diffusion}, GraphWaveNet \citep{wu2019graph}, EvolveGCN \citep{pareja2020evolvegcn}, WD-GCN \citep{manessi2020dynamic}, AGCRN \citep{bai2020adaptive},  TTS-AMP \citep{cini2023taming} and T\&S-AMP \citep{cini2023taming}  . However, discrete-time models typically rely on a fixed sampling frequency, making them ill-suited for irregularly sampled or event-driven dynamic graphs. Consequently, continuous-time dynamic graph models have attracted increasing attention \citep{zheng2025survey}, such as DyREP \citep{trivedi2019dyrep}, JODIE \citep{kumar2019predicting}, TGAT \citep{xu2020inductive}, and TREND \citep{wen2022trend}. Although these methods support continuous-time timestamps, they often still depend on predefined time discretization and fixed update mechanisms, which can lead to information loss and temporal blurring \citep{JFYZ202408014}. In contrast, we leverage neural ODE \citep{chen2018neural} to model node-state evolution as a continuous-time dynamical system, enabling dynamic graphs to evolve naturally in continuous time and alleviating the inherent limitations of discrete-time formulations.
\subsection{Graph Neural ODE}
Graph neural ODE model the temporal evolution of graph-structured data as a continuous-time dynamical system, allowing node states (or topologies) to change smoothly over time in a differentiable manner \citep{liu2025graph}. This approach is inspired by neural ODE \citep{chen2018neural}, which use learnable differential equations to describe the continuous evolution of latent states, naturally adapting to irregular sampling and supporting interpolation and extrapolation at arbitrary time points. Later, Latent-ODE \citep{rubanova2019latent} extended this idea by introducing it into a variational inference framework, effectively handling sparse observations and asynchronous sequences, laying the groundwork for continuous-time modeling of dynamic graphs. In this context, the core idea of graph neural ODE is to incorporate the message passing mechanism of GNN into the ODE evolution function, where the derivatives of node states are jointly determined by neighborhood information flow and the node's own dynamics \citep{liu2025graph}, forming a continuous vector field defined on the graph. Representative methods include LG-ODE \citep{huang2020learning}, CG-ODE \citep{huang2021coupled}, SocialODE \citep{wen2022social}, HOPE \citep{luo2023hope}, TREAT\citep{huang2024physics}, CTAN \citep{gravina2024long}, PGODE \citep{luopgode} and CSG-ODE \citep{wangcsg}. However, existing graph neural ODE typically assume that inter-node interactions share a unified message-passing function, making it difficult to capture the diversity and time-varying nature of real-world interactions. To address this, we introduce a set of learnable interaction basis functions into graph neural ODE, allowing for a more flexible representation of inter-node interactions through a dynamic combination of functions. Importantly, this formulation goes beyond conventional attention or reweighting mechanisms by explicitly modeling interaction dynamics as a functional basis expansion, rather than parameterizing pairwise interactions with a single shared function.

\section{Preliminary}
\subsection{Problem Definition}
Given a sequence of dynamic graph snapshots ${{\cal G}^{1:T}} = \{ {G^1},{G^2}, \ldots ,{G^T}\}$, where $T$ denotes the total number of time steps. For any timestamp $t \in \{ 1,...,T\} $, the state of the dynamic graph is defined as ${G^t} = (V,{E^t},{X^t})$, where $V$ is the node set containing $N$ nodes, ${E^t}$ denotes the edge set at time $t$, and ${X^t} = \{ x_i^t|i \in V\}  \in {\mathbb{R}^{N \times d}}$ is the node feature matrix at time $t$ with feature dimension $d$. In addition, the graph topology can be represented by a weighted adjacency matrix ${A^t} \in {\mathbb{R}^{N \times N}}$, whose entry $A_{ij}^t$ characterizes the connectivity between nodes $i$ and $j$ at timestamp $t$, as well as the corresponding interaction strength. Given a historical dynamic graph sequence ${{\cal G}^{1:{T_h}}} = \{ {G^1},{G^2}, \ldots ,{G^{{T_h}}}\} $, our goal is to learn a parameterized model ${f_\theta }$ that leverages the historical information ${{\cal G}^{1:{T_h}}}$ to predict future node attributes, i.e., to generate ${X^{{T_h}+1 :T}}$.

\subsection{Neural ODE}
Unlike traditional Residual Networks \citep{he2016deep} that rely on stacking discrete layers, neural ODE \citep{chen2018neural} parameterize the derivative of the hidden state using a neural network, thereby transforming a discrete sequence of layers into a continuous transformation process. This approach essentially generalizes the discrete hierarchical structures of deep learning to a continuous limit. Instead of defining a mapping with a finite number of layers, it specifies a vector field that characterizes the continuous evolution of states over time. This process is described by an ODE parameterized by a neural network:
\begin{equation}
	\frac{{dh(t)}}{{dt}} = {f_\theta }(h(t),t),
\end{equation}
where $f$ represents the neural network with parameters $\theta$. The input is first mapped to an initial state ${h}(0)$ and then integrated over the time interval $[0, T]$ using standard ODE numerical methods such as Euler or Runge-Kutta methods to obtain the evolved features. A primary challenge in training such models lies in the efficient computation of gradients. Directly backpropagating through every operation of an ODE solver leads to memory consumption that increases sharply with the number of steps. To mitigate this, neural ODE employ the Adjoint Sensitivity Method \citep{chen2018neural}. This technique computes the gradient of the loss function with respect to model parameters by solving an augmented ODE backward in time rather than storing all intermediate states from the forward pass. The key advantage is a constant memory cost; the memory required for backpropagation remains fixed regardless of the number of integration steps. This not only alleviates the memory bottleneck associated with training deep models but also ensures precise gradient computation while maintaining control over numerical errors.
\begin{figure*}[htbp]
	\centering
	\includegraphics[width=\textwidth]{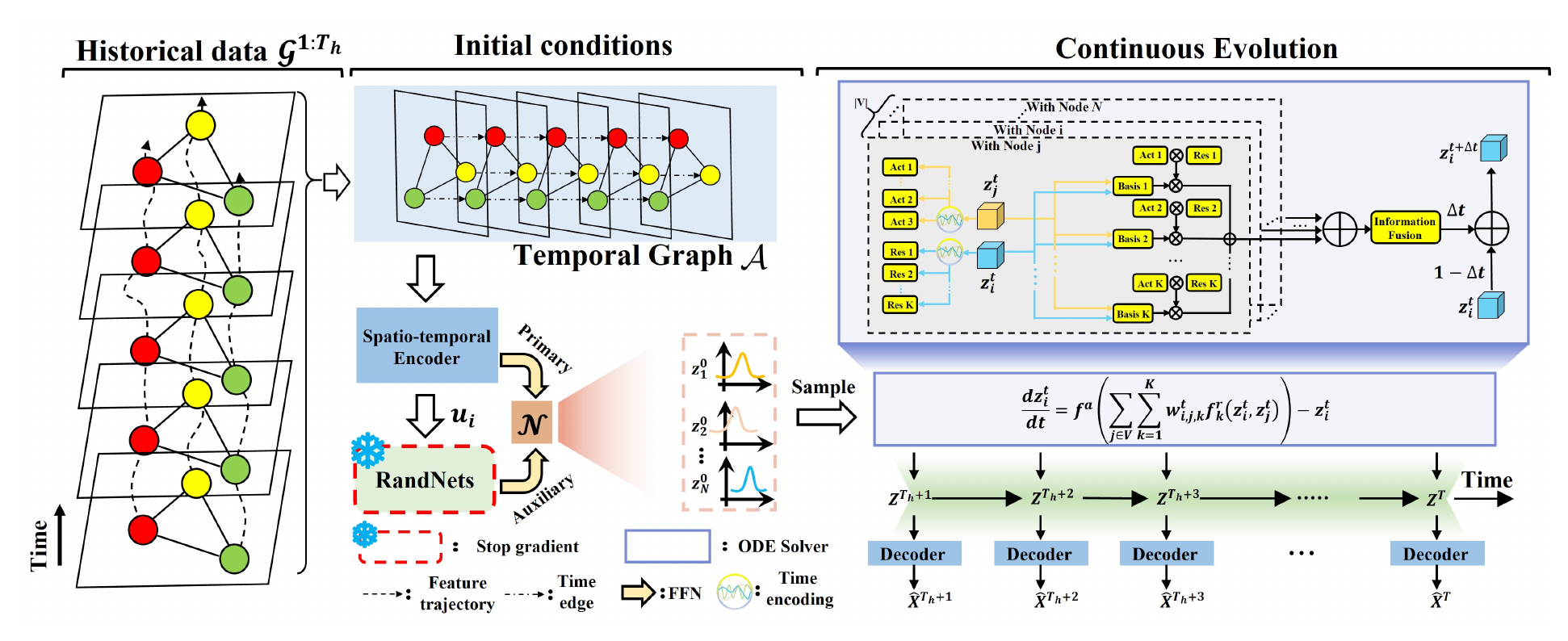}
	\caption{An overview of the proposed TI-ODE. Our TI-ODE first constructs temporal graphs from historical dynamic graph sequences and employs an attention-driven spatio-temporal encoder to obtain node sequence representations through message passing and temporal aggregation, thereby inferring the primary posterior distribution. We then introduce a random mapping network to generate multiple auxiliary posterior distributions, which are adaptively fused with the primary posterior distribution to yield a refined posterior distribution for sampling the initial latent states. Finally, the sampled initial state is propagated forward by TI-ODE in continuous time, and a decoder maps the evolved latent states back to the observation space to predict future node trajectories.}
	\label{overivew}
	
	\vspace{-0.5cm}
\end{figure*}
\section{Methodology}
In this section, we present the design details of TI-ODE, as shown in \textcolor{blue}{\cref{overivew}}. TI-ODE is optimized end-to-end within the Variational AutoEncoder (VAE) framework \citep{kingma2013auto}. We describe the model from three aspects: initial condition construction, time-varying interaction graph ODE, and model optimization.

\subsection{Initial Condition Construction}
\label{Initial Condition Construction}
In graph neural ODE methods, the choice of initial conditions plays a decisive role in learning dynamical trajectories, as it directly affects the model’s ability to capture the underlying evolution laws of the system \citep{liu2025graph}. To enable node representations to preserve key features related to node interactions and dynamic evolution while integrating historical information, we adopt an attention-based spatio-temporal encoder \citep{huang2021coupled}, which progressively aggregates each node’s historical information via message passing on a temporal graph and employs an attention mechanism to fuse features across different time steps through weighted integration. 

We first construct a temporal graph from the historical dynamic graph sequence ${{\cal G}^{1:{T_h}}}$. In this temporal graph, each node ${i^t}$ instance at a timestamp $t$ is treated as a distinct node. Two types of edges are included: (1) temporal edges, which capture the continuous evolution of the same node over time; and (2) spatial edges, which are induced by the dynamic graph snapshot ${G^t}$ at each timestamp $t$ and encode the inter-node topology. Formally, the adjacency matrix ${\cal A}$ of the temporal graph is defined as follows:
\begin{equation}
	\label{st-graph}
	{\cal A}({i^t},{j^{\tilde t}}) = \left\{ \begin{array}{l}
		A_{ij}^t,t = \tilde t,\\
		1,i = j,\tilde t = t + 1,\\
		0,otherwise.
	\end{array} \right.
\end{equation}

Next, we employ an attention-based spatio-temporal encoder to aggregate features from each node’s neighborhood in the temporal graph, thereby progressively integrating historical information.  Let $h_{{i^t}}^{(l)} \in {\mathbb{R}^{{d_{hid}}}}$ denote the hidden representation of node ${i^t}$ at the $l$-th layer, where $d_{hid}$ is the hidden dimension. Since the temporal graph contains both temporal and spatial edges, we introduce a time embedding before message passing to differentiate edge types, i.e., $\hat h_{{i^t}}^{(l)} = h_{{i^t}}^{(l)} + T{E_{(\tilde t - t)}}$,  where $T{E_{(\Delta t,2i)}} = \sin (\frac{{\Delta t}}{{{{10000}^{2i/{d_{hid}}}}}})$ and $T{E_{(\Delta t,2i + 1)}} = \cos (\frac{{\Delta t}}{{{{10000}^{2i/{d_{hid}}}}}})$. The attention weights are computed via query–key matching as follows:
\begin{equation}
	\label{attention}
	\alpha _{j\tilde {^t},{i^t}}^{(l)} = \frac{{{\cal A}({j^{\tilde t}},{i^t})}}{{\sqrt {{d_{hid}}} }}{({W_K}\hat h_{{j^{\tilde t}}}^{(l - 1)})^ \top }({W_Q}h_{{i^t}}^{(l - 1)}),
\end{equation}
where ${W_Q} \in {\mathbb{R}^{{d_{hid}} \times {d_{hid}}}}$ and ${W_K} \in {\mathbb{R}^{{d_{hid}} \times {d_{hid}}}}$ are the learnable weight matrices that project the input features into the query and key vectors, respectively. Next, the node representations are updated by weighted aggregation of neighborhood information:
\begin{equation}
	\label{update}
	h_{{i^t}}^{(l + 1)} = h_{{i^t}}^{(l)} + \sigma (\sum\limits_{j\widetilde {^t} \in {{\cal N}_{{i^t}}}} {\alpha _{j\tilde {^t},{i^t}}^{(l)}{W_V}\hat h_{{j^{\tilde t}}}^{(l)}} ),
\end{equation}
where ${W_V} \in {\mathbb{R}^{{d_{hid}} \times {d_{hid}}}}$ is the learnable weight matrix that maps the input features to value vectors, ${{\cal N}_{{i^t}}}$ denotes the set of all neighboring nodes of node ${i^t}$ in the temporal graph, and $\sigma (\cdot)$ is an activation function. We take the output of the $L$-th layer as the final representation of each node, denoted as ${h_{{i^t}}} = h_{{i^t}}^{(L)}$. Based on this representation, we further construct the corresponding sequential representation for each node, denoted as $\{ {u_i}\} _{i = 1}^N$:
\begin{equation}
	\label{u_attention1}
	{a_{{i^t}}^s = \tanh ((\frac{1}{{{T_h}}}\sum\limits_t^{{T_h}} {{{\hat h}_{{i^t}}}} ){W_a}){{\hat h}_{{i^t}}},}
\end{equation}
\begin{equation}
	\label{u_attention2}
	{u_i} = \frac{1}{{{T_h}}}\sum\limits_t^{{T_h}} {\sigma (a_{{i^t}}^s{{\hat h}_{{i^t}}})} ,
\end{equation}
where ${\widehat h_{{i^t}}} = {h_{{i^t}}} + T{E_{(t)}}$. The attention score $a_{{i^t}}^s$ is computed by matching a global query vector with the final node representation at each timestamp. Concretely, the global query vector is obtained by average pooling over all node representations and then projecting the pooled vector with a learnable matrix ${W_a}$.  Finally, this sequential representation is used to infer the mean and variance of the primary posterior distribution: 
\begin{equation}
	\label{original posterior}
	q(z_i^{0}|{{\cal G}^{1:{T_{h}}}}) = {\cal N}({\phi ^m}({u_i}),{\phi ^v}({u_i})) = {\cal N}({\mu _i},{\sigma _i}),
\end{equation}
where ${\phi ^m}$ and ${\phi ^v}$ are learnable feed-forward neural networks (FNN) used to generate the mean and variance, respectively. To enhance the diversity of the initial latent states, we introduce a set of untrained Random FNNs (RandNet) to stochastically perturb the mean and variance of the primary posterior distribution, thereby expanding the representational space of the posterior distribution. This design is motivated by empirical evidence that randomly initialized, untrained neural networks can yield effective representations without task-specific training  \citep{thompson2022evaluation}. Specifically, we construct multiple RandNets $\phi _k^r$  with random initialization and fixed parameters, which apply additional transformations to the encoder-generated sequential representations, thereby producing a set of auxiliary posterior distributions. We then adaptively fuse the means and variances of the primary posterior distribution with those of the auxiliary posterior distributions to obtain a refined approximate posterior distribution $q(z_i^{0}|{{\cal G}^{1:{T_h}}}) = {\cal N}({\hat \mu _i},{\hat \sigma _i})$:
\begin{align}
	\label{aul posterior1}
	u_{i,k}^r = \phi _k^r({u_i}),
\\
	\label{aul posterior2}
	\mu _{i,k}^r = {\phi ^m}(u_{i,k}^r),\sigma _{i,k}^r = {\phi ^v}(u_{i,k}^r),
\\
	\label{aul posterior3}
	a_{i,k}^r = {\mathop{\rm Tanh}\nolimits} ({\varphi ^a}[{u_i}||u_{i,k}^r]),
\\
	\label{aul posterior4}
	{\hat \mu _i} = {\mu _i} + \sum\limits_{k = 1}^{{K^r}} {a_{i,k}^r\mu _{i,k}^r} ,{\hat \sigma _i} = {\sigma _i} + \sum\limits_{k = 1}^{{K^r}} {a_{i,k}^r\sigma _{i,k}^r} ,
\end{align}
where $u_{i,k}^r$ denotes the additional transformation induced by the $k$-th RandNet, and $\mu _{i,k}^r$ and  $\sigma _{i,k}^r$ are the mean and variance of the corresponding auxiliary posterior distribution, respectively. Let ${K^r}$ denote the number of RandNets, and ${\varphi ^a}$ be a learnable FNN used to compute the fusion weights $a_{i,k}^r$. This design introduces additional random projection views without altering the main architecture of the encoder, thereby providing a more diverse set of candidate initial conditions for subsequent continuous-time dynamical modeling. It is important to emphasize that all parameters of RandNets are kept fixed during training and are not updated by gradient descent, and thus do not introduce additional optimization complexity. The only learnable components are the parameters associated with the fusion weights, enabling the model to adaptively regulate the contributions of different auxiliary posterior distributions. When the auxiliary posterior distributions induced by certain random mappings more effectively complement the information in the primary posterior distribution or better characterize the latent state distribution, their corresponding weights will be increased; otherwise, they will be decreased. Finally, we sample the initial latent state $z_i^0 \sim p(z_i^0) \approx {\cal N}({\hat \mu _i},{\hat \sigma _i})$ for each node from the refined posterior distribution.

\subsection{Time-Varying Interaction Graph ODE}
To model the diversity and time-varying nature of inter-node interaction patterns, we propose a graph neural ODE based on continuous-time interaction basis functions. The core idea is to represent the evolution function of a graph ODE as a set of learnable interaction basis functions, where each basis function corresponds to a distinct type of inter-node interaction, and to dynamically combine them using time-dependent learnable weights. Concretely, the ODE takes the following form:
\begin{subequations}
	\label{ti-ode}
	\begin{align}
		\frac{{dz_i^t}}{{dt}} = {f^a}(\sum\nolimits_{j \in V} {\sum\limits_{k = 1}^K {w_{i,j,k}^t} f_k^r(z_i^t,z_j^t)} ) - z_i^t, \label{ti-odea}\\
		\overline w_{i,k}^t = \sigma (FNN_k^a(z_i^t + T{E_{(t)}})), \label{ti-odeb}\\
		\hat w_{j,k}^t = \sigma (FNN_k^r(z_j^t + T{E_{(t)}})), \label{ti-odec}\\
		w_{i,j,k}^t = \hat w_{j,k}^t \cdot \overline w_{i,k}^t, \label{ti-oded}
	\end{align}
\end{subequations}
where $f_k^r$ denotes the $k$-th basis function parameterized by an FNN, and $K$ is the total number of basis functions. The coefficient $w_{i,j,k}^t$ represents the weight of the $k$-th basis function between nodes $i$ and $j$ at time $t$. We further decompose this coefficient into two time-dependent learnable factors: $\overline w_{i,k}^t$, which denotes the response weight (receptivity) of node $i$ to the $k$-th basis function, and $\hat w_{j,k}^t$, which denotes the activation weight (propagation strength) of node $j$ for the same basis function. The joint effect of response and activation weights enables the model to capture role-specific differences of nodes under different interaction types. We define ${f^a}$ as an aggregation function parameterized by an FNN, which integrates interaction information from neighboring nodes and determines how a node should leverage these messages to update its own state. In this way, the model flexibly captures the influence of neighborhood information on node evolution. By solving the ODE, we can obtain the latent states at the target time steps:
\begin{equation}
	\label{odesolve}
	\begin{array}{l}
		z_i^{{T_h}+1},...,z_i^{T} = ODESolve(\textcolor{blue}{\cref{ti-ode}},{Z^{0}},({T_h}+1,{T_h}+2,...,T)),
	\end{array}
\end{equation}
where ${Z^{0}} = [z_1^{0},z_2^{0},...,z_N^{0}]$. Since neural ODEs generally do not admit closed-form analytical solutions, the ODESolve approximates the solution using numerical methods \cite{butcher2016numerical}, such as the Euler method or Runge–Kutta schemes, thereby generating the latent states at different time steps. In the following, we present theoretical analyses of TI-ODE.
\begin{theorem}
	\label{Existence unique}
	Assume that all FNNs in the \textcolor{blue}{\cref{ti-ode}} have bounded weights (with finite spectral norms) and bounded biases, the activation functions are Lipschitz continuous, and the numbers of nodes and basis functions are finite. Then, for any bounded initial value $z(0)\in {\mathbb{R}^{N \times d}}$(where $d$ is a fixed finite dimension), the \textcolor{blue}{\cref{ti-ode}} admit solutions that depend continuously on the initial conditions.
\end{theorem}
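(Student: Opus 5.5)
We will treat \cref{ti-ode} as a single autonomous-in-form but time-dependent system $\frac{dZ}{dt}=F(Z,t)$ on $\mathbb{R}^{N\times d}$. Here $F(Z,t)_i = f^a\bigl(\sum_{j}\sum_{k} \hat w^t_{j,k}\,\overline w^t_{i,k}\, f^r_k(z_i,z_j)\bigr)-z_i$, with the weights given by \cref{ti-odeb,ti-odec,ti-oded}. Our plan is to invoke the Picard--Lindel\"of theorem together with the standard continuous-dependence (Gr\"onwall) estimate.

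The first step is to show that $F$ is continuous in $(Z,t)$ and locally Lipschitz in $Z$, uniformly for $t$ in compact intervals. Each FNN is a composition of affine maps with finite spectral norm and Lipschitz activations. Hence each FNN is globally Lipschitz, with constant at most the product of the spectral norms and activation constants, and it grows at most linearly: $\|FNN(x)\|\le L\|x\|+c$.

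The time embedding $TE_{(t)}$ is a bounded smooth function of $t$. Therefore $\overline w^t_{i,k}$ and $\hat w^t_{j,k}$ are Lipschitz in $Z$ and continuous in $t$. If $\sigma$ is bounded (e.g.\ sigmoid, which we will assume, or else treat via local boundedness), they are bounded by some $M_w$.

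The products $w_{i,j,k}\,f^r_k(z_i,z_j)$ are products of Lipschitz functions. Such products are only locally Lipschitz. On a ball $\|Z\|\le R$ each factor is bounded, and the product rule $|ab-a'b'|\le |a||b-b'|+|b'||a-a'|$ gives a Lipschitz constant $L_R$. The sum over finitely many $j,k$ and composition with the Lipschitz $f^a$ preserve this, and the linear term $-z_i$ is trivially Lipschitz. So $F$ is locally Lipschitz in $Z$, and Picard--Lindel\"of yields a unique local solution from any bounded $Z(0)$.

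The main obstacle is excluding finite-time blow-up, so that the solution exists on the whole horizon $[0,T]$. We will first try to use the bounded sigmoid gates. With them, $\|w_{i,j,k}f^r_k(z_i,z_j)\|\le M_w^2\,(L_r(\|z_i\|+\|z_j\|)+c_r)$, which is linear growth. Then $\|F(Z,t)\|\le L_a(C_1\|Z\|+C_2)+c_a+\|Z\|=:\alpha\|Z\|+\beta$.

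By Gr\"onwall, $\|Z(t)\|\le (\|Z(0)\|+\beta t)e^{\alpha t}$, so the solution stays in a bounded ball $B_R$ on $[0,T]$. Hence the maximal solution extends to all of $[0,T]$. If the gates were unbounded, we would fall back on a local-existence statement, which matches the theorem's wording "admit solutions".

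For continuous dependence, we take two initial values $Z(0),Z'(0)$ in a bounded set. Both trajectories stay in the same ball $B_R$ from the a priori bound above. Using the Lipschitz constant $L_R$ of $F$ on that ball, Gr\"onwall gives $\|Z(t)-Z'(t)\|\le e^{L_R t}\|Z(0)-Z'(0)\|$ for $t\in[0,T]$, which completes the argument.
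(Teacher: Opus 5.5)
Your proposal is correct, and its backbone matches the paper's proof. You write the system as $\frac{dz}{dt}=\mathcal{F}(t,z)$ and get continuity in $t$ from the sinusoidal time encoding. You then build Lipschitz continuity in $z$ from two facts: FNNs with finite spectral norms and Lipschitz activations are Lipschitz, and products, finite sums and compositions preserve this. Finally you apply Picard--Lindel\"of.

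The differences are in care and scope:
\begin{itemize}
\item The paper states that $\mathcal{F}$ is Lipschitz without qualification. But its product lemma (Lemma~\ref{lemma4}) needs bounded factors, and Lemma~\ref{lemma5} only bounds $f_k^r$ on bounded inputs, so the claim is really local. You make this explicit by working on balls $\|Z\|\le R$, which is all Picard--Lindel\"of requires.
\item The paper stops once the Picard--Lindel\"of hypotheses are checked. As quoted, that yields a solution only on $[t_0-\alpha,t_0+\alpha]$, and the paper never actually argues continuous dependence.
\item Your linear-growth bound with Gr\"onwall excludes blow-up and gives existence on all of $[0,T]$.
\item Your second Gr\"onwall estimate on the common ball gives explicit, indeed Lipschitz, dependence on $Z(0)$.
\end{itemize}
So you prove everything the theorem asserts, rather than leaving the global and continuity parts to standard ODE theory.

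The cost is that your global-in-time step uses bounded gates, i.e.\ that the $\sigma$ producing $\overline w_{i,k}^t,\hat w_{j,k}^t$ is the sigmoid. This is not among the theorem's stated hypotheses, which only require Lipschitz activations, though the paper itself assumes it in the proof of Theorem~\ref{Robust}. Your fallback to a local statement when the gates are unbounded recovers exactly what the paper's argument gives.
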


We next present a theorem showing that, under certain conditions, TI-ODE exhibits superior stability in the Lyapunov sense compared to a unified interaction model $\frac{{dz_i^t}}{{dt}} = {f^a}(\sum\limits_{j \in V} {{f^r}(z_i^t,z_j^t)} )$ in which all nodes share the same time-invariant interaction mechanism, when the initial states are perturbed. Specifically, during long-term evolution, TI-ODE demonstrates a lower growth rate of perturbation-induced errors and smaller accumulated errors, thereby exhibiting stronger robustness.

\begin{theorem}
	\label{Robust}
	Assume that the \textcolor{blue}{\cref{ti-ode}} admit solutions that depend continuously on the initial conditions. Let the Lipschitz constants of ${f^a}$,  ${f^r}$,$f_k^r$, and $w_{i,j,k}^t$ be ${L_a}$, ${L_r}$ , ${L_{r,k}}$ and ${L_{w,k}}$, respectively. Also, let the upper bounds of $f_k^r(z_i^t,z_j^t)$ and $w_{i,j,k}^t$ be ${C_{r,k}}$ and ${C_{w,k}}$. Define ${L^k} = d{C_{r,k}}{L_{w,k}} + \sqrt d {C_{w,k}}{L_{r,k}}$. If the condition $\sum\limits_k {{L^k}}  \le {L_r}$ holds, then the \textcolor{blue}{\cref{ti-ode}} exhibit stronger robustness under the Lyapunov function $V(t) = \frac{1}{2}||{e^t}|{|^2}$ than the unified-interaction model.
\end{theorem}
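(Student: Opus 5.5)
We compare the two models by how fast the Lyapunov function of the perturbation error can grow. Let $z_i^t$ and $\tilde z_i^t$ be two solutions of \cref{ti-ode} (existence and continuous dependence are guaranteed by \cref{Existence unique}) started from $z(0)$ and a perturbed initial value $\tilde z(0)$. Write $e_i^t=\tilde z_i^t-z_i^t$, $e^t=[e_1^t,\dots,e_N^t]$ and $V(t)=\frac12\|e^t\|^2$, and do the same for the unified model. The aim is a differential inequality $\dot V\le 2\gamma V$ for each model, with explicit growth rates $\gamma_{\mathrm{TI}}$ and $\gamma_{\mathrm{U}}$. We then show $\gamma_{\mathrm{TI}}\le\gamma_{\mathrm{U}}$ under $\sum_k L^k\le L_r$. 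By Grönwall, this gives $V(t)\le V(0)e^{2\gamma t}$ for both models, so TI-ODE has the smaller error growth rate. Integrating in time then gives the smaller accumulated error $\int_0^T V(t)\,dt$.

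\textbf{Step 1 (the TI-ODE bound).} Differentiate to get $\dot V=\sum_i\langle e_i^t,\dot{\tilde z}_i^t-\dot z_i^t\rangle$. The leak term $-z_i^t$ contributes exactly $-\|e^t\|^2=-2V$. For the aggregation term, Cauchy–Schwarz and the Lipschitz property of $f^a$ give a bound by $L_a\|e_i^t\|$ times the difference of the messages. That difference is split per basis function with the standard product trick:
\[
\tilde w_{i,j,k}\tilde f_k-w_{i,j,k}f_k=(\tilde w_{i,j,k}-w_{i,j,k})\tilde f_k+w_{i,j,k}(\tilde f_k-f_k).
\]
The first piece is at most $C_{r,k}L_{w,k}\|(e_i,e_j)\|$ and the second at most $C_{w,k}L_{r,k}\|(e_i,e_j)\|$. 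The factors $d$ and $\sqrt d$ in $L^k$ come from passing between a scalar weight applied to a $d$-dimensional vector and the entrywise or Frobenius norms; I will track which norm each constant refers to so that exactly $L^k=dC_{r,k}L_{w,k}+\sqrt d\,C_{w,k}L_{r,k}$ appears. Summing over $j$ and using $\|(e_i,e_j)\|\le\|e_i\|+\|e_j\|$ with $\sum_i\sum_j\|e_i\|\|e_j\|\le N\|e\|^2$, we obtain
\[
\dot V\le\big(2NL_a\textstyle\sum_k L^k-2\big)V .
\]

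\textbf{Step 2 (the unified bound).} The same computation for $\frac{dz_i}{dt}=f^a\big(\sum_j f^r(z_i,z_j)\big)$, which has no weights and no leak term, gives $\dot V\le 2NL_aL_r V$. Since the unified model is defined without $-z_i$, its comparison rate is $\gamma_{\mathrm U}=NL_aL_r$. The TI-ODE rate is $\gamma_{\mathrm{TI}}=NL_a\sum_kL^k-1$, and the hypothesis $\sum_kL^k\le L_r$ yields $\gamma_{\mathrm{TI}}\le\gamma_{\mathrm U}-1<\gamma_{\mathrm U}$, so the inequality is in fact strict. Comparing the Grönwall envelopes, and their time integrals over $[0,T]$, concludes the proof.

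\textbf{Main obstacle.} The hard part is interpreting the conclusion so that it is provable. Upper bounds on $\dot V$ only compare worst-case envelopes; they say nothing about actual trajectories. I will therefore state robustness explicitly as the ordering of the Lyapunov growth bounds and of the accumulated-error bounds. A second difficulty is dimensional bookkeeping, namely reproducing precisely the factors $d$ and $\sqrt d$ in $L^k$. I would handle this by fixing the Frobenius norm on $\mathbb{R}^{N\times d}$ throughout and treating each weight $w_{i,j,k}^t$ as a scalar that is Lipschitz in the concatenated state.
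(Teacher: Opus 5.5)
Your proposal is correct and follows essentially the paper's route. Both arguments bound $\dot V$ via Cauchy--Schwarz and $L_a$, split $\tilde w_{i,j,k}\tilde f_k-w_{i,j,k}f_k$ by the product trick, and compare the coefficient built from $\sum_k L^k$ with the one built from $L_r$. Your departures are harmless refinements: you keep the leak term (the paper drops it, conservatively, since it only contributes $-\|e^t\|^2$), you aggregate with $N$ instead of the paper's $N^{3/2}$, and you add the Gr\"onwall step. You also need not reproduce $L^k$ exactly, since any per-basis constant at most $L^k$ suffices.

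The one slip is bookkeeping: $\sum_i\sum_j\|e_i^t\|\,(\|e_i^t\|+\|e_j^t\|)\le 2N\|e^t\|^2=4NV$. So the rates are $\gamma_{\mathrm{TI}}=2NL_a\sum_kL^k-1$ and $\gamma_{\mathrm{U}}=2NL_aL_r$, and $\gamma_{\mathrm{TI}}\le\gamma_{\mathrm{U}}-1$ still holds.
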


Because our architecture explicitly decomposes the model’s overall representational capacity into multiple basis functions and their weighted combination, this modular design can effectively reduce the range of gradient variation within any single subnetwork. In addition, the combination weights are modulated by nonlinear activation functions, which can further suppress potential nonlinear amplification effects. As a result, the overall gradient norm can be reasonably distributed across the $K$ basis functions, and under the joint influence of the weight distribution and the activations, the Lipschitz constant of each individual basis function is typically substantially smaller than the Lipschitz constant of the interaction function in a unified-interaction model. We therefore regard this assumption as reasonable: under mild conditions, this design tends to yield a tighter upper bound on the Lipschitz constant of the overall model, thereby improving robustness to perturbations. Detailed proofs of the above theorems are provided in the \textcolor{blue}{\cref{theorem1,theorem2}}.

\subsection{Decoder and Optimization}
\begin{algorithm}[!h]
	\caption{Training Algorithm of TI-ODE}
	
	\begin{algorithmic}[1]
		\STATE \textbf{Input:} Observation data $\mathcal{G}^{1:T_{h}}$.     
		\STATE \textbf{Output:} The parameters in the model.
		\STATE Initialize model parameters;
		\STATE Construct the temporal graph ${\cal A}$ as defined in \textcolor{blue}{\cref{st-graph}};
		\WHILE{not convergence}
		\FOR{each training sequence}
		
		\STATE Encode the temporal graph ${\cal A}$ to obtain node sequence representations $u_i$ as defined in \textcolor{blue}{\cref{attention,update,u_attention1,u_attention2}};
		\STATE Generate the primary posterior distribution $(\mu_i,\sigma_i)$ from $u_i$ according to \textcolor{blue}{\cref{original posterior}};
		\STATE Compute multiple auxiliary posterior distributions $(\mu _{i,k}^r,\sigma _{i,k}^r)$ from $u_i$ according to \textcolor{blue}{\cref{aul posterior1,aul posterior2}};
		\STATE Fuse the primary and auxiliary posteriors to obtain the refined posterior distribution ${\cal N}({\hat \mu _i},{\hat \sigma _i})$ according to \textcolor{blue}{\cref{aul posterior3,aul posterior4}};
		\STATE Sample the initial latent state $z_i^0 \sim {\cal N}({\hat \mu _i},{\hat \sigma _i})$;	
		\STATE Solve our TI-ODE in \textcolor{blue}{\cref{odesolve}};
		\STATE Decode latent states at future times to output predicted states;
		\STATE Compute the ELBO objective $\mathcal{L}$, i.e., \textcolor{blue}{\cref{loss}};
		\STATE Do backward propagation with $\mathcal{L}$;
		\ENDFOR
		\ENDWHILE
		
	\end{algorithmic}
	\label{Training Algorithm of TI-ODE}
\end{algorithm}
To map the latent states back to node trajectories in the observation space, we design a lightweight decoder for each node. Given the latent state $z_i^t$ at any time step, the decoder outputs the predicted future positions in the observation space, i.e., $\hat x_i^t = {\varphi ^d}(z_i^t)$, where ${\varphi ^d}$ is parameterized by an FNN. The entire framework is trained under variational inference. We sample the initial latent state from the refined posterior distribution inferred from the historical dynamic graph sequence, evolve it with TI-ODE to obtain future latent states, and then use a decoder to produce predictions in the observation space. The training objective is the evidence lower bound (ELBO) of the log-likelihood:

\begin{multline}
	\label{loss}
	\mathcal{L} 
	= {\mathbb{E}_{{Z^{0}}\sim { \prod\limits_{i = 1}^N {q(z_i^0|{{\cal G}^{1:{T_{h}}}})} }}}[\log p({X^{{T_{h}} + 1:T}})]
	\quad- KL[\prod\limits_{i = 1}^N {q(z_i^{0}|{{\cal G}^{1:{T_{h}}}})||{\cal N}(0,I)} ]
	\\
	= - \sum\limits_{i = 1}^N {\sum\limits_{t = {T_{h}} + 1}^{{ T}} {\frac{{||x_i^t - \hat x_i^t||}}{{2\sigma _c^2}}} }
	\quad- KL[\prod\limits_{i = 1}^N {q(z_i^{0}|{{\cal G}^{1:{T_{h}}}})||{\cal N}(0,I)} ],
\end{multline}
where ${\cal N}(0,I)$ is a standard normal prior distribution, and $\sigma _c^2$ denotes the fixed observation noise variance. The training process of this model is detailed in \textcolor{blue}{\cref{Training Algorithm of TI-ODE}}.

\section{Experiments}
\subsection{Datasets}
We conducted experiments on six datasets spanning three categories: physical dynamics simulations, molecular dynamics simulations, and real-world data. The statistical summary of these datasets is shown in \textcolor{blue}{\cref{Statistics}}.
\begin{table}[!h]
	\centering
	\caption{Statistics of the datasets used in the experiments.}
	\label{Statistics}
	\begin{tabular}{ccccccc}
		\hline
		\toprule[1pt]
		Dataset                      & \textit{Spring}     & \textit{Charged}     & \textit{2N5C}       & \textit{5AWL}       & \textit{Motion} & \textit{Covid} \\ \hline
		\#Nodes                      & \multicolumn{2}{c}{10}   & 60         & 40         & 29     & 31    \\ \cmidrule(r){2-3}   \cmidrule(r){4-5} \cmidrule(r){6-6} \cmidrule(r){7-7}       
		\#Nodes ft                   & \multicolumn{2}{c}{4}    & \multicolumn{2}{c}{3}   & 6      & 2     \\
		\#Train                      & \multicolumn{2}{c}{1000} & \multicolumn{2}{c}{200} & 222    & 21    \\
		\#Valid                      & \multicolumn{2}{c}{200}  & \multicolumn{2}{c}{50}  & 74     & -     \\
		\#Test                       & \multicolumn{2}{c}{200}  & \multicolumn{2}{c}{50}  & 74     & 8     \\
		\toprule[1pt]
	\end{tabular}
	\vspace{-0.3cm}
\end{table}
\subsubsection{Physical Dynamics Simulations}
The category of physical dynamics simulations comprises the \textit{Spring} \citep{kipf2018neural} and \textit{Charged} \citep{kipf2018neural} datasets. The \textit{Spring} dataset is based on a classical physical simulation where each node represents a point mass and each edge denotes a spring connecting two masses. Governed by Hooke’s law, the nodes move continuously under the influence of elastic forces and damping. The \textit{Charged} dataset simulates the motion of particles subject to Coulomb interactions. Here, nodes represent charged particles, and the edges reflect the repulsive or attractive forces determined by their charges. 
\subsubsection{Molecular Dynamics Simulations}
The category of molecular dynamics simulations comprises the \textit{2N5C} \citep{luopgode} and \textit{5AWL} \citep{luopgode} datasets. The \textit{2N5C} and \textit{5AWL} datasets consist of molecular dynamics data derived from the Protein Data Bank (PDB\footnote{https://www.rcsb.org}). In these datasets, each node represents an atom or residue in the protein structure, while edges denote physical spatial adjacency in three-dimensional space. The node features include time-varying spatial coordinates obtained from molecular dynamics simulations or experimental trajectories. Specifically, \textit{2N5C} is a protein complex with high structural flexibility. Its trajectories show pronounced local motions as well as global conformational changes. In contrast, \textit{5AWL} has a more stable tertiary structure and exhibits smoother trajectory variations.
\subsubsection{Real-world Datasets}
The category of real-world datasets comprises the \textit{Motion} \citep{cmu_mocap} and \textit{Covid} datasets. The \textit{Motion} dataset consists of three-dimensional position and angle sequences of human joints. In this graph, nodes correspond to key joints of the human body, and edges represent skeletal connectivity or structural constraints. Collected using motion capture systems, the data include natural movements such as walking, running, and jumping. The data used in our experiments are sourced from the CMU Motion Capture Dataset (CMU MoCap\footnote{http://mocap.cs.cmu.edu}), a widely used large-scale motion sequence repository. 

We constructed a \textit{Covid} dataset based on publicly available COVID-19 epidemic reports in China. The \textit{Covid} dataset was compiled from daily epidemic reports publicly released by provincial health commissions and the National Health Commission of China (NHC\footnote{https://www.nhc.gov.cn/}), covering all provincial-level COVID-19 cases in mainland China from February 2020 to December 2022. In the constructed graph, nodes correspond to the 31 provincial-level administrative regions (excluding Hong Kong, Macau, and Taiwan), and node features consist of the cumulative confirmed cases and daily new cases over time. Given the widespread inter-provincial connections via road, rail, and air transportation, the graph is modeled as fully connected, allowing for potential transmission pathways between any two provinces. This dataset spans multiple viral phases, including the original strain, Delta, and Omicron \citep{niu2022comparison}, and encompasses various control policies, such as lockdowns and dynamic COVID-zero strategy \citep{liu2022dynamic}, as well as differing healthcare resource pressures \citep{wang2023quantitative} and population mobility patterns \citep{li2021changes}. The evolution of these factors across different time stages alters the underlying mechanism by which node states are generated, thereby inducing temporal drift in the distribution of node features. Therefore, this dataset not only provides a challenging testbed for evaluating the generalization ability of TI-ODE in dynamic graph settings, but also offers an experimental scenario for analyzing the time-varying interaction mechanisms learned by TI-ODE. To preserve training capacity and temporal feature learning on this small-scale dataset, we opted for a simple train-test split instead of a three-way partition. Since hyperparameters were adopted from prior experiments without further tuning, this approach maximizes data utility and effectively assesses the model's generalization across out-of-distribution temporal windows.

\begin{table}[h]
	\centering
	\caption{Hyperparameter setup for every dataset.}
	\label{Hyperparameter setup}
		\begin{tabular}{ccccccc}
			\hline
			\toprule[1pt]
			Hyper             & \multicolumn{6}{c}{Values}                      \\ \cline{2-7} 
			parameter         & \textit{Spring} & \textit{Charged} & \textit{2N5C} & \textit{5AWL} & \textit{Motion} & \textit{Covid} \\ \hline
			Batch             & 64     & 64      & 2    & 4    & 8      & 16    \\\cline{2-7} 
			Learn rate        & \multicolumn{6}{c}{0.0001}                      \\
			Dropout           & \multicolumn{6}{c}{0.2}                         \\
			Optimizer         & \multicolumn{6}{c}{Adam}                        \\
			ODESolve          & \multicolumn{6}{c}{RK4}                         \\
			Bias layers       & \multicolumn{6}{c}{1}                           \\
			Bias hidden dim   & \multicolumn{6}{c}{128}                         \\
			Bias nums         & \multicolumn{6}{c}{5}                           \\
			Random layers     & \multicolumn{6}{c}{1}                           \\
			Random hidden dim & \multicolumn{6}{c}{64}                          \\
			Random nums       & \multicolumn{6}{c}{4}                           \\ \hline
			\toprule[1pt]
		\end{tabular}
	\vspace{-0.5cm}
\end{table}
\subsection{Baselines}
To evaluate the performance of TI-ODE, we consider 11 baseline models: the time-series model Latent-ODE \citep{rubanova2019latent} and the edge-feature-explicit Edge-GNN \citep{gong2019exploiting}; classic dynamic GNNs (DCRNN \citep{li2018diffusion}, GraphWaveNet \citep{wu2019graph}, AGCRN \citep{bai2020adaptive},  TTS-AMP \citep{cini2023taming}, and T\&S-AMP \citep{cini2023taming}); and graph neural ODE frameworks (LG-ODE \citep{huang2020learning}, CG-ODE \citep{huang2021coupled}, PG-ODE \citep{luopgode}, and CSG-ODE \citep{wangcsg}).

\begin{table*}[h]
	\centering
	\caption{The MSE and MAE ($ \times {10^{ - 2}}$) of compared methods on physical dynamics simulations.}
	\label{compared physical}
	\begin{adjustbox}{width=\textwidth}
		\begin{tabular}{ccccccccc}
			\hline
			\toprule[1pt]
			\textbf{Dataset}          & \multicolumn{4}{c}{\textit{\textbf{Spring}}} &\multicolumn{4}{c}{\textit{\textbf{Charged}}}  \\ \cmidrule(r){1-1} \cmidrule(r){2-5}   \cmidrule(r){6-9} 
			Pred Len         & \multicolumn{2}{c}{12}& \multicolumn{2}{c}{24} & \multicolumn{2}{c}{12} & \multicolumn{2}{c}{24}    \\ \cmidrule(r){1-1}  \cmidrule(r){2-3} \cmidrule(r){4-5}   \cmidrule(r){6-7} \cmidrule(r){8-9}  
			Metric            & MSE           & MAE           & MSE           & MAE           & MSE           & MAE           & MSE           & MAE \\ \hline
			Latent-ODE     &1.32 $\pm$ 0.10 &8.54 $\pm$ 0.27 &3.48 $\pm$ 0.07 &13.74 $\pm$ 0.16 &3.19 $\pm$ 0.22 &13.57 $\pm$ 0.48 &5.76 $\pm$ 0.31 &17.41 $\pm$ 1.28 \\
			Edge-GNN     &1.06 $\pm$ 0.08 &7.61 $\pm$ 0.34 &2.42 $\pm$ 0.15 &11.97 $\pm$ 0.56 &2.27 $\pm$ 0.09 &11.12 $\pm$ 0.23 &3.62 $\pm$ 0.10 &14.38 $\pm$ 0.73 \\ \hline
			
			DCRNN     &0.62 $\pm$ 0.04 &5.35 $\pm$ 0.20 &1.02 $\pm$ 0.11 &0.62 $\pm$ 0.30 &1.78 $\pm$ 0.10 &10.37 $\pm$ 0.56 &2.84 $\pm$ 0.11 &12.39 $\pm$ 0.32 \\ 
			GraphWaveNet     &0.50 $\pm$ 0.03 &5.05 $\pm$ 0.17 &0.63 $\pm$ 0.04 &5.64 $\pm$ 0.18 &1.55 $\pm$ 0.03 &9.91 $\pm$ 0.19 &2.63 $\pm$ 0.07 &12.26 $\pm$ 0.25 \\
			AGCRN    &0.48 $\pm$ 0.02 &4.65 $\pm$ 0.32 &0.74 $\pm$ 0.05 &5.74 $\pm$ 0.15 &1.58 $\pm$ 0.10 &10.13 $\pm$ 0.32 &2.67 $\pm$ 0.02 &11.93 $\pm$ 0.15 \\
			TTS-AMP    &0.41 $\pm$ 0.03 &4.54 $\pm$ 0.12 &0.63 $\pm$ 0.07 &5.67 $\pm$ 0.34 &1.64 $\pm$ 0.07 &9.98 $\pm$ 0.47 &2.68 $\pm$ 0.08 &11.76 $\pm$ 0.21 \\
			T\&S-AMP  &0.46 $\pm$ 0.03 &4.83 $\pm$ 0.15 &0.72 $\pm$ 0.04 &6.01 $\pm$ 0.21 &1.62 $\pm$ 0.04 &9.59 $\pm$ 0.30 &2.66 $\pm$ 0.07 &11.99 $\pm$ 0.17 \\ \hline
			LG-ODE 	  &0.58 $\pm$ 0.02 &5.84 $\pm$ 0.09 &1.02 $\pm$ 0.08 &7.80 $\pm$ 0.19 &1.65 $\pm$ 0.04 &10.12 $\pm$ 0.47 &2.71 $\pm$ 0.06 &11.77 $\pm$ 0.10 \\
			CG-ODE	  &0.45 $\pm$ 0.02 &5.39 $\pm$ 0.10 &0.70 $\pm$ 0.02 &6.55 $\pm$ 0.06 &1.68 $\pm$ 0.13 &10.18 $\pm$ 0.25 &2.64 $\pm$ 0.07 &11.97 $\pm$ 0.16 \\
			PG-ODE	  &0.32 $\pm$ 0.01 &4.40 $\pm$ 0.04 &0.51 $\pm$ 0.03 &5.64 $\pm$ 0.15 &1.53 $\pm$ 0.08 &9.58 $\pm$ 0.06 &1.91 $\pm$ 0.07 &11.45 $\pm$ 0.46 \\
			CSG-ODE	  &0.23 $\pm$ 0.02 &3.79 $\pm$ 0.16 &0.45 $\pm$ 0.03 &5.41 $\pm$ 0.12 &1.47 $\pm$ 0.01 &9.68 $\pm$ 0.23 &1.81 $\pm$ 0.06 &10.38 $\pm$ 0.28 \\
			\rowcolor{red!10} 	TI-ODE	  &\textbf{0.16 $\pm$ 0.01} &\textbf{3.16 $\pm$ 0.03} &\textbf{0.30 $\pm$ 0.01} &\textbf{4.36 $\pm$ 0.05} &\textbf{1.31 $\pm$ 0.03} &\textbf{8.87 $\pm$ 0.39} &\textbf{1.65 $\pm$ 0.04} &\textbf{10.06 $\pm$ 0.22 }
			\\ \hline \toprule[1pt]  
			
		\end{tabular}
	\end{adjustbox}
\end{table*}
\begin{table*}[h]
	\centering
	\caption{The MSE and MAE ($ \times {10^{ - 2}}$) of compared methods on molecular dynamics simulations.}
	\label{compared molecular}
	\begin{adjustbox}{width=\textwidth}
		\begin{tabular}{ccccccccc}
			\hline
			\toprule[1pt]
			\textbf{Dataset}          & \multicolumn{4}{c}{\textit{\textbf{2N5C}}} &\multicolumn{4}{c}{\textit{\textbf{5AWL}}}  \\ \cmidrule(r){1-1} \cmidrule(r){2-5}   \cmidrule(r){6-9} 
			Pred Len         & \multicolumn{2}{c}{12}& \multicolumn{2}{c}{24} & \multicolumn{2}{c}{12} & \multicolumn{2}{c}{24}    \\ \cmidrule(r){1-1}  \cmidrule(r){2-3} \cmidrule(r){4-5}   \cmidrule(r){6-7} \cmidrule(r){8-9}  
			Metric            & MSE           & MAE           & MSE           & MAE           & MSE           & MAE           & MSE           & MAE \\ \hline
			Latent-ODE     &1.68 $\pm$ 0.01 &10.19 $\pm$ 0.23 &2.31 $\pm$ 0.16 &12.44 $\pm$ 0.47 &1.26 $\pm$ 0.25 &8.79 $\pm$ 0.59 &1.71 $\pm$ 0.06 &10.25 $\pm$ 0.43 \\
			Edge-GNN     &1.15 $\pm$ 0.08 &8.72 $\pm$ 0.38 &1.66 $\pm$ 0.11 &10.33 $\pm$ 0.65 &0.79 $\pm$ 0.04 &7.10 $\pm$ 0.29 &1.16 $\pm$ 0.14 &8.29 $\pm$ 0.31 \\ \hline
			
			DCRNN     &0.81 $\pm$ 0.02 &6.87 $\pm$ 0.06 &0.97 $\pm$ 0.02 &7.58 $\pm$ 0.09 &0.72 $\pm$ 0.03 &6.49 $\pm$ 0.04 &0.94 $\pm$ 0.01 &7.30 $\pm$ 0.04  \\ 
			GraphWaveNet     &0.53 $\pm$ 0.04 &5.15 $\pm$ 0.17 &0.83 $\pm$ 0.08 &6.17 $\pm$ 0.31 &0.38 $\pm$ 0.01 &4.49 $\pm$ 0.11 &0.42 $\pm$ 0.02 &4.75 $\pm$ 0.13 \\
			AGCRN    &0.60 $\pm$ 0.03 &5.61 $\pm$ 0.14 &0.88 $\pm$ 0.08 &6.75 $\pm$ 0.38 &0.36 $\pm$ 0.02 &4.30 $\pm$ 0.05 &0.46 $\pm$ 0.02 &0.50 $\pm$ 0.09 \\
			TTS-AMP    &0.53 $\pm$ 0.06 &5.26 $\pm$ 0.32 &0.82 $\pm$ 0.04 &6.65 $\pm$ 0.19  &0.30 $\pm$ 0.03 &4.09 $\pm$ 0.16 &0.38 $\pm$ 0.01 &4.53 $\pm$ 0.06\\
			T\&S-AMP  &0.58 $\pm$ 0.06 &5.41 $\pm$ 0.30 &0.87 $\pm$ 0.01 &6.67 $\pm$ 0.14 &0.34 $\pm$ 0.01 &4.17 $\pm$ 0.05 &0.44 $\pm$ 0.04 &4.82 $\pm$ 0.24 \\ \hline
			LG-ODE 	  &0.57 $\pm$ 0.02 &6.03 $\pm$ 0.13 &0.79 $\pm$ 0.10 &6.80 $\pm$ 0.51 &0.36 $\pm$ 0.01 &4.73 $\pm$ 0.10 &0.47 $\pm$ 0.02 &5.46 $\pm$ 0.09 \\
			CG-ODE	  &0.56 $\pm$ 0.07 &5.94 $\pm$ 0.37 &0.84 $\pm$ 0.01 &6.96 $\pm$ 0.05 &0.31 $\pm$ 0.02 &4.42 $\pm$ 0.13 &0.33 $\pm$ 0.01 &4.63 $\pm$ 0.10 \\
			PG-ODE	  &0.42 $\pm$ 0.01 &4.99 $\pm$ 0.04 & 0.65 $\pm$ 0.02&6.34 $\pm$ 0.14 &0.27 $\pm$ 0.03 &4.08 $\pm$ 0.10 &0.30 $\pm$ 0.01 &4.59 $\pm$ 0.07 \\
			CSG-ODE	  &0.37 $\pm$ 0.02 &4.66 $\pm$ 0.15 &0.63 $\pm$ 0.01 &6.26 $\pm$ 0.09 &0.19 $\pm$ 0.02 &3.42 $\pm$ 0.19 &0.25 $\pm$ 0.02 &3.97 $\pm$ 0.14 \\
			\rowcolor{red!10} 	TI-ODE	  &\textbf{0.31 $\pm$ 0.01} &\textbf{4.29 $\pm$ 0.03} &\textbf{0.49 $\pm$ 0.02} &\textbf{5.56 $\pm$ 0.18} &\textbf{0.14 $\pm$ 0.02} &\textbf{3.07 $\pm$ 0.03} &\textbf{0.18 $\pm$ 0.01 }&\textbf{3.43 $\pm$ 0.08 }
			\\ \hline \toprule[1pt]  
			
		\end{tabular}
	\end{adjustbox}
\end{table*}
\begin{table*}[h]
	\centering
	\caption{The MSE and MAE ($ \times {10^{ - 2}}$) of compared methods on real-world datasets.}
	\label{compared real-world}
	\begin{adjustbox}{width=\textwidth}
		\begin{tabular}{ccccccccc}
			\hline
			\toprule[1pt]
			\textbf{Dataset}          & \multicolumn{4}{c}{\textit{\textbf{Motion}}} &\multicolumn{4}{c}{\textit{\textbf{Covid}}}  \\ \cmidrule(r){1-1} \cmidrule(r){2-5}   \cmidrule(r){6-9} 
			Pred Len         & \multicolumn{2}{c}{12}& \multicolumn{2}{c}{24} & \multicolumn{2}{c}{12} & \multicolumn{2}{c}{24}    \\ \cmidrule(r){1-1}  \cmidrule(r){2-3} \cmidrule(r){4-5}   \cmidrule(r){6-7} \cmidrule(r){8-9}  
			Metric            & MSE           & MAE           & MSE           & MAE           & MSE           & MAE           & MSE           & MAE \\ \hline
			Latent-ODE     &4.94 $\pm$ 0.26 &15.46 $\pm$ 0.63 &6.69 $\pm$ 0.20 &17.71 $\pm$ 0.59 &6.49 $\pm$ 0.10 &17.62 $\pm$ 0.87 &7.13 $\pm$ 0.16 &18.19 $\pm$ 0.23 \\
			Edge-GNN     &4.11 $\pm$ 0.52 &14.38 $\pm$ 1.33 &5.60 $\pm$ 0.47 &16.47 $\pm$ 1.03 &4.34 $\pm$ 0.05 &15.58 $\pm$ 0.21 &5.27 $\pm$ 0.03 &16.64 $\pm$ 0.94 \\ \hline
			
			DCRNN     &0.87 $\pm$ 0.05 &5.13 $\pm$ 0.04 &1.06 $\pm$ 0.06 &5.29 $\pm$ 0.07 &6.49 $\pm$ 0.10 &14.61 $\pm$ 0.32 &7.13 $\pm$ 0.16 &15.19 $\pm$ 0.23 \\ 
			GraphWaveNet     &0.82 $\pm$ 0.05 &4.38 $\pm$ 0.11 &0.94 $\pm$ 0.06 &4.68 $\pm$ 0.16  &3.48 $\pm$ 0.26 &13.48 $\pm$ 0.80 &3.98 $\pm$ 0.19 &13.79 $\pm$ 1.03 \\
			AGCRN    &0.75 $\pm$ 0.05 &5.38 $\pm$ 0.25 &0.92 $\pm$ 0.07 &6.00 $\pm$ 0.21 &2.35 $\pm$ 0.19 &9.37 $\pm$ 0.61 &2.81 $\pm$ 0.26 &10.49 $\pm$ 0.29 \\
			TTS-AMP   &0.67 $\pm$ 0.06 &4.07 $\pm$ 0.17 &0.82 $\pm$ 0.05 &5.02 $\pm$ 0.14 &2.33 $\pm$ 0.03 &9.32 $\pm$ 0.43 &2.39 $\pm$ 0.04 &9.80 $\pm$ 0.17 \\
			T\&S-AMP  &0.71 $\pm$ 0.02 &4.39 $\pm$ 0.18 &0.87 $\pm$ 0.01 &5.01 $\pm$ 0.43 &2.37 $\pm$0.06 &9.54 $\pm$0.32 &2.62 $\pm$0.11 &9.70 $\pm$0.29 \\ \hline
			LG-ODE 	  & 0.80 $\pm$ 0.03&4.64 $\pm$ 0.16 &0.84 $\pm$ 0.04 &4.64 $\pm$ 0.19 &3.49 $\pm$ 0.14 &13.69 $\pm$ 0.67 &4.13 $\pm$ 0.05 &13.88 $\pm$ 0.20 \\
			CG-ODE	  &0.95 $\pm$ 0.01 &4.97 $\pm$ 0.05 &0.99 $\pm$ 0.16 &5.07 $\pm$ 0.02 &3.06 $\pm$ 0.09 &12.23 $\pm$ 0.94 &3.56 $\pm$ 0.10 &13.25 $\pm$ 1.19 \\
			PG-ODE	  &0.54 $\pm$ 0.04 &4.41 $\pm$ 0.17 &0.64 $\pm$ 0.03 &4.52 $\pm$ 0.33 &2.32 $\pm$ 0.04 &8.75 $\pm$ 0.18 &2.35 $\pm$ 0.05 &9.58 $\pm$ 0.31 \\
			CSG-ODE	  &0.47 $\pm$ 0.02 &4.00 $\pm$ 0.13 &0.50 $\pm$ 0.08 &4.34 $\pm$ 0.24 &2.18 $\pm$ 0.01 &9.42 $\pm$ 1.54 &2.31 $\pm$ 0.07 &10.29 $\pm$ 1.03 \\
			\rowcolor{red!10} 	TI-ODE	  & \textbf{0.16 $\pm$ 0.01} &\textbf{2.09 $\pm$ 0.05} &\textbf{0.20 $\pm$ 0.01} &\textbf{2.42 $\pm$ 0.04} &\textbf{2.06 $\pm$ 0.02} &\textbf{8.19 $\pm$ 0.09 }&\textbf{2.20 $\pm$ 0.02} &\textbf{9.36 $\pm$ 0.05 }
			\\ \hline \toprule[1pt]  
			
		\end{tabular}
	\end{adjustbox}
\end{table*}
\begin{table*}[ht]
	\centering
	\caption{Ablation study on all datasets ($ \times {10^{ - 2}}$).}
	\label{Ablation}
	\begin{adjustbox}{width=\textwidth}
		\begin{tabular}{ccccccccccccccccccccccccc}
			\hline
			\toprule[1pt]
			Dataset         & \multicolumn{4}{c}{\textit{Spring}}                           & \multicolumn{4}{c}{\textit{Charged}}                           & \multicolumn{4}{c}{\textit{2N5C}}                             & \multicolumn{4}{c}{\textit{5AWL}}                             & \multicolumn{4}{c}{\textit{Motion}}   &\multicolumn{4}{c}{\textit{Covid}}                          \\ \cmidrule(r){1-1} \cmidrule(r){2-5}   \cmidrule(r){6-9} \cmidrule(r){10-13} \cmidrule(r){14-17}  \cmidrule(r){18-21} \cmidrule(r){22-25}
			Pred Len        & \multicolumn{2}{c}{12}        & \multicolumn{2}{c}{24}        & \multicolumn{2}{c}{12}        & \multicolumn{2}{c}{24}         & \multicolumn{2}{c}{12}        & \multicolumn{2}{c}{24}        & \multicolumn{2}{c}{12}        & \multicolumn{2}{c}{24}        & \multicolumn{2}{c}{12}        & \multicolumn{2}{c}{24}       & \multicolumn{2}{c}{12}        & \multicolumn{2}{c}{24} \\ \cmidrule(r){1-1} \cmidrule(r){2-3} \cmidrule(r){4-5}   \cmidrule(r){6-7} \cmidrule(r){8-9}  \cmidrule(r){10-11}\cmidrule(r){12-13}  \cmidrule(r){14-15}\cmidrule(r){16-17}   \cmidrule(r){18-19} \cmidrule(r){20-21}  \cmidrule(r){22-23} \cmidrule(r){24-25}
			Metric          & MSE           & MAE           & MSE           & MAE           & MSE           & MAE           & MSE           & MAE            & MSE           & MAE           & MSE           & MAE           & MSE           & MAE           & MSE           & MAE           & MSE           & MAE           & MSE           & MAE     & MSE           & MAE           & MSE           & MAE       \\ \hline
			TI-ODE w/o W    & 0.29          & 4.23          & 0.40          & 4.99          & 1.60          & 9.69          & 1.81          & 10.36          & 0.30          & 4.37          & 0.52          & 5.66          & 0.16          & 3.20          & 0.21          & 3.61          & 0.25          & 2.69          & 0.47          & 3.80        &2.33 &9.59 &2.37 &9.74    \\
			TI-ODE w/o R    & 0.20          & 3.44          & 0.42          & 5.01          & 1.78          & 10.08         & 1.83          & 10.52          & 0.31          & 4.47          & 0.60          & 5.86          & 0.15          & 3.09          & 0.21          & 3.63          & 0.16          & 2.21          & 0.22          & 2.57     &2.18 &9.38 &2.19 & 9.35     \\
			TI-ODE w/o O    & 0.25          & 3.84          & 0.45          & 5.14          & 1.46          & 9.95          & 1.80          & 10.47          & 0.31          & 4.34          & 0.50          & 5.57          & 0.15          & 3.10          & 0.19          & 3.48          & 0.18          & 2.40          & 0.23          & 2.73        &2.36 &9.86 &2.46 &10.32    \\
			\rowcolor{red!10}TI-ODE & \textbf{0.16} & \textbf{3.01} & \textbf{0.31} & \textbf{4.28} & \textbf{1.31} & \textbf{9.39} & \textbf{1.65} & \textbf{10.01} & \textbf{0.29} & \textbf{4.25} & \textbf{0.48} & \textbf{5.35} & \textbf{0.14} & \textbf{3.05} & \textbf{0.18} & \textbf{3.37} & \textbf{0.16} & \textbf{2.06} & \textbf{0.20} & \textbf{2.39} &\textbf{2.03}&\textbf{8.21}&\textbf{2.17}&\textbf{9.31}\\ \hline \toprule[1pt]
			
		\end{tabular}
	\end{adjustbox}
\end{table*}
\begin{figure*}[htbp]
	\begin{center}
		\includegraphics[width=\textwidth]{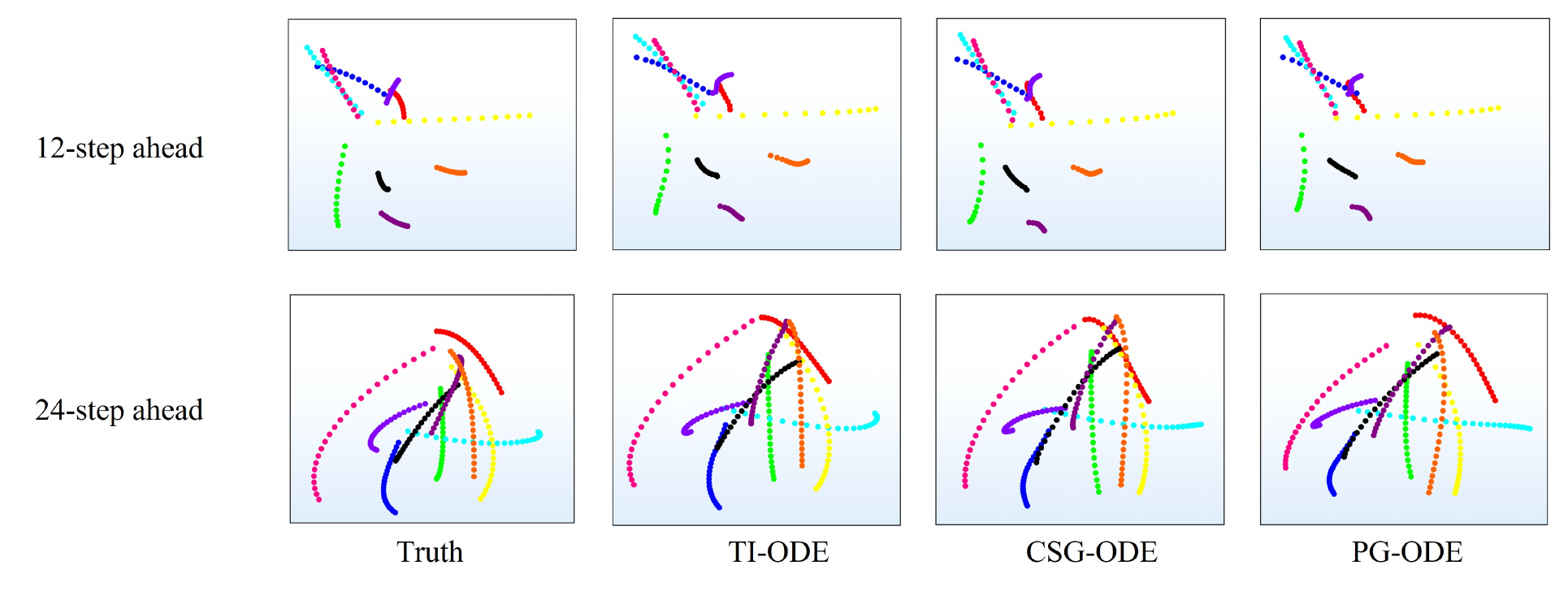}
		\caption{Visualization of the \textit{Spring} dataset under two different prediction lengths.}
		\label{visualizations main}
	\end{center}
\end{figure*}
\begin{itemize}
	\item Latent-ODE \citep{rubanova2019latent}: It introduces neural ODE in the latent space, using a continuous-time dynamical system to model the evolution of time series. Its key idea is to learn an implicit continuous dynamics field, which allows it to handle irregularly sampled inputs while naturally supporting interpolation and extrapolation at arbitrary time points.
	\item Edge-GNN \citep{gong2019exploiting}: It emphasizes the explicit use of edge features during message passing, updating node and edge representations simultaneously to enhance the model’s capacity for capturing fine-grained relational structures in graphs.
	\item DCRNN \citep{li2018diffusion}: A diffusion convolutional recurrent network that models spatiotemporal dependencies via a diffusion process on directed graphs.
	\item GraphWaveNet \citep{wu2019graph}: A spatiotemporal model that integrates dilated causal convolutions with adaptive graph convolutions to capture long-range temporal dependencies.
	\item AGCRN \citep{bai2020adaptive}: An adaptive graph convolutional recurrent network that learns hidden graph structures through node embeddings.
	\item TTS-AMP \citep{cini2023taming}: A “Temporal-Then-Spatial (TTS)” architecture that first employs a GRU to extract temporal features, and then applies an Anisotropic Message Passing (AMP) mechanism that simultaneously considers source nodes, target nodes, and edge weights, dynamically adjusting the influence of neighbors to capture complex spatial dependencies.
	\item T\&S-AMP \citep{cini2023taming}: A “Temporal-And-Spatial (T\&S)” architecture that integrates AMP operators within the GRU, enabling dynamic and collaborative modeling of spatiotemporal interactions at each time step.
	\item LG-ODE \citep{huang2020learning}: It extends neural ODE to graph settings by modeling local node dynamics in continuous time, enabling the learning of structured, time-varying patterns under irregular sampling; it is among the early works that apply the ODE paradigm to graph-structured dynamical systems. 
	\item CG-ODE \citep{huang2021coupled}: It further incorporates coupled dynamics to jointly model interactions between nodes, thereby better capturing complex dependencies in multi-body systems.
	\item PG-ODE \citep{luopgode}: It represents the latent dynamic patterns of the system using structured prototypes, decomposing system behavior into multiple prototypes and evolving them continuously in the prototype space, which allows for more flexible dynamics modeling.
	\item CSG-ODE \citep{wangcsg}: It introduces control signals to coordinate the joint evolution of node states and graph structure, while integrating stabilization designs to enhance numerical stability and robustness in highly nonlinear and complex scenarios.
\end{itemize}
\begin{figure*}[htbp]
	\begin{center}
		\includegraphics[width=\textwidth]{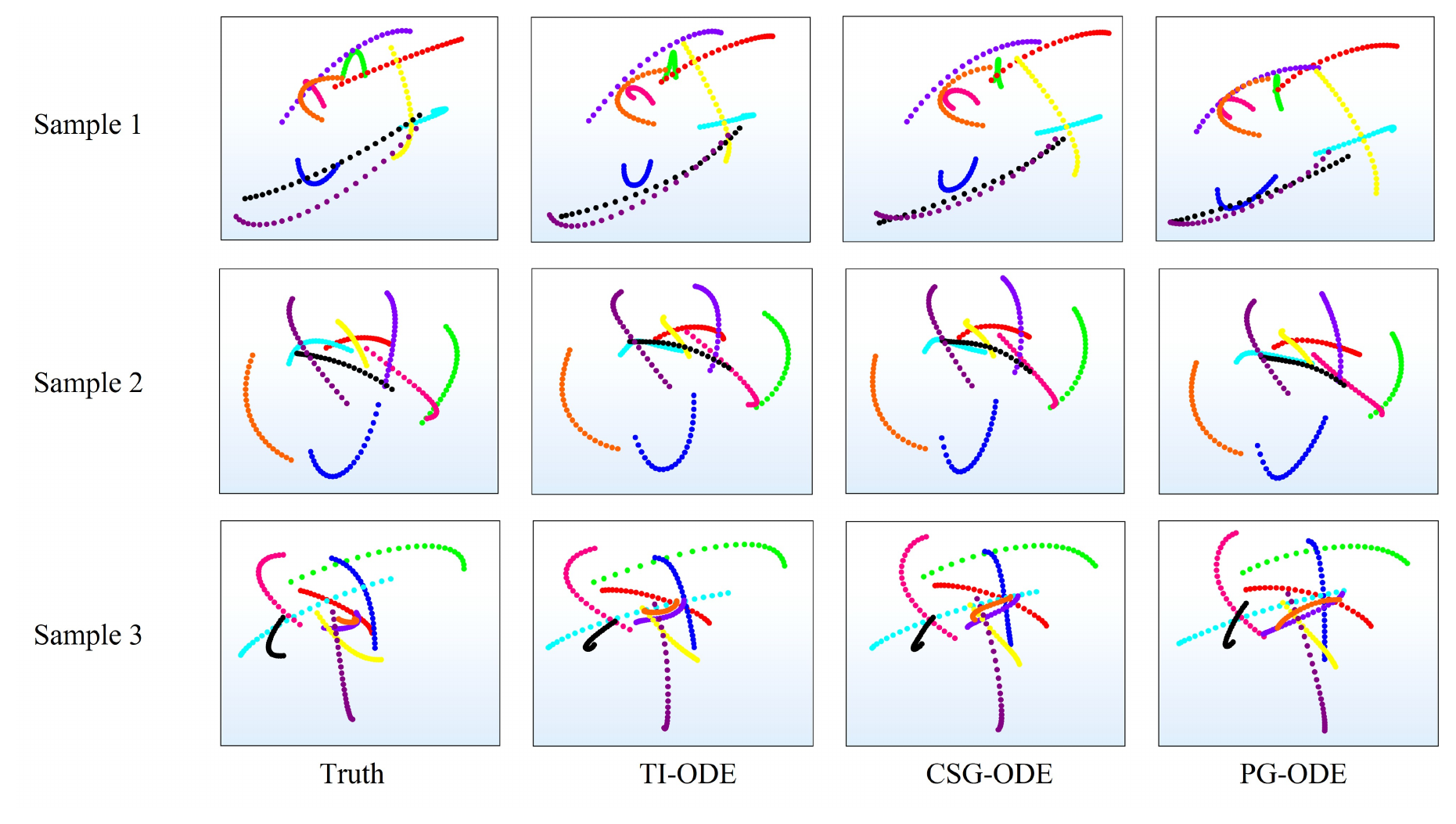}
		\caption{Visualization of the \textit{Spring} dataset (24-step ahead).}
		\label{visualizations more}
	\end{center}
\end{figure*}
\begin{figure*}[htbp]
	\centering
	\subfloat[]{\includegraphics[width=2in]{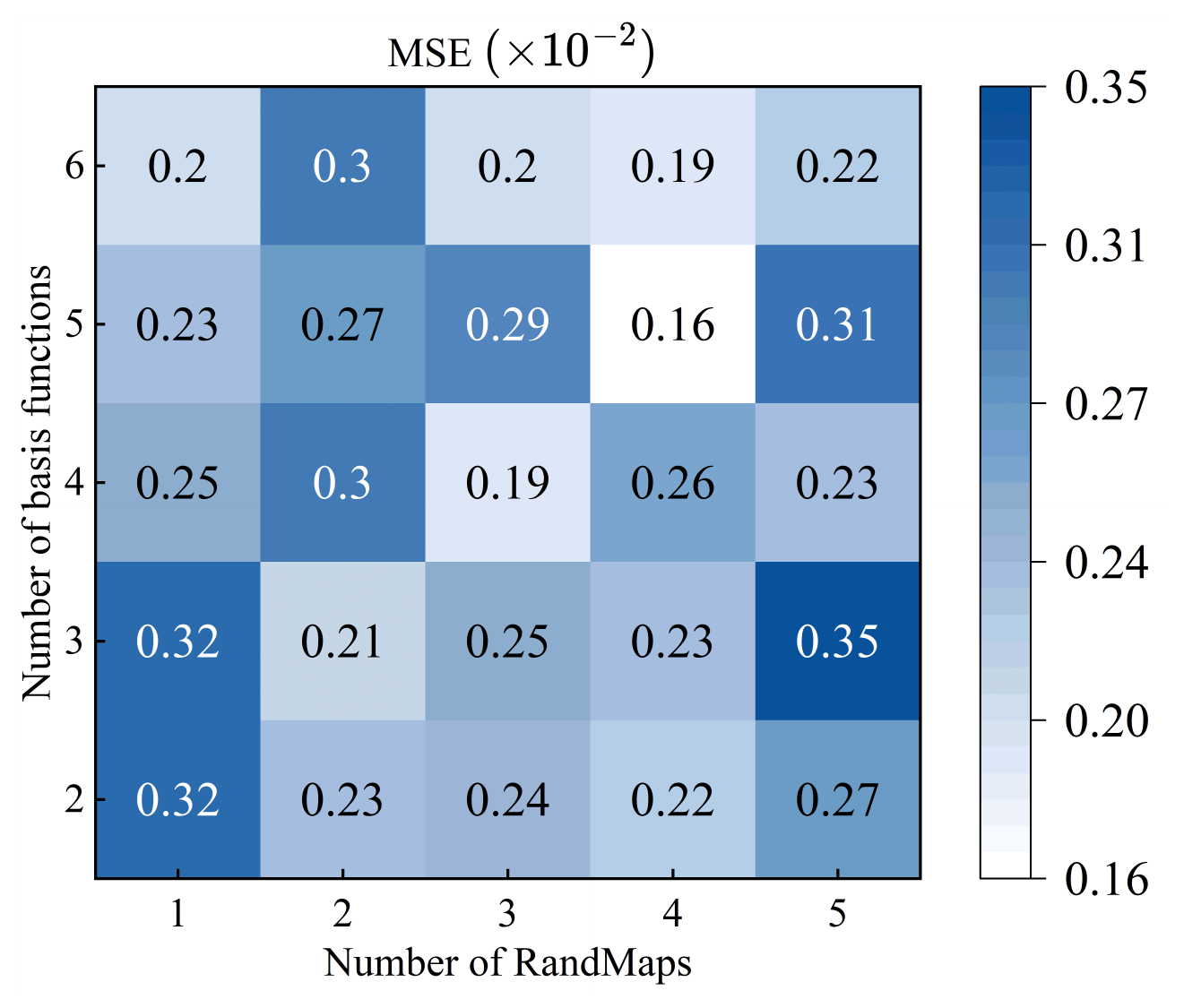}%
		\label{spring_MSE_12}}
	\hfil
	\subfloat[]{\includegraphics[width=2in]{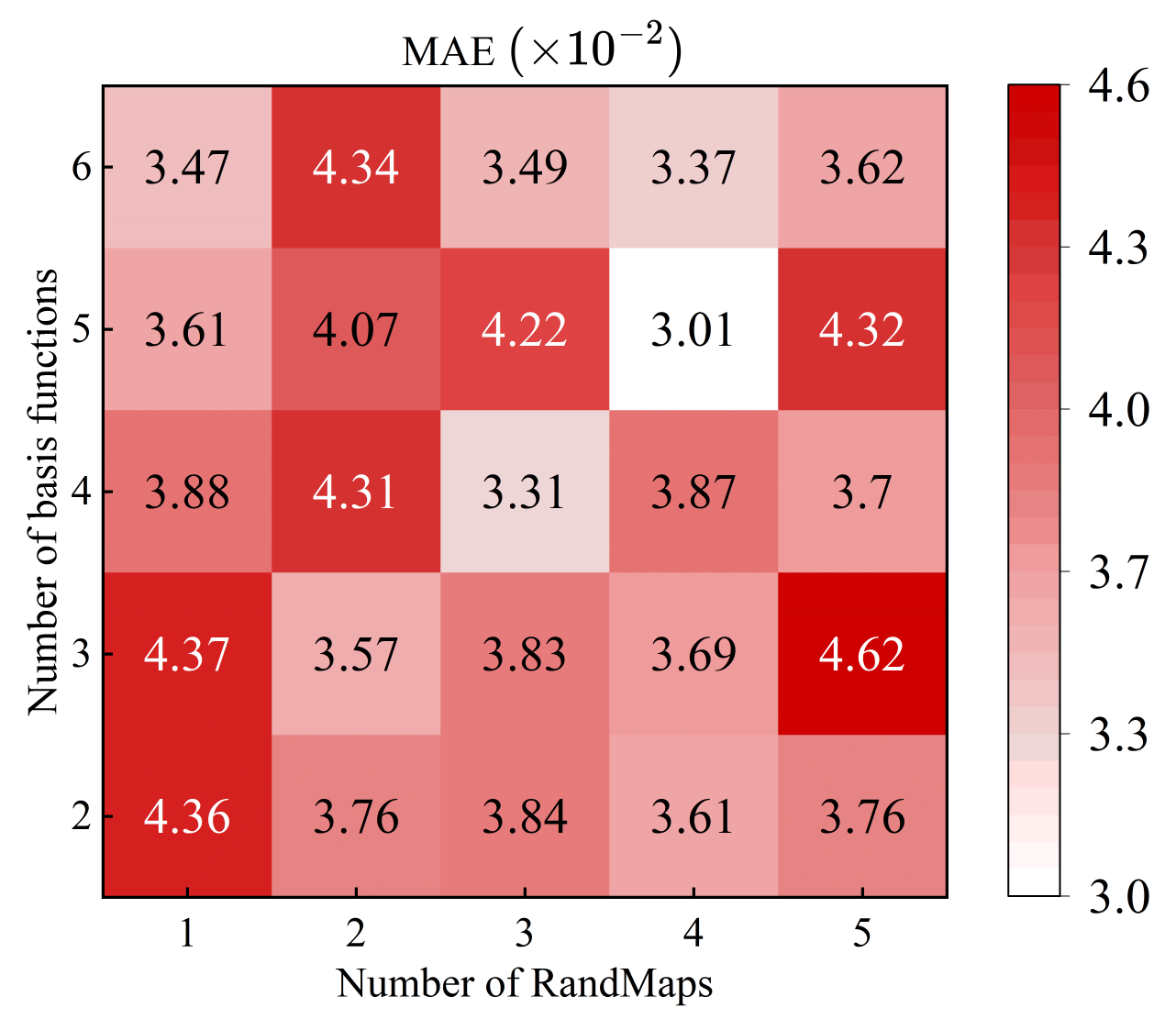}%
		\label{spring_MAE_12}}
	\hfil
	\subfloat[]{\includegraphics[width=2in]{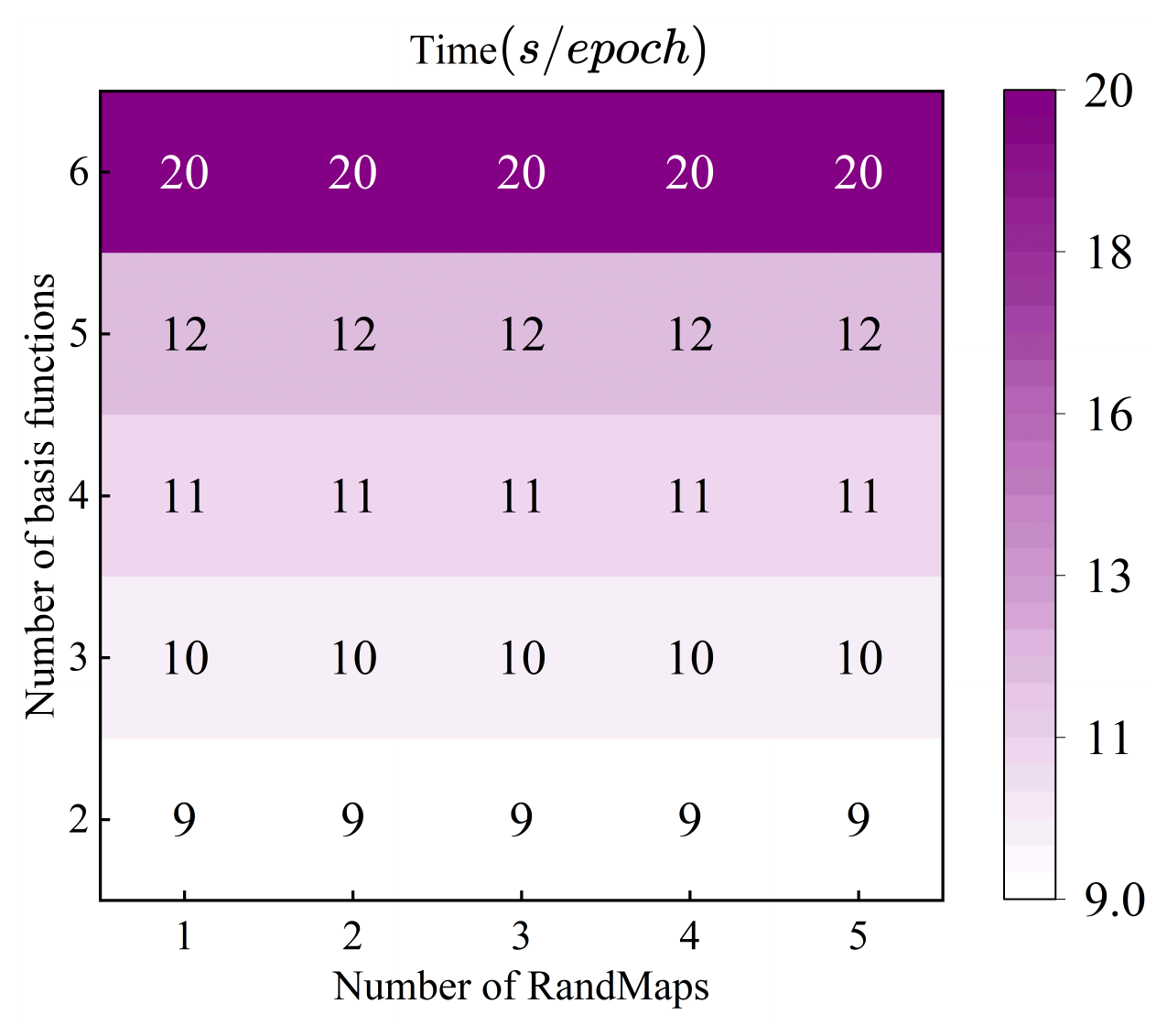}%
		\label{spring_time_12}}
	\hfil
	
	\subfloat[]{\includegraphics[width=2in]{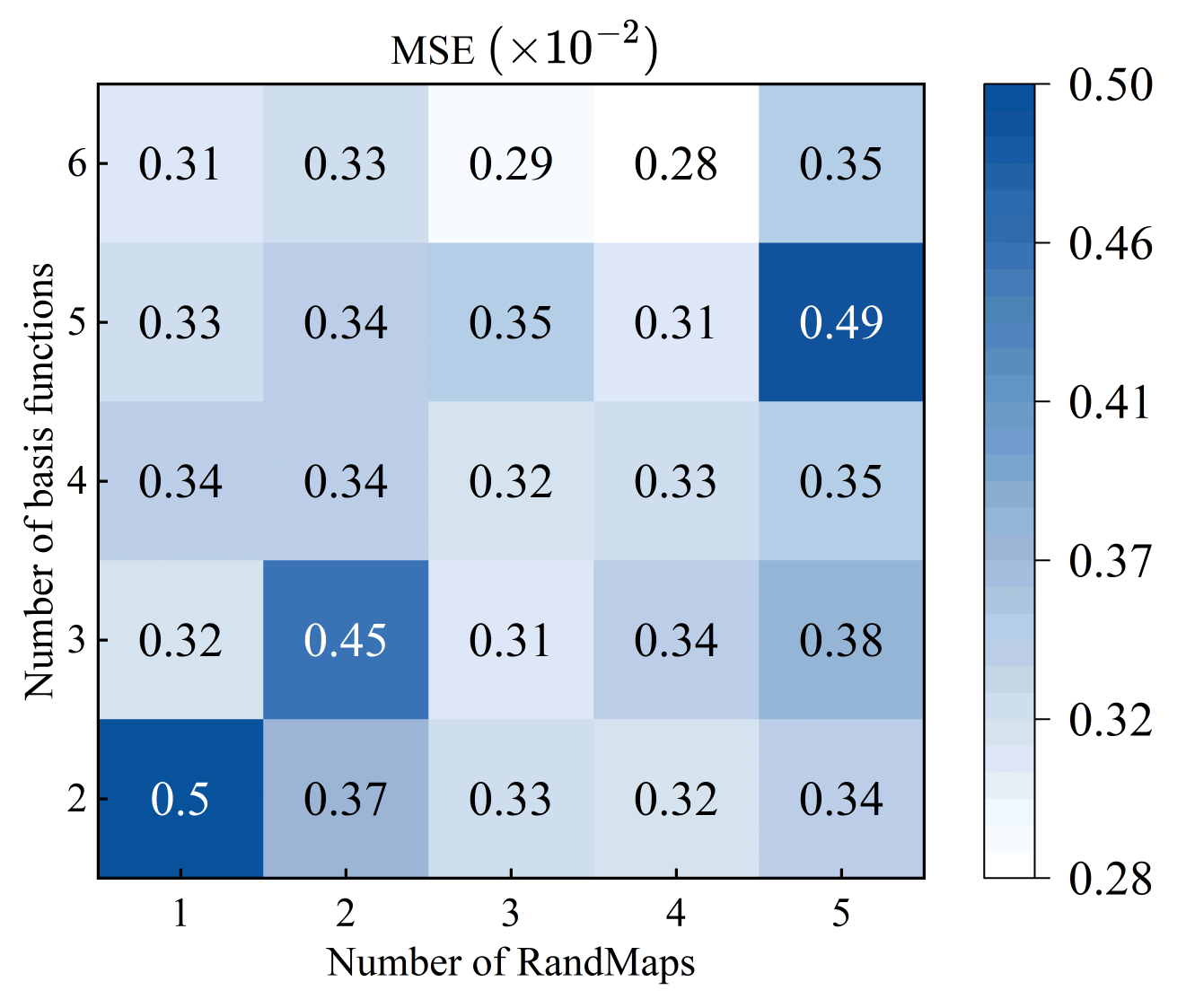}%
		\label{spring_MSE_24}}
	\hfil
	\subfloat[]{\includegraphics[width=2in]{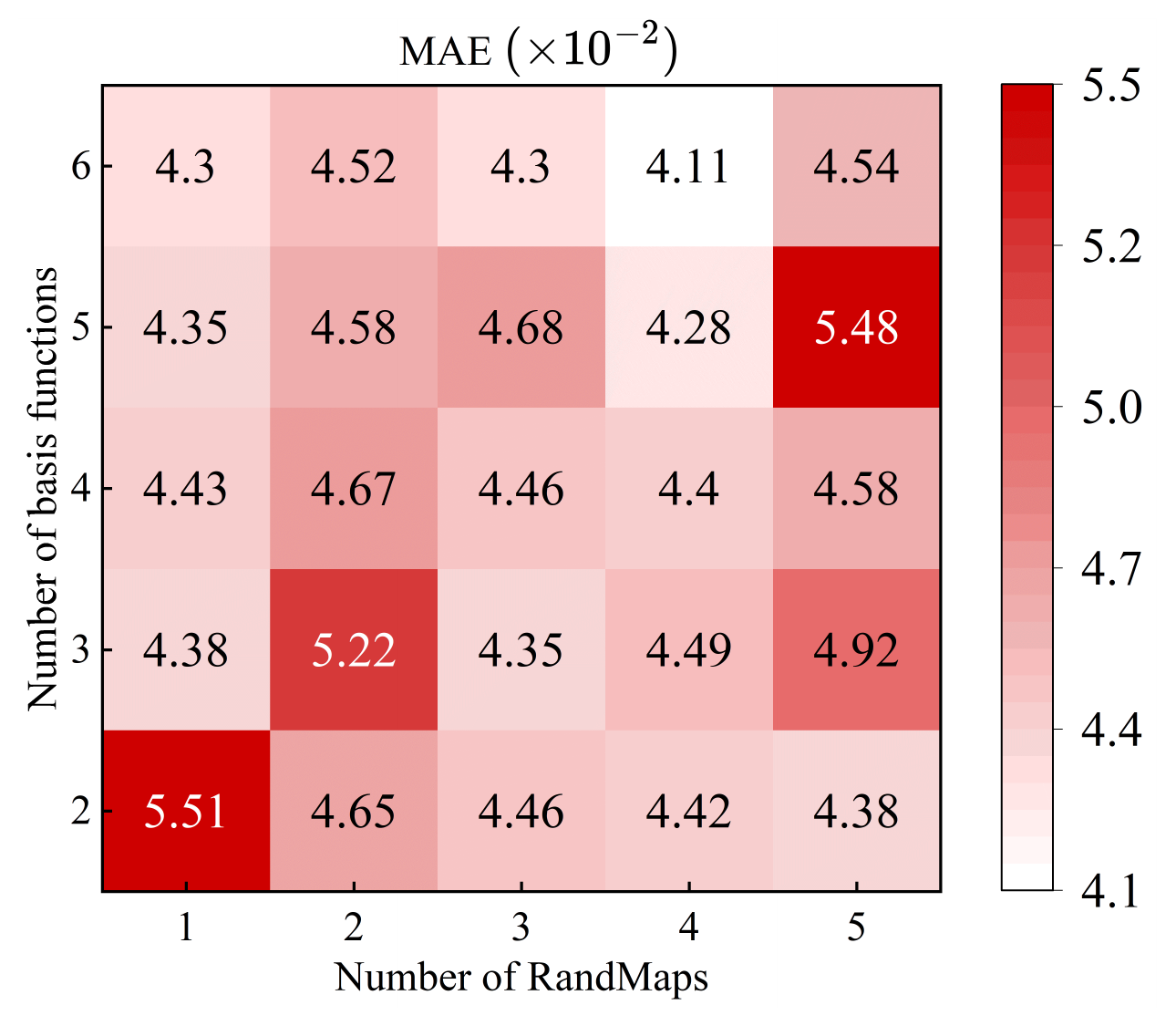}%
		\label{spring_MAE_24}}
	\hfil
	\subfloat[]{\includegraphics[width=2in]{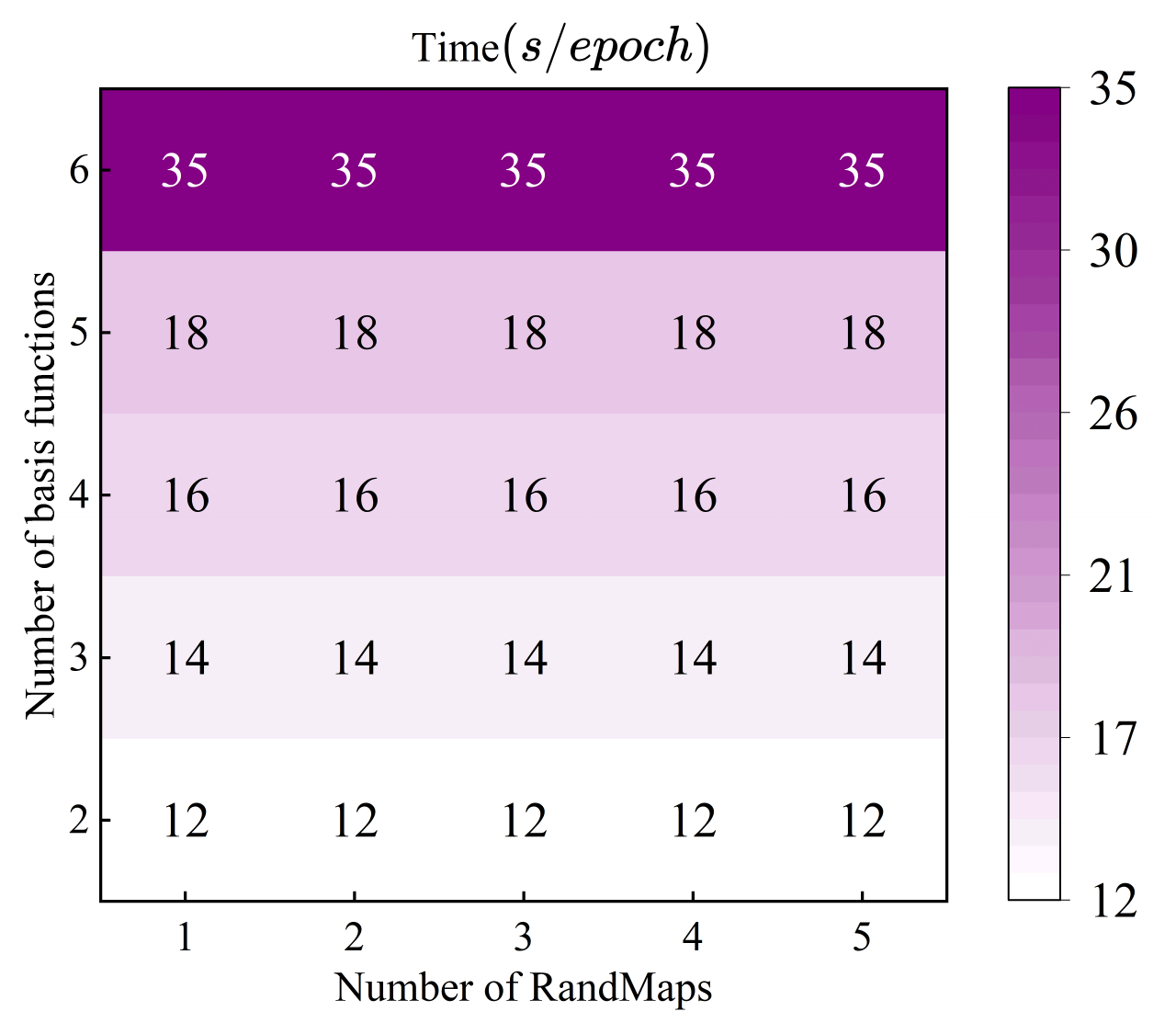}%
		\label{spring_time_24}}
	\hfil
	\caption{Sensitivity analysis on the \textit{Spring} dataset. (a) MSE results (12-step ahead). (b) MAE results (12-step ahead). (c)Training time per epoch under different configurations (12-step ahead). (d) MSE results (24-step ahead). (e) MAE results (24-step ahead). (f)Training time per epoch under different configurations (24-step ahead).}
	\label{spring}
\end{figure*}
\subsection{Implementation Details}
For the baseline methods mentioned above,we follow the hyperparameter settings used in the original
paper. We divide each sample into two parts: the first part serves as the historical trajectory, used to predict the sequence of the second part. The lengths of the two parts are defined as condition length and prediction length, where the condition length is fixed at 12 and the prediction length is set to either 12 or 24. We use Mean Squared Error (MSE) and Mean Absolute Error (MAE) as the two primary evaluation metrics. The hyperparameter configurations for the experiments are detailed in \textcolor{blue}{\cref{Hyperparameter setup}}.

\begin{figure}[htbp]
	\centering
	\subfloat[]{\includegraphics[width=1.5in]{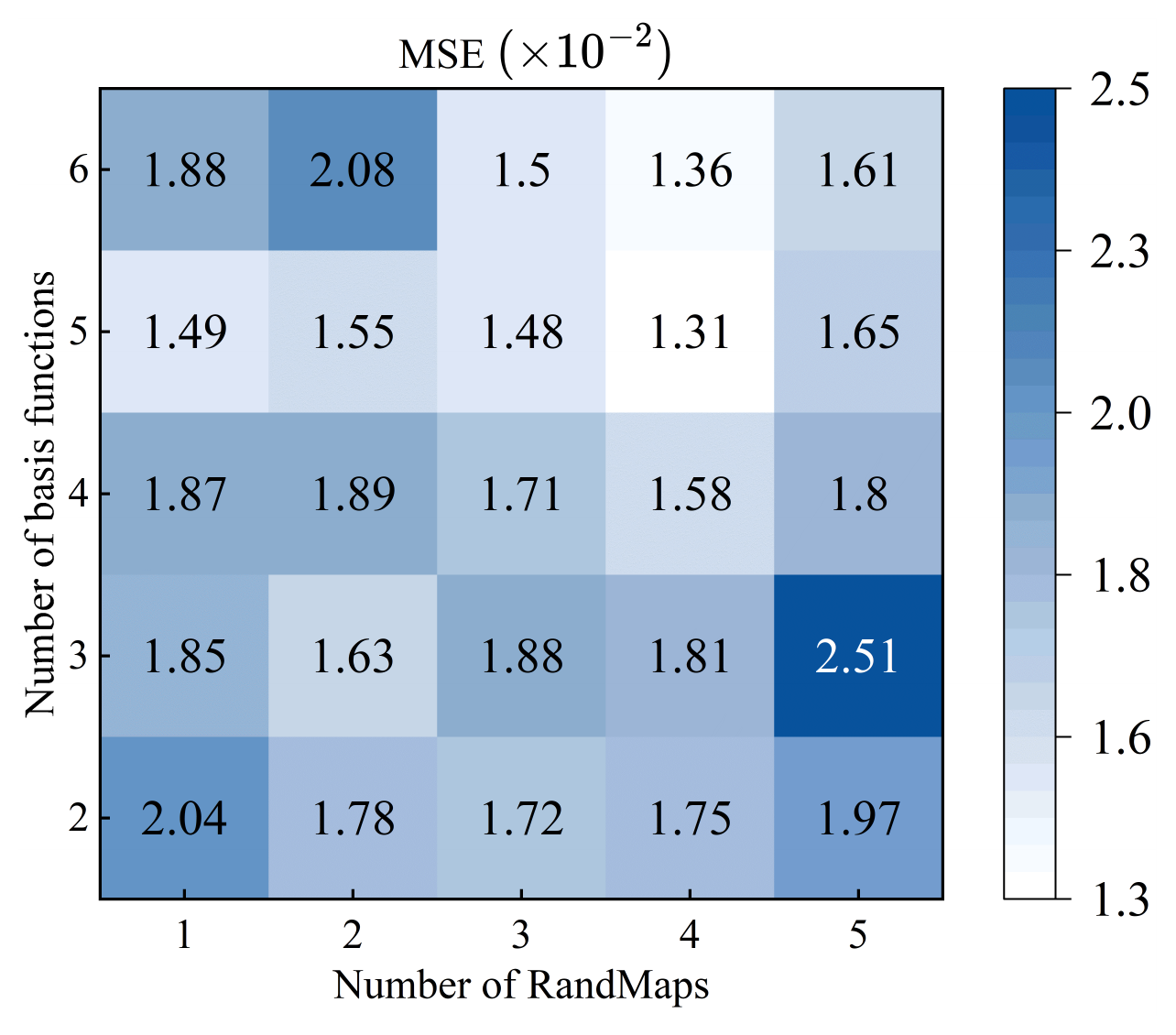}%
		\label{charged_MSE_12}}
	\hfil
	\subfloat[]{\includegraphics[width=1.5in]{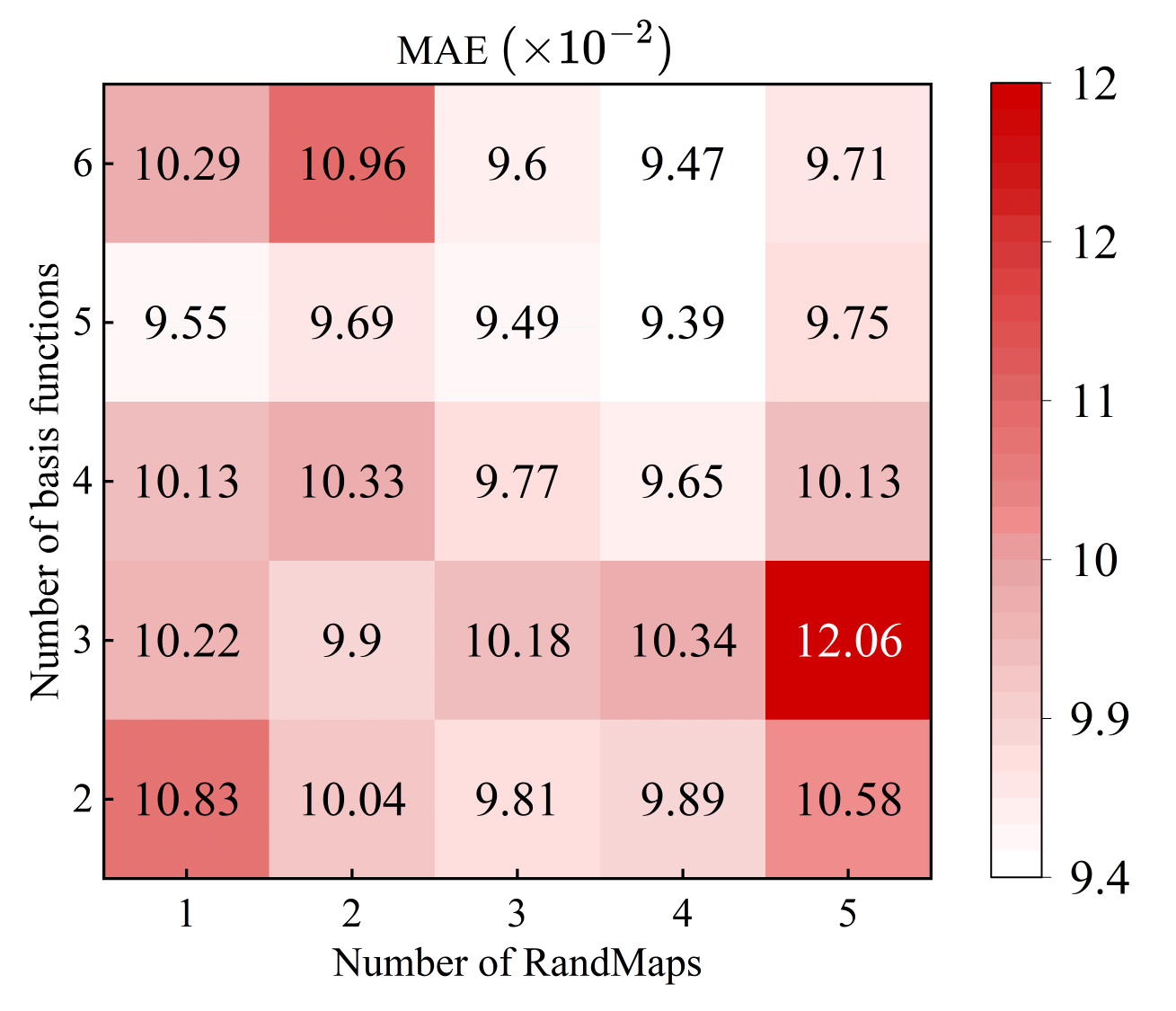}%
		\label{charged_MAE_12}}
	\hfil
	
	\subfloat[]{\includegraphics[width=1.5in]{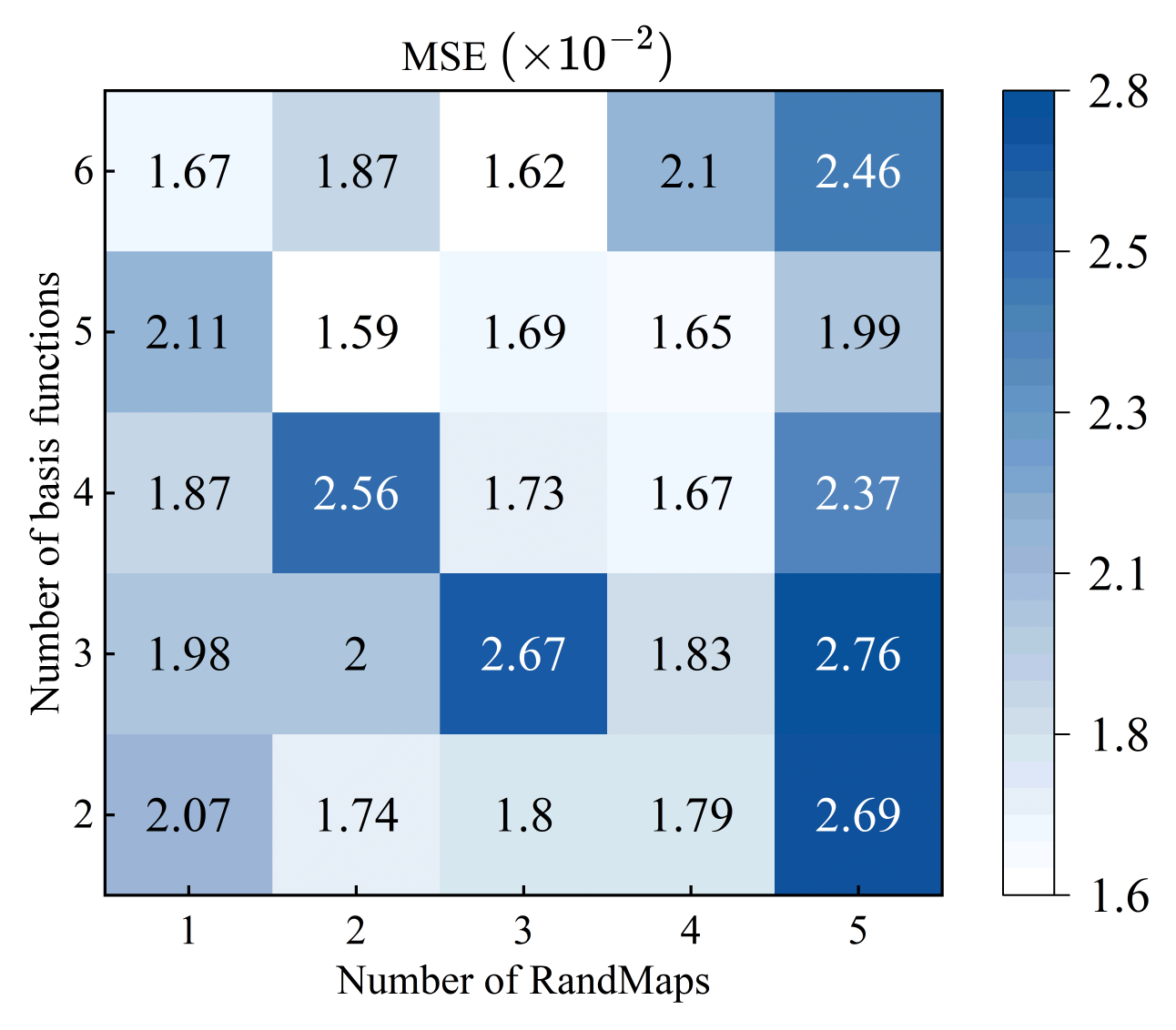}%
		\label{charged_MSE_24}}
	\hfil
	\subfloat[]{\includegraphics[width=1.5in]{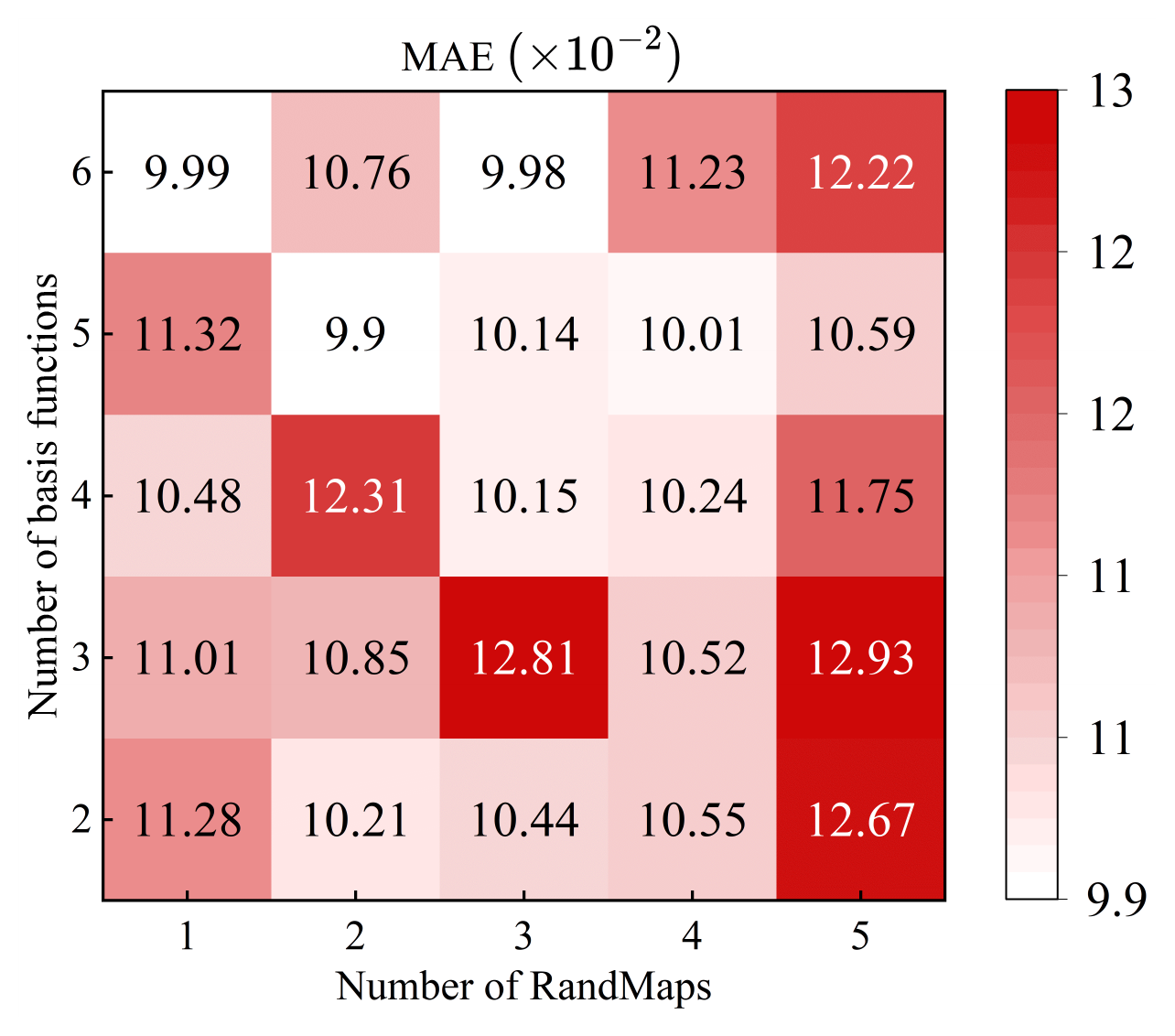}%
		\label{charged_MAE_24}}
	\hfil
	
	\caption{Sensitivity analysis on the \textit{Charged} dataset. (a) MSE results (12-step ahead). (b) MAE results (12-step ahead). (c) MSE results (24-step ahead). (d) MAE results (24-step ahead). }
	\label{charged}
\end{figure}
\subsection{Empirical Performance}
The experimental results are shown in \textcolor{blue}{\cref{compared physical}, \cref{compared molecular}} and \textcolor{blue}{\cref{compared real-world}}, where TI-ODE outperforms other baseline models in terms of MSE and MAE across six datasets. Overall, the performance of the baseline methods is limited due to their inability to effectively model inter-node dynamic interactions. Specifically, Latent-ODE does not explicitly utilize inter-node interactions; Edge-GNN, using static graph convolutions, fails to capture complex continuous-time dependencies. While established dynamic graph neural networks such as DCRNN, AGCRN, and GraphWaveNet leverage diffusion or adaptive convolutions to capture spatial correlations, they are limited in their ability to model the time-varying nature and diversity of interactions. Furthermore, despite the introduction of anisotropic message-passing mechanisms in TTS-AMP and T\&S-AMP that integrate source nodes, target nodes, and edge weights, these frameworks remain fundamentally rooted in attention-based paradigms. Since these approaches only parameterize the weighting functions within the message-passing process, they prioritize importance allocation rather than providing an explicit formulation of interaction structures and underlying dynamics. As a result, they fail to adequately characterize the diverse and time-varying nature of interactions in complex interactions. LG-ODE and CG-ODE can model latent dynamics but their interaction patterns are based on a unified message passing mechanism; CSG-ODE, while modeling node nonlinear evolution through subnetworks, still relies on a unified interaction pattern and lacks modeling of time-varying interactions; PG-ODE introduces independent interaction prototypes for each node, but its interaction pattern is shared among all neighbors and does not capture the time-varying nature of interactions. In contrast, our model explicitly captures diverse and time-evolving interaction patterns through basis functions, significantly improving the representation of dynamic relationships.

\textcolor{blue}{\cref{visualizations main}} visualizes the prediction results of the proposed TI-ODE model along with two baseline methods (CSG-ODE and PG-ODE) on the \textit{Spring} dataset. It can be observed that TI-ODE outperforms other baseline methods in trajectory reconstruction accuracy, with the predicted trajectories closely aligning with the true evolution paths. For example, when the prediction length is 24, our method accurately captures the movement direction of the cyan particle, while other models fail to identify this key dynamic feature. \textcolor{blue}{\cref{visualizations more}} provides more visualization results.
\subsection{Ablation Study}
To further analyze the components of the model, we conducted ablation experiments with three model variants: (1) TI-ODE w/o W, where the time-varying weights are removed; (2) TI-ODE w/o R, where RandNet is removed and the mean and variance are generated solely from the original sequence representations; (3) TI-ODE w/o O, where the number of basis functions is set to 1. The experimental results are shown in \textcolor{blue}{\cref{Ablation}}. Removing the time-varying weights (TI-ODE w/o W) or simplifying to a single interaction function (TI-ODE w/o O) both lead to a significant performance drop, indicating that time-varying weights and the multi-basis function structure are crucial for accurately capturing the true interaction patterns. Furthermore, removing RandNet (TI-ODE w/o R) also results in performance degradation, suggesting that RandNet provides richer initial representations in the latent space, enhancing the model's expressiveness and generalizability.
\begin{figure}[htbp]
	\centering
	\subfloat[\textit{Spring}]{\includegraphics[width=1.1in]{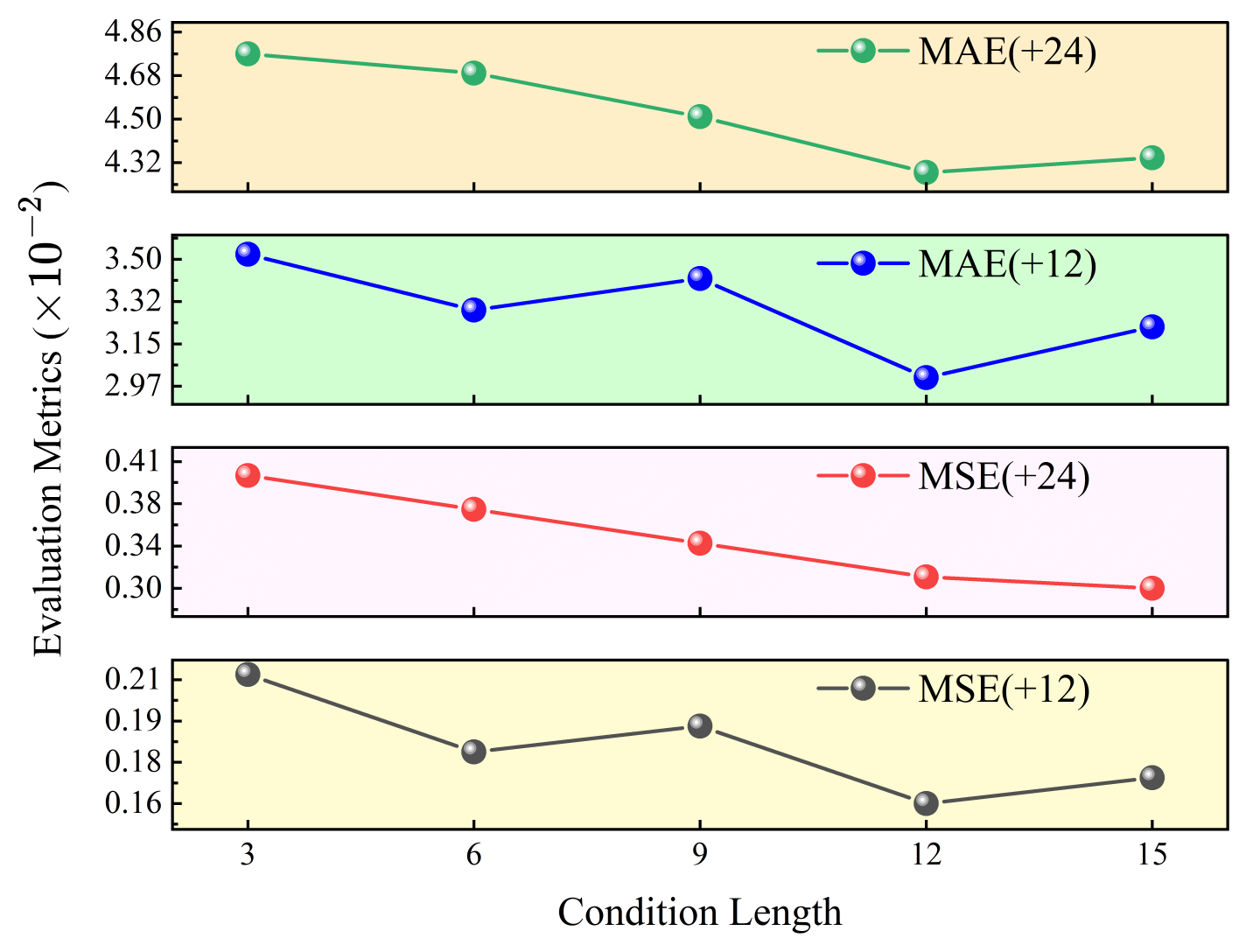}
		\label{condition_springs}}
	\hfil
	\subfloat[\textit{Charged}]{\includegraphics[width=1.1in]{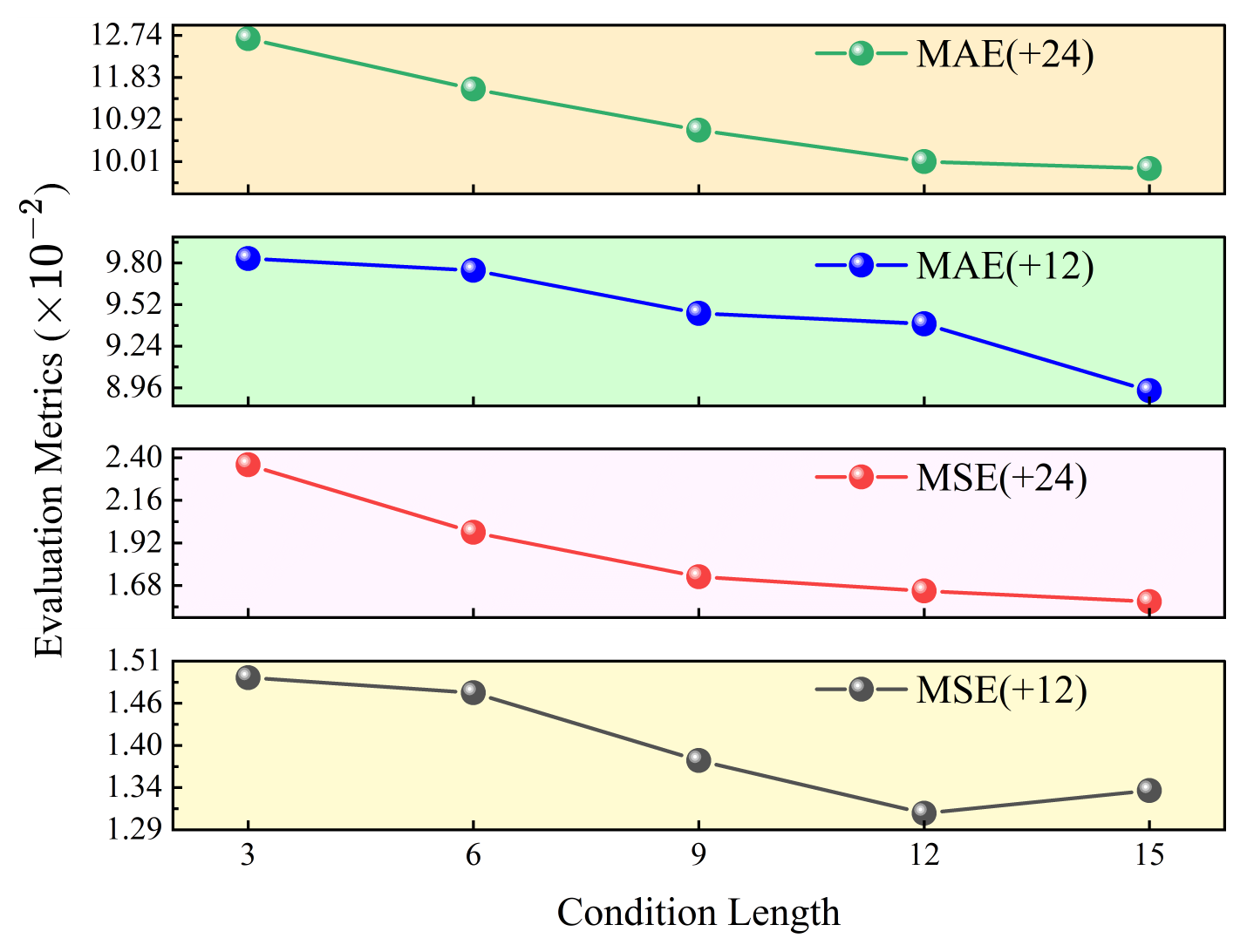}
		\label{condition_charged}}
	\hfil
	\subfloat[\textit{Motion}]{\includegraphics[width=1.1in]{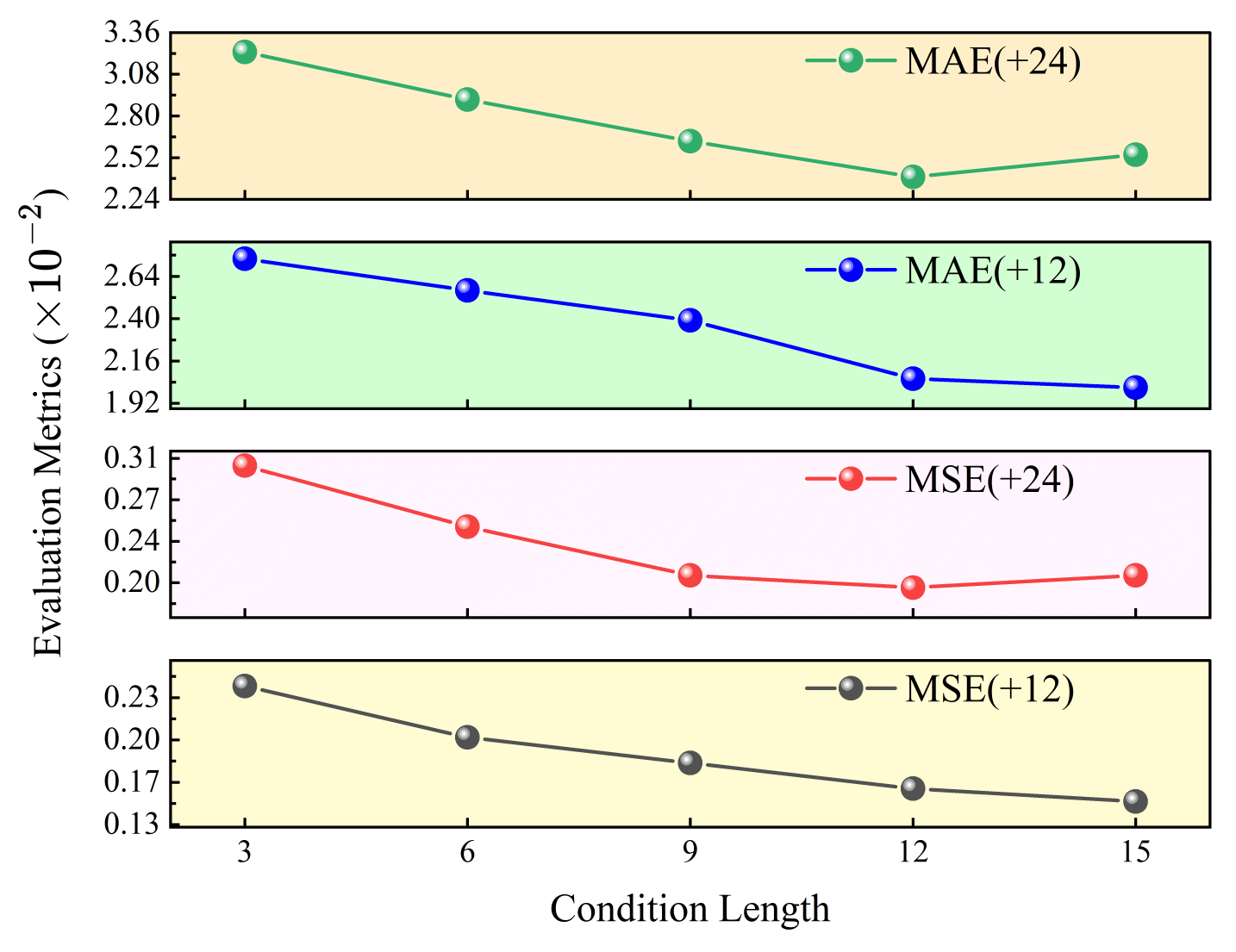}
		\label{condition_motion}}
	\hfil
	\subfloat[\textit{2N5C}]{\includegraphics[width=1.1in]{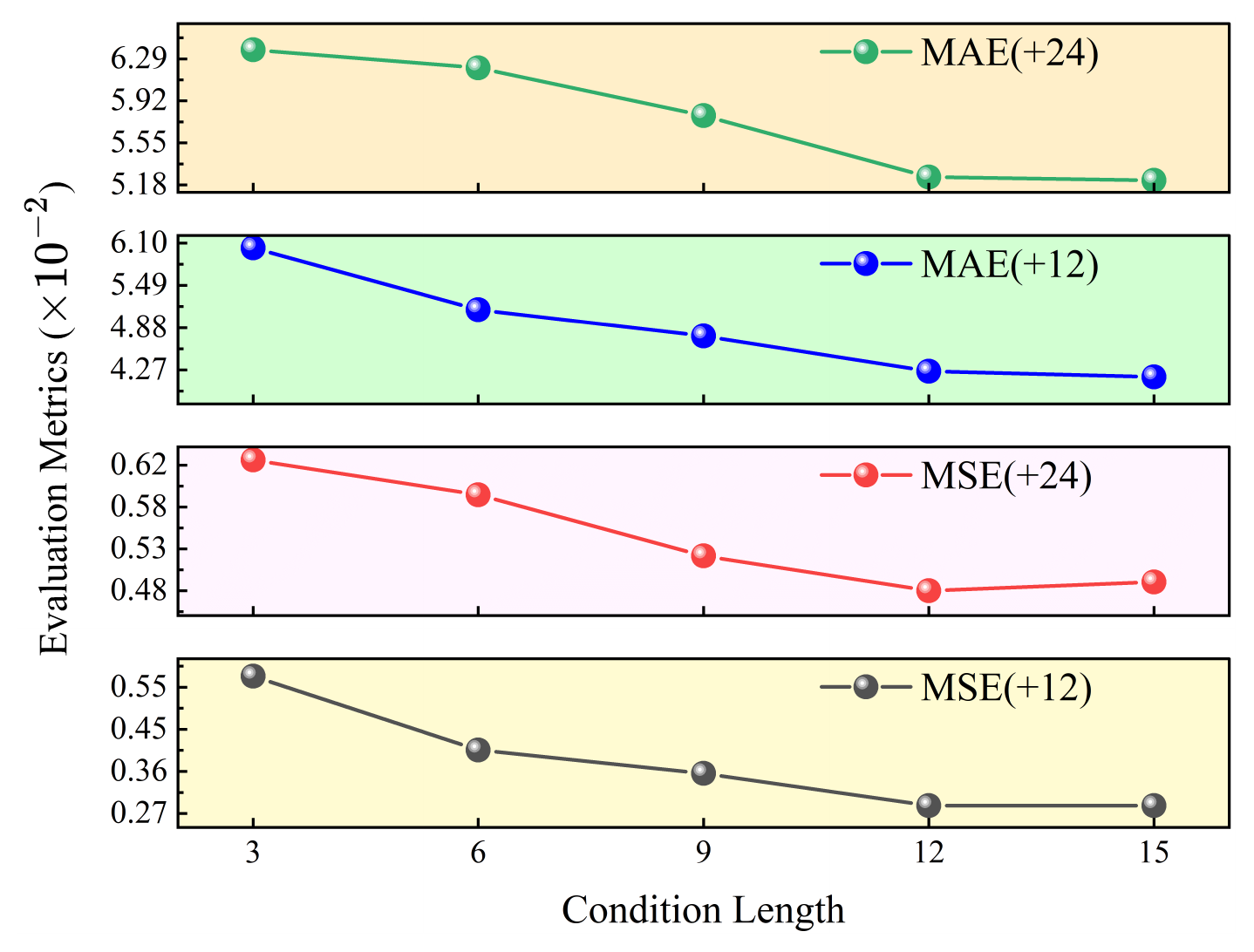}
		\label{condition_2N5C}}
	\hfil
	\subfloat[\textit{5AWL}]{\includegraphics[width=1.1in]{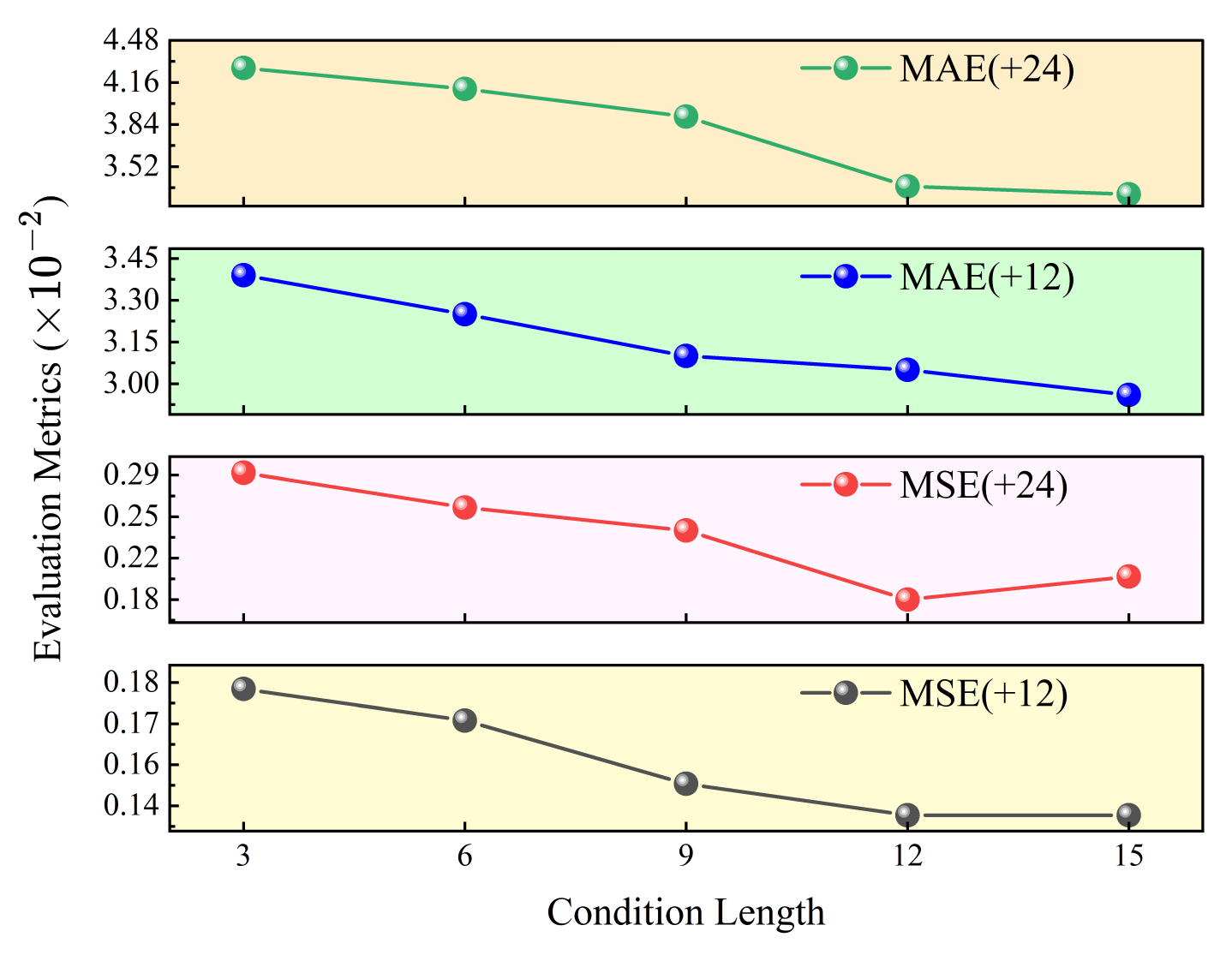}
		\label{condition_5AWL}}
	\hfil
	
	\caption{Performance under different condition lengths. }
	\label{condition}
\end{figure}
\subsection{Parameter Sensitivity}
To assess the impact of the number of basis functions and RandNet on model performance, we conducted a parameter sensitivity experiment in the combination space of basis function $K \in \{ 2,3,4,5,6\} $  and RandNet numbers ${K^r} \in \{ 1,2,3,4,5\}$. The results on the \textit{Spring} and \textit{Charged} dataset are shown in \textcolor{blue}{\cref{spring}} and \textcolor{blue}{\cref{charged}}. From the heatmaps of MSE and MAE, it can be observed that model performance improves gradually as the number of basis functions and RandNet increases, with slight fluctuations occurring after reaching a certain scale. Specifically, when the number of basis functions is 5 and the number of RandNets is 4, the model performs best in both MSE and MAE. The time consumption chart shows that increasing the number of basis functions linearly increases the computational overhead. Balancing accuracy and training overhead, we ultimately chose 5 basis functions and 4 RandNet units as the default parameter configuration for the model, achieving a good balance between expressiveness and efficiency.

To further investigate the impact of condition length on prediction performance, we conducted a sensitivity analysis within the range of $\{3,6,9,12,15\}$ and tested the model’s performance under two prediction lengths. The results shown in \textcolor{blue}{\cref{condition}}. Shorter condition lengths (such as 3 or 6) generally lead to higher MSE and MAE because the model fails to capture sufficient early information, resulting in unstable initial state estimates. As the condition length increases to 9 and 12, model performance improves significantly, with errors steadily decreasing, especially in long-term prediction tasks. When the condition length increases from 12 to 15, the performance improvements tend to plateau, with slight fluctuations, indicating that excessively long condition sequences offer limited gains and may introduce redundant information. Considering both short-term and long-term prediction task performances, we ultimately chose a condition length of 12 as the optimal historical information window size, balancing information sufficiency and training efficiency.

\subsection{More Experimental Results}
\begin{figure*}[!ht]
	\centering
	\subfloat[]{\includegraphics[width=2in]{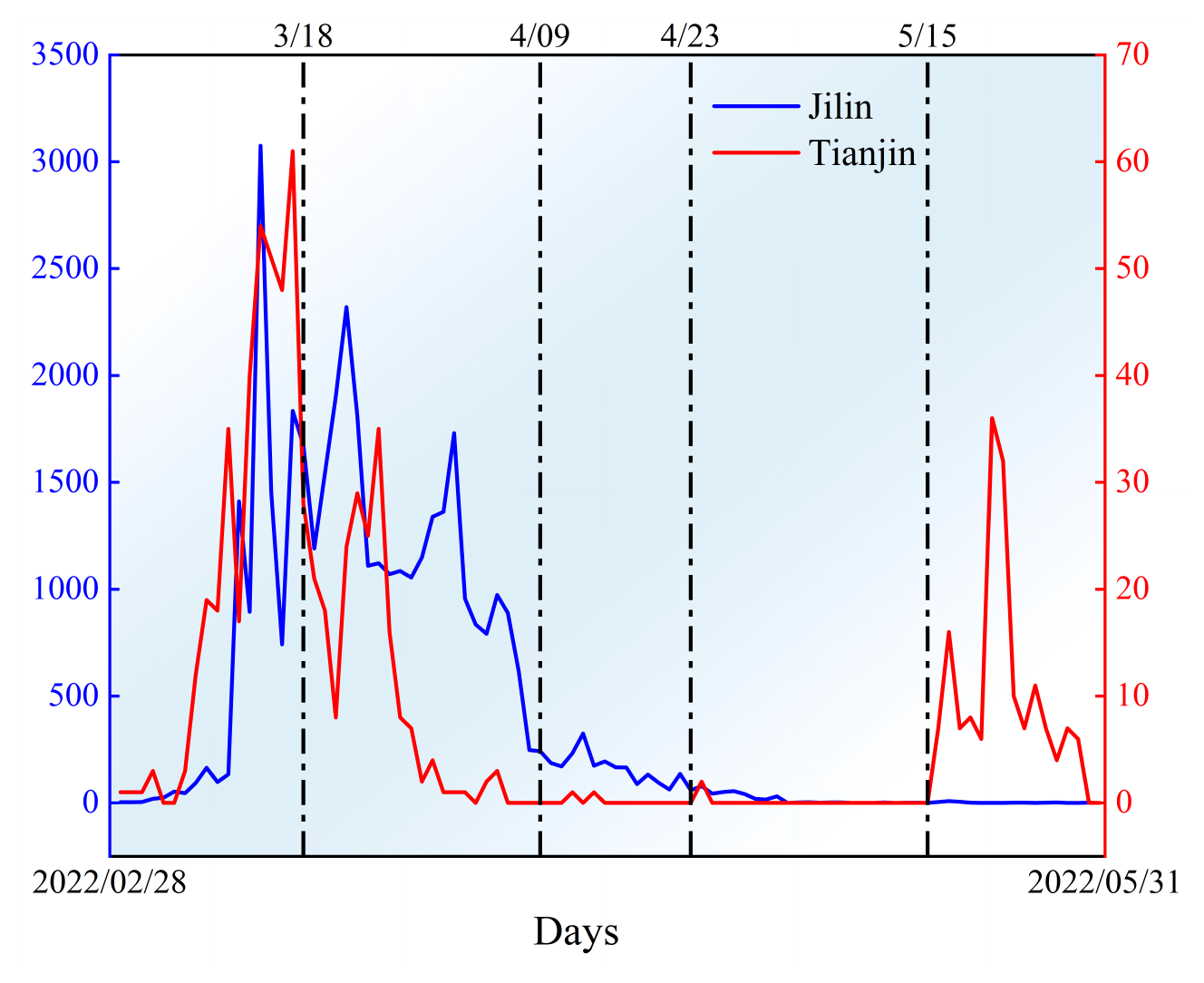}%
		\label{bias weight a}}
	\hfil
	\subfloat[]{\includegraphics[width=2in]{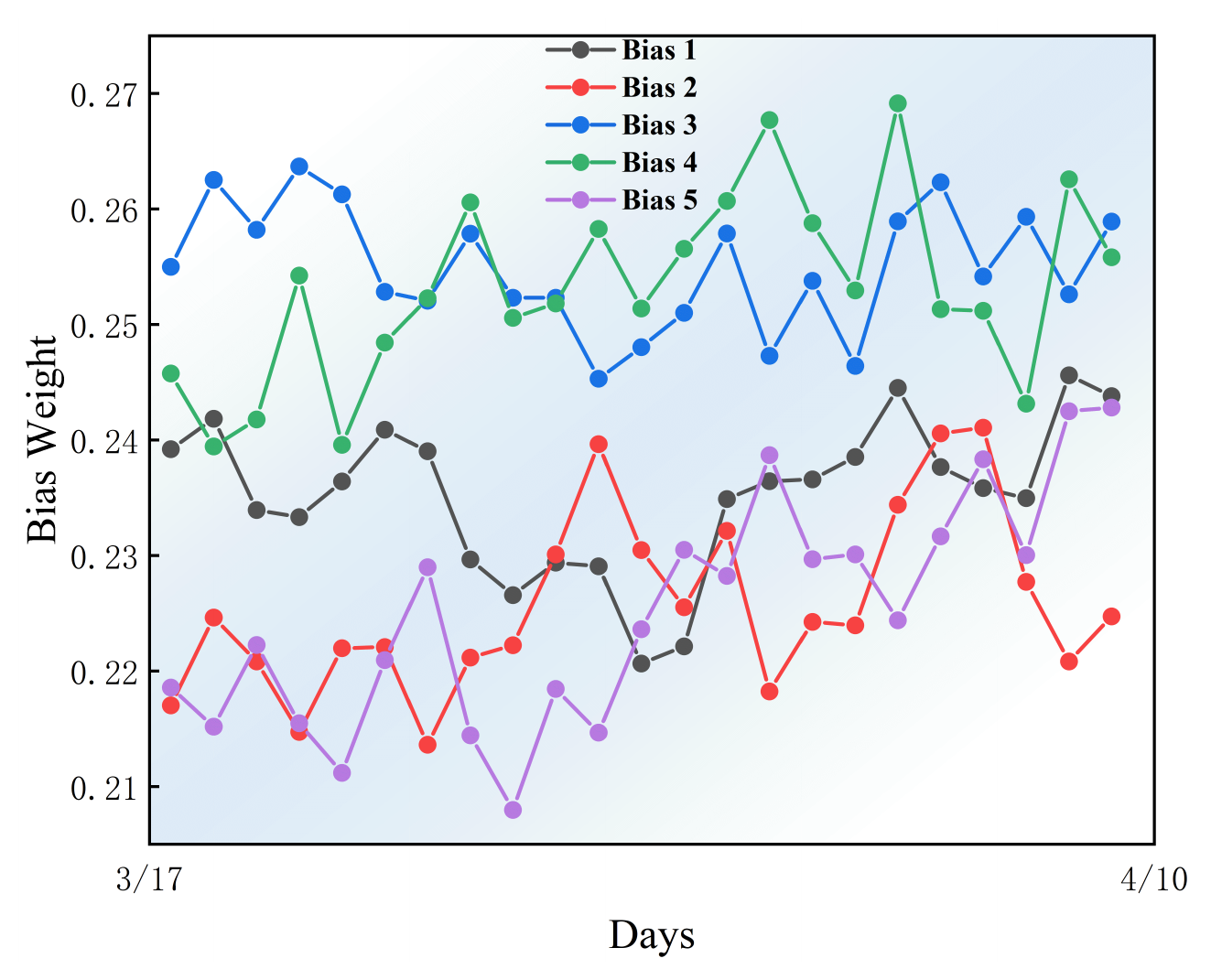}%
		\label{bias weight b}}
	\hfil
	\subfloat[]{\includegraphics[width=2in]{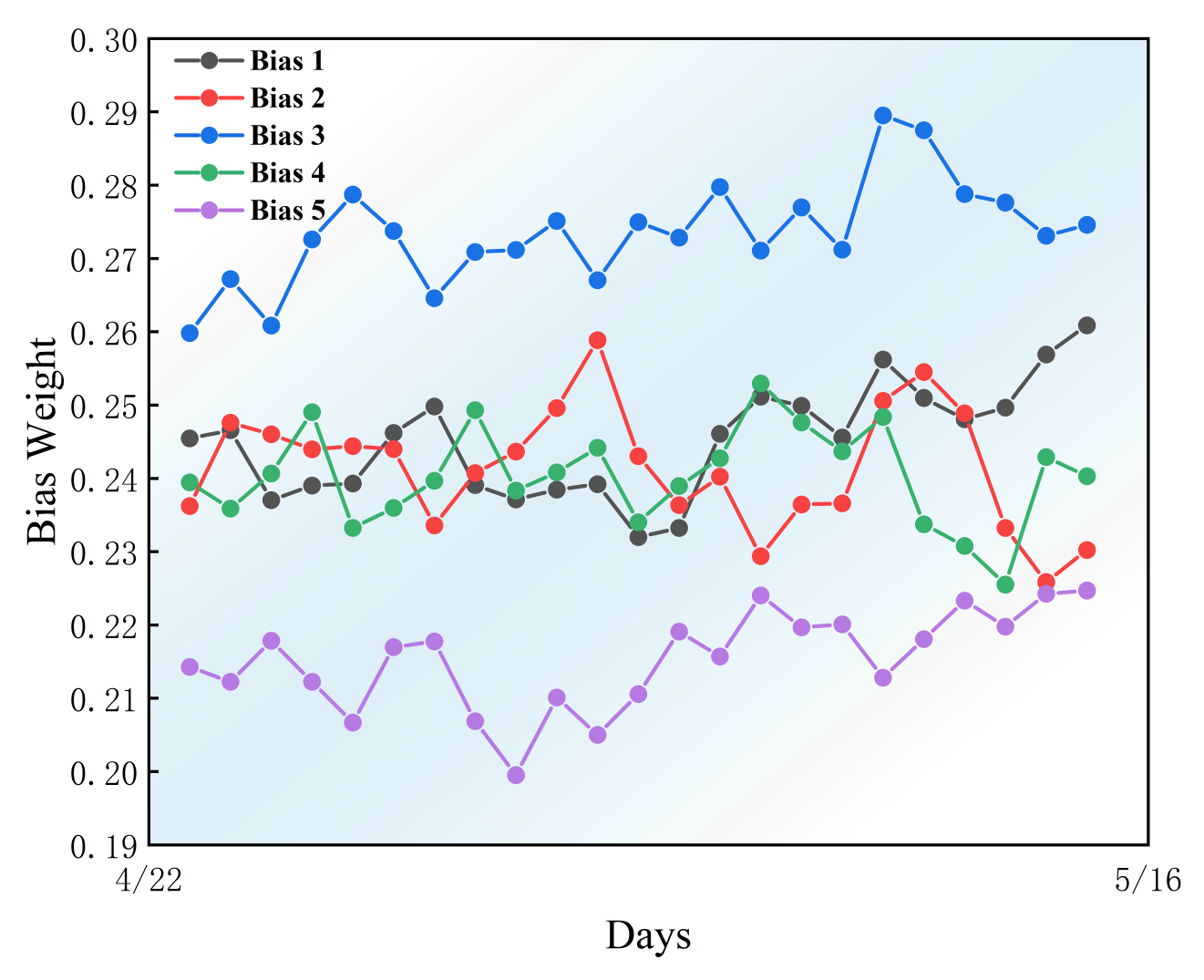}%
		\label{bias weight c}}
	\hfil
	
	\caption{Visualization of the learned bias weights. (a) Daily new confirmed COVID-19 cases in Jilin and Tianjin from March 1 to May 31, 2022. (b) Weight visualization for the period from March 18 to April 9, 2022. (c) Weight visualization for the period from April 23 to May 15, 2022}
	\label{fig_sim}
\end{figure*}
\subsubsection{Bias Weight Visualizations}

To validate the plausibility of the basis function weights learned by the model, we conducted a visualization analysis on two periods in the COVID-19 dataset: March 18 to April 9, 2022, and April 23 to May 15, 2022. The analysis focuses on the temporal evolution of all basis function weights $w_{i,j,k}^t$ from Jilin to Tianjin. \textcolor{blue}{\cref{bias weight a}} shows the daily confirmed cases in Jilin and Tianjin from March 1 to May 31, 2022, while \textcolor{blue}{\cref{bias weight b}} and \textcolor{blue}{\cref{bias weight c}} present the temporal trajectories of all basis function weights $w_{i,j,k}^t$ between the two regions at different time points. Here, the weight values quantify the strength of the corresponding basis functions, with larger weights indicating a more significant contribution to inter-node interactions at a given time.

According to publicly released epidemic reports, Jilin Province entered a phase of large-scale outbreak in mid-March, with the number of confirmed cases rapidly increasing, followed by the implementation of strict lockdown and transportation restriction measures. In contrast, the epidemic situation in Tianjin remained relatively stable, while inter-provincial population mobility was significantly affected by the lockdown policies in Jilin. As shown in \textcolor{blue}{\cref{bias weight a}} and \textcolor{blue}{\cref{bias weight b}}, the weights of different basis functions exhibit distinct temporal evolution patterns. For instance, the weight of Basis Function 1 drops sharply in late March and gradually recovers in early April. The interaction pattern captured by this basis function exhibits a highly consistent temporal trend with changes in inter-provincial population mobility, and thus can be interpreted as a transmission mechanism associated with cross-provincial travel. When control measures become more stringent and population mobility is restricted, the corresponding weight decreases; conversely, as lockdown policies are gradually relaxed and cross-regional mobility recovers, the weight increases accordingly. These observations indicate that the interaction weights learned by the model can reflect the influence of macro-level policies and population mobility dynamics on the underlying transmission structure. During the period from April 23 to May 15, 2022, Jilin entered a phase of epidemic mitigation and gradual resumption of social activities. \textcolor{blue}{\cref{bias weight a}} and \textcolor{blue}{\cref{bias weight c}} show that the fluctuations of basis function weights are markedly smaller than in the previous period, and the relative influence of different transmission mechanisms becomes more balanced. Compared with the outbreak phase in March, the weight trajectories in this stage are more stable, indicating that the model effectively captures the temporal characteristics of the epidemic as it transitions to normalized control. This further demonstrates the effectiveness of our method in characterizing time-varying interaction mechanisms in real-world systems.
\subsubsection{Training Times}
\begin{figure}[htbp]
	\centering
	\subfloat[12-step ahead]{\includegraphics[width=2in]{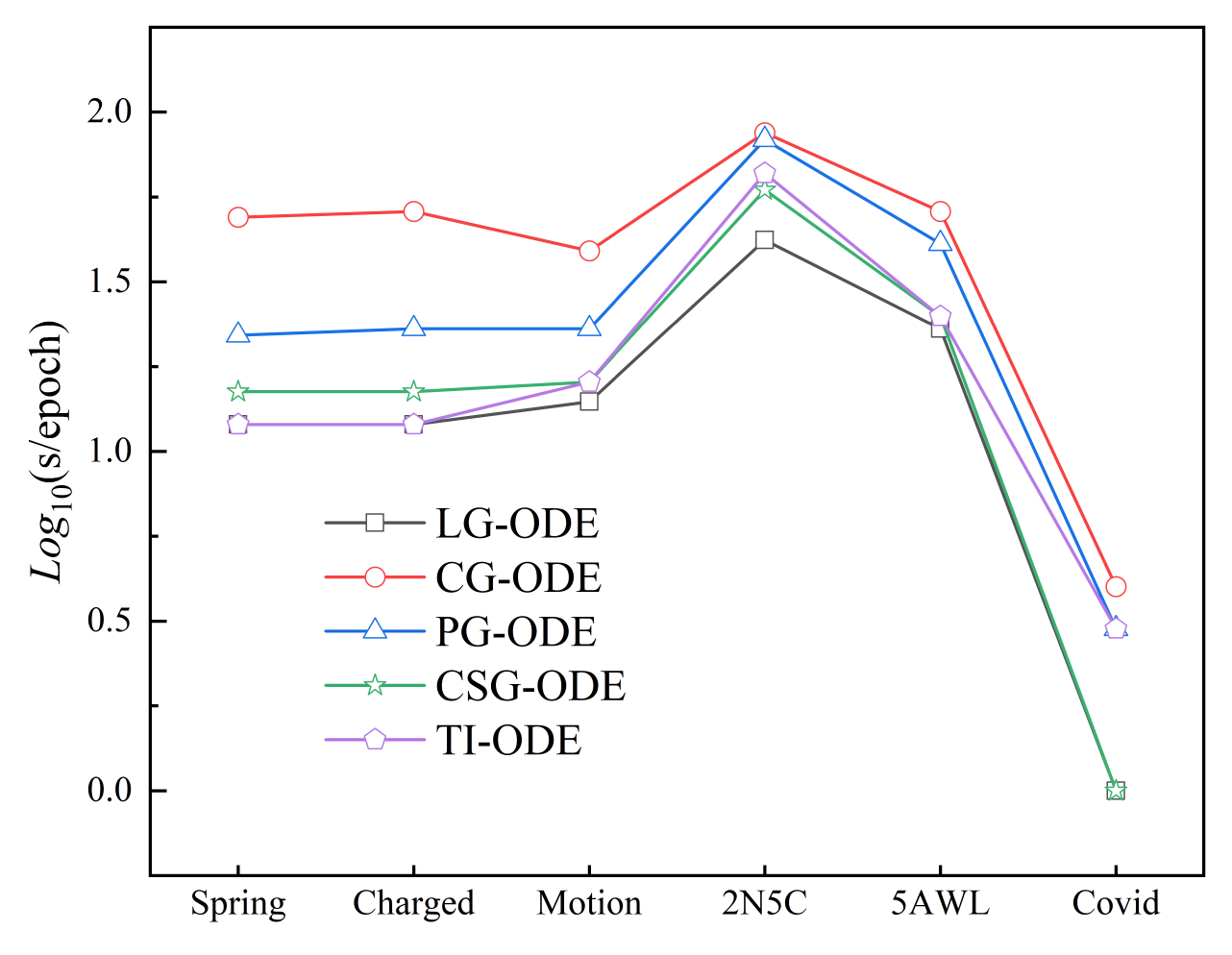}%
		\label{run_time_12}}
	\hfil
	\subfloat[24-step ahead]{\includegraphics[width=2in]{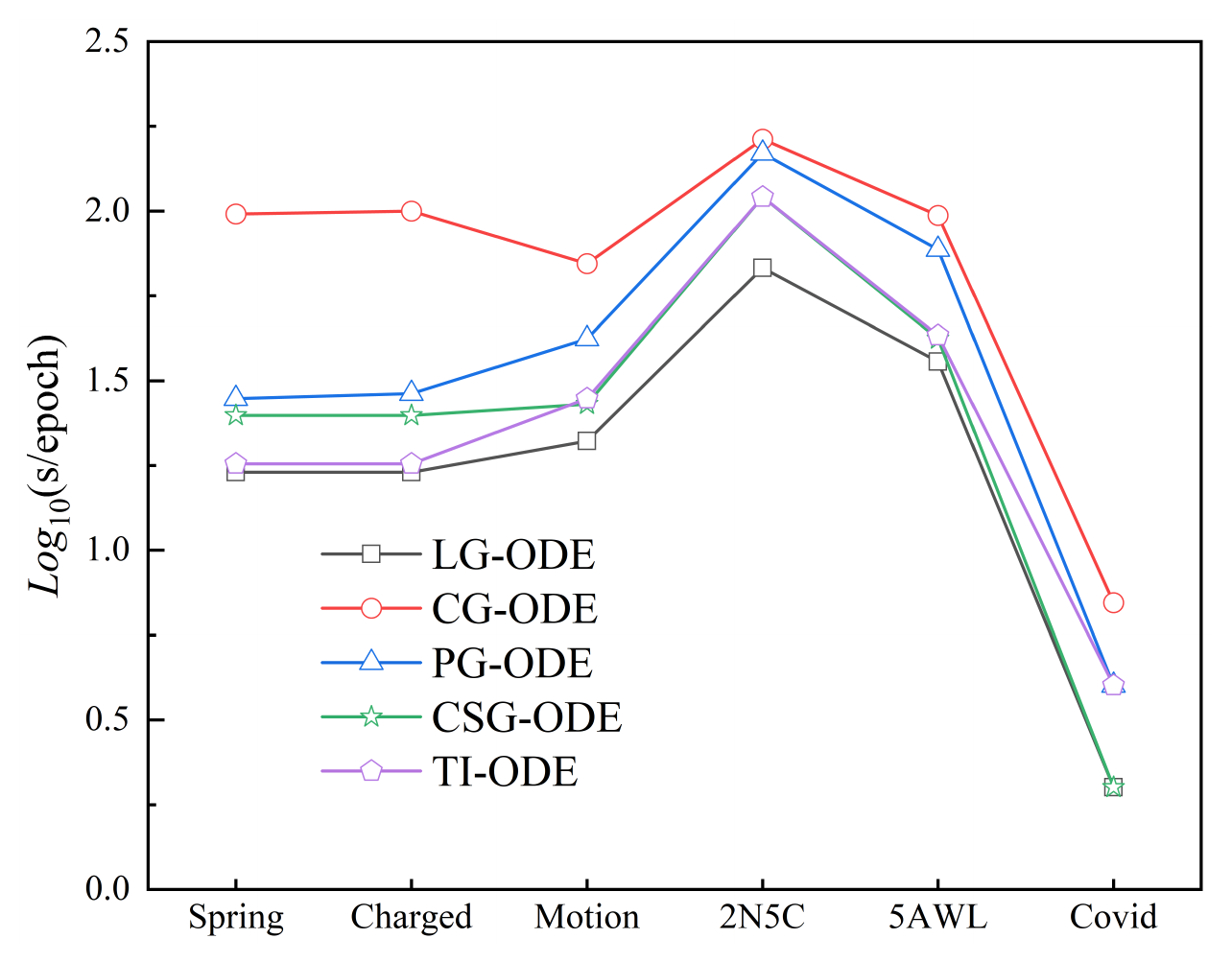}%
		\label{run_time_24}}
	\hfil
	
	\caption{Comparison of training times for different methods on
		the all datasets. The training durations are presented in units of $log_{10}$(seconds/epoch). }
	\label{run_time}
\end{figure}
\begin{figure}[htbp]
	\begin{center}
		\includegraphics[width=3in]{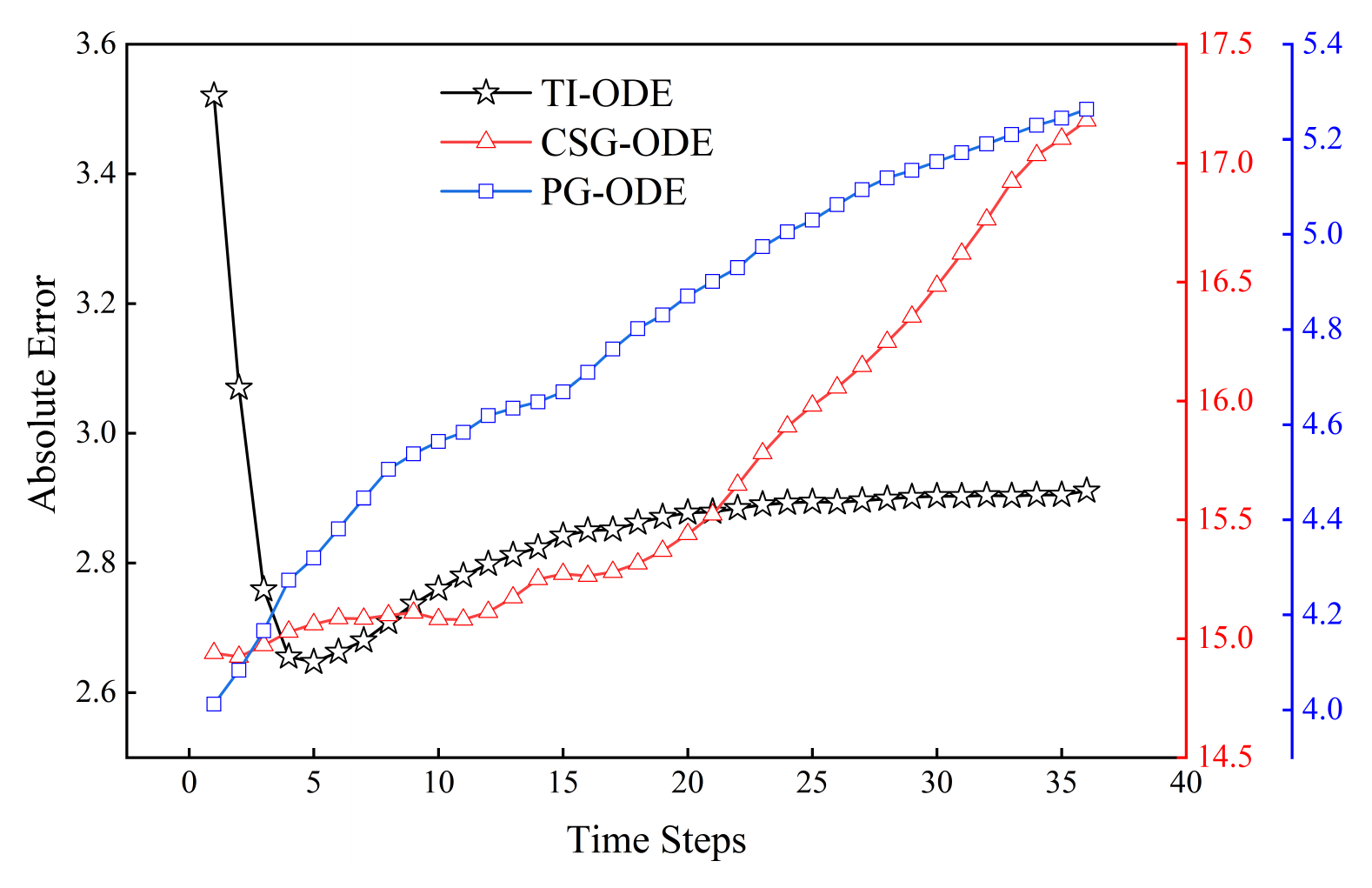}
	\end{center}
	\caption{Temporal Evolution of Absolute Error under Initial State Perturbations across Different Models.}
	\label{robustness experience fig}
\end{figure}
To validate the computational efficiency of TI-ODE, we compared the training time of TI-ODE
with other graph neural ODE methods (LG-ODE, CG-ODE, PG-ODE and CSG-ODE). All experiments were run on a Titan RTX GPU, and we compared the time cost per training epoch. As shown in \textcolor{blue}{\cref{run_time}}, although TI-ODE introduces multiple interaction basis functions and increases the parameter count, its impact on the overall training time is negligible. Compared to other graph neural ODE methods such as CSG-ODE and PG-ODE, TI-ODE achieves superior property prediction performance while maintaining comparable training efficiency. These results indicate that TI-ODE delivers significant performance gains while sustaining computational efficiency.

\subsubsection{Model Robustness Experiment}
\begin{figure}[htbp]
	\begin{center}
		\includegraphics[width=3in]{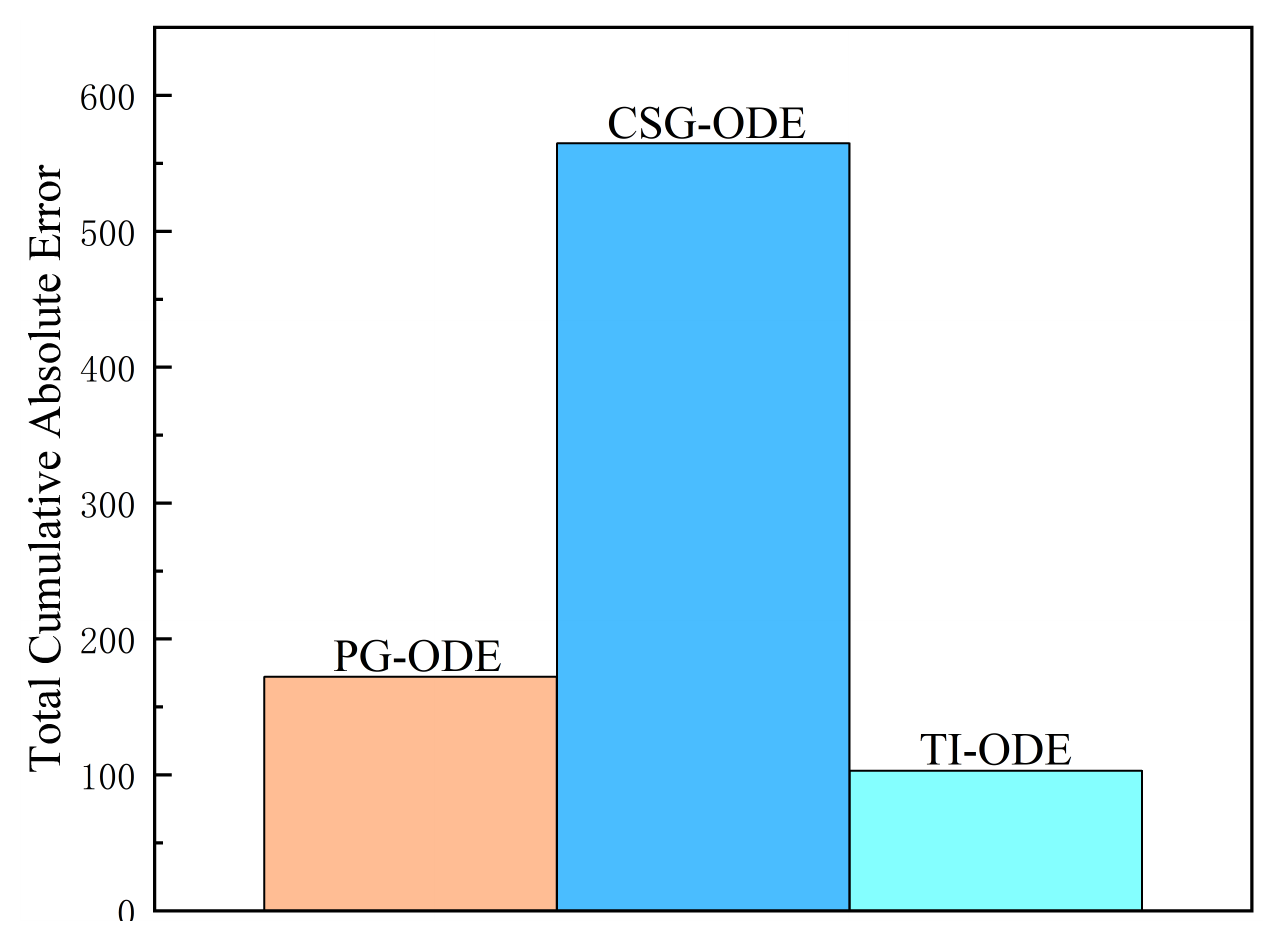}
	\end{center}
	\caption{Comparison of Total Cumulative Error across Different Models.}
	\label{cum ae}
\end{figure}

We investigate whether TI-ODE exhibits a lower perturbation growth rate and smaller cumulative error compared to models with a unified message passing mechanism, such as CSG-ODE and PG-ODE. The core logic of this experiment is to validate model robustness by introducing infinitesimal perturbations to the initial state and observing the resulting cumulative error during long-term evolution. For dataset selection, we utilize \textit{Spring}, a physical simulation dataset with a well-defined dynamical structure and explicit analytical laws, making it ideal for monitoring deviations over extended horizons. We trained TI-ODE, CSG-ODE, and PG-ODE with both the condition length and prediction length set to 12. During inference, a small noise (fixed at 0.01) is applied to the initial values to obtain perturbed initial states. We then extend the prediction length to 36 steps, using the same ODE solver (e.g., RK4) to evolve both the original trajectory $z_i^t$ (from unperturbed states) and the perturbed trajectory $\tilde{z}_i^t$. The absolute perturbation error $e_i^t = |\tilde z_i^t - z_i^t|$ is calculated at each timestamp to plot its variation over time $t$, and the total cumulative error over the long horizon is reported.

\textcolor{blue}{\cref{robustness experience fig}} illustrates the temporal evolution of the absolute error $e^t$ under initial state perturbations. While all models are subjected to the same magnitude of noise, their error accumulation patterns diverge significantly. Both CSG-ODE and PG-ODE exhibit a consistent and rapid upward trend in absolute error as the prediction horizon extends, indicating sensitivity to initial disturbances and a tendency for error propagation over time. In contrast, TI-ODE demonstrates superior numerical stability. After a brief initial adjustment phase, its error curve flattens and remains remarkably stable throughout the 36-step evolution.
The aggregate impact of error propagation is further quantified in \textcolor{blue}{\cref{cum ae}}, which displays the total cumulative absolute error for each model. The disparity is striking: CSG-ODE suffers from the highest error accumulation, followed by PG-ODE. In contrast, TI-ODE consistently maintains the smallest total error, outperforming the closest baseline by a wide margin. These results demonstrate that TI-ODE exhibits a lower error growth rate, rendering it more robust for long-term forecasting.

\section{Conclusion}
This paper proposes TI-ODE, a graph neural ODE model that captures the diversity and time-varying nature of inter-node interactions in continuous time. TI-ODE decomposes the evolution function of a graph ODE into a set of learnable basis functions, where each basis corresponds to a distinct interaction type, and dynamically combines them via time-dependent learnable weights to precisely characterize dynamic graph evolution. Theoretically, we show that TI-ODE is more robust than models that rely on a unified message passing function. Empirically, TI-ODE consistently achieves superior performance across a diverse set of benchmarks, including physical dynamics, molecular dynamics, and real-world datasets. Its effectiveness on the COVID dataset further demonstrates strong interpretability and generalization capability. Moreover, TI-ODE exhibits improved robustness compared to models based on a unified message-passing paradigm.

While TI-ODE excels in capturing diverse and time-varying interactions between nodes, it still encounters scalability constraints when applied to large-scale dynamic graphs. Specifically, training efficiency and computational speed can be hindered when the number of nodes or edges reaches the million scale. To address these challenges in large-scale scenarios, further investigation into approximation methods, sparse representations, or more efficient training strategies is required. Future research could explore the potential of TI-ODE for both large-scale dynamic graphs and multimodal graph data.

\section*{Acknowledgments}
This work is supported by the National Natural Science Foundation of
China (Nos. 62272285, 62376142, U25A20529).
\appendix
\section{Proof of \textcolor{blue}{\cref{Existence unique}}}

\label{theorem1}

\begin{definition}
	\label{definition1}
	A function $f:\mathbb{R}^n \to \mathbb{R}^m$ is Lipschitz continuous if there exists a constant $L > 0$ such that $||f(x) - f(y)|| \le L\, ||x - y||$ for all $x, y \in \mathbb{R}^n$.
\end{definition}

\begin{remark}	 
	Unless otherwise specified, all vector norms in this paper refer to the Euclidean norm (the $l_2$ norm: $||x|| = (\sum_{i=1}^d x_i^2)^{1/2}$), and all matrix norms refer to the spectral norm ($||A||_2 = \max_{X \ne 0} \frac{||AX||_2}{||X||_2} = \sqrt{\lambda_{\max}(A^\top A)} = \sqrt{\max_{1 \le i \le n} |\lambda_i|}$, where $\lambda_i$ are the eigenvalues of $A^\top A$ and $\lambda_{\max}$ denotes its largest eigenvalue). Additionally, the vector $\infty$-norm is defined as $||x||_\infty = \max_i |x_i|$, and the matrix $\infty$-norm is defined as $||A||_\infty = \max_i \sum_{j=1}^m |a_{ij}|$, which corresponds to the maximum $\ell_1$-norm over all row vectors.
\end{remark}

\begin{lemma}
	\label{lemma-1}
	For any $x \in \mathbb{R}^d$, the following inequality holds:
	$$||x|{|_\infty } \le ||x|{|_2} \le \sqrt d  \cdot ||x|{|_\infty }.$$
\end{lemma}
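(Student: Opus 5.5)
The statement to prove is Lemma~\ref{lemma-1}, the elementary comparison between the $\ell_\infty$ and $\ell_2$ norms on $\mathbb{R}^d$. The plan is to prove each inequality separately, working directly with the coordinates $x_1,\dots,x_d$ and the definitions in the preceding remark. No earlier result of the paper is needed.

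For the left inequality $\|x\|_\infty \le \|x\|_2$:
\begin{itemize}
\item Choose an index $m$ with $|x_m| = \max_i |x_i| = \|x\|_\infty$.
\item Since every term $x_i^2$ is nonnegative, dropping all terms except the $m$-th gives $\|x\|_\infty^2 = x_m^2 \le \sum_{i=1}^d x_i^2 = \|x\|_2^2$.
\item Both sides are nonnegative and the square root is monotone on $[0,\infty)$, so taking square roots yields the claim.
\end{itemize}

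For the right inequality $\|x\|_2 \le \sqrt d\,\|x\|_\infty$:
\begin{itemize}
\item Bound each term by $x_i^2 = |x_i|^2 \le \|x\|_\infty^2$.
\item Summing over the $d$ coordinates gives $\|x\|_2^2 \le d\,\|x\|_\infty^2$.
\item Taking square roots of these nonnegative quantities gives $\|x\|_2 \le \sqrt d\,\|x\|_\infty$.
\end{itemize}

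There is no real obstacle here. The only point needing care is that squaring and square-rooting preserve the inequalities, which holds because all quantities involved are nonnegative. The case $x = 0$ is covered automatically, since then every side equals zero.

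It is also worth noting that both bounds are tight. Equality on the left holds for $x = e_1$, and equality on the right holds for $x = (1,\dots,1)$. This matters because the factor $\sqrt d$ is what later enters the constant $L^k = dC_{r,k}L_{w,k} + \sqrt d\, C_{w,k}L_{r,k}$ in Theorem~\ref{Robust}, when converting between coordinatewise bounds and Euclidean norms.
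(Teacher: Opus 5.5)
Your proof is correct and takes essentially the same approach as the paper: the left inequality keeps only the largest term of $\sum_i x_i^2$, and the right inequality bounds each $|x_i|^2$ by $\|x\|_\infty^2$ and sums over the $d$ coordinates. Your explicit points about the monotonicity of the square root and the tightness of both bounds are harmless additions that the paper leaves implicit.
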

\begin{proof}	
	For the left-hand inequality, since	$||x||_2^2 = \sum\limits_{i = 1}^d {|{x_i}{|^2}}  \ge |{x_j}{|^2},\forall j$, it follows that $||x||_2 \ge \max_i |x_i| = ||x||_\infty$. For the right-hand inequality, since each component does not exceed $||x||_\infty$, we have:
	\begin{equation}
		\label{a}
		||x||_2^2 = \sum\limits_{i = 1}^d {|{x_i}{|^2}}  \le \sum\limits_{i = 1}^d {||x||_\infty ^2}  = d \cdot ||x||_\infty ^2 \Rightarrow ||x|{|_2} \le \sqrt d  \cdot ||x|{|_\infty }.
	\end{equation}
	
\end{proof}

\begin{lemma}
	\label{lemma0}
	For any $x \in \mathbb{R}^d$, assuming the dimension $d$ is fixed, the boundedness of the $\infty$-norm is equivalent to the boundedness of the Euclidean norm.
\end{lemma}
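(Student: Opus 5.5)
The plan is to prove both directions directly from the norm inequalities of \cref{lemma-1}, with the dimension $d$ held fixed throughout. Throughout, "boundedness" of a family of vectors $\{x_\alpha\}\subset\mathbb{R}^d$ in a given norm means there is a constant $C<\infty$ with $||x_\alpha|| \le C$ for every member. For a single vector the statement is trivial, so the substantive content is uniform boundedness over a family, for instance over all $t$ or all nodes.

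\emph{Direction 1 ($\ell_2$-bounded implies $\infty$-bounded).} Suppose $||x_\alpha||_2 \le C$ for all $\alpha$. The left inequality of \cref{lemma-1}, $||x||_\infty \le ||x||_2$, gives $||x_\alpha||_\infty \le C$ immediately. The bounding constant is unchanged.

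\emph{Direction 2 ($\infty$-bounded implies $\ell_2$-bounded).} Suppose $||x_\alpha||_\infty \le C$ for all $\alpha$. The right inequality of \cref{lemma-1} gives $||x_\alpha||_2 \le \sqrt d\, ||x_\alpha||_\infty \le \sqrt d\, C$. This bound is finite precisely because $d$ is a fixed finite number and does not depend on $\alpha$.

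The only point needing care is this last step. The equivalence constant $\sqrt d$ must be independent of the family, which is exactly why the hypothesis that $d$ is fixed is stated. If $d$ were allowed to grow, $\sqrt d\,C$ would no longer be a uniform bound. I will add a one-line remark to this effect, and then note the consequence for \cref{Existence unique}: bounds on FNN outputs or states can be verified componentwise in the $\infty$-norm and transferred to the Euclidean norm at the cost of the factor $\sqrt d$. I expect no real obstacle beyond stating clearly what "bounded" means here.
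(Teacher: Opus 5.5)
Your proposal is correct and follows the paper's proof: each direction is a one-line application of one of the two inequalities in \cref{lemma-1}, giving the bounds $C$ and $\sqrt{d}\,C$ respectively. Your reading of ``bounded'' as uniform over a family, with the factor $\sqrt{d}$ independent of that family because $d$ is fixed, is a useful clarification that the paper leaves implicit.
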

\begin{proof}	
	If $||x||_\infty \le C$, then by \textcolor{blue}{\cref{lemma-1}}, we have $||x||_2 \le \sqrt{d}\, C$, which implies the Euclidean norm is bounded. Conversely, if $||x||_2 \le C$, then by \textcolor{blue}{\cref{lemma-1}}, we have $||x||_\infty \le C$, which implies the $\infty$-norm is also bounded.
\end{proof}

\begin{lemma}
	\label{lemma1}
	The composition of two Lipschitz continuous functions is Lipschitz continuous. 
\end{lemma}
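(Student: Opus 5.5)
The claim is Lemma~\ref{lemma1}: if $g:\mathbb{R}^m\to\mathbb{R}^p$ and $h:\mathbb{R}^n\to\mathbb{R}^m$ are Lipschitz continuous in the sense of Definition~\ref{definition1}, then $g\circ h$ is Lipschitz continuous. The plan is a direct two-step chaining of the Lipschitz inequalities, which also yields the explicit constant used later when the FNN layers of \cref{ti-ode} are composed.

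\textbf{Fixing constants.} By Definition~\ref{definition1} there exist $L_h>0$ and $L_g>0$ such that $\|h(x)-h(y)\|\le L_h\|x-y\|$ for all $x,y\in\mathbb{R}^n$, and $\|g(u)-g(v)\|\le L_g\|u-v\|$ for all $u,v\in\mathbb{R}^m$.

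\textbf{Chaining.} Fix arbitrary $x,y\in\mathbb{R}^n$ and set $u=h(x)$, $v=h(y)$, which lie in $\mathbb{R}^m$. Then
\begin{equation*}
\|g(h(x))-g(h(y))\| \le L_g\|h(x)-h(y)\| \le L_gL_h\|x-y\|.
\end{equation*}
The constant $L:=L_gL_h$ is strictly positive and independent of $x$ and $y$. Hence $g\circ h$ satisfies Definition~\ref{definition1} with constant $L_gL_h$.

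\textbf{Extension.} By induction, a finite composition $g_1\circ\cdots\circ g_m$ is Lipschitz with constant $\prod_i L_{g_i}$. This is the form needed to bound the Lipschitz constants of multilayer FNNs. An affine layer $x\mapsto Wx+b$ has Lipschitz constant $\|W\|_2$, finite by the spectral-norm assumption in Theorem~\ref{Existence unique}. When $W\neq 0$ this constant is positive; when $W=0$ the layer is constant, and any positive constant, say $1$, satisfies Definition~\ref{definition1}. Activations are Lipschitz by hypothesis.

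\textbf{Main obstacle.} There is no substantive difficulty here. The only point needing care is that the norms on the intermediate space $\mathbb{R}^m$ must agree. The remark fixing the Euclidean norm throughout guarantees this. If norms differed, Lemma~\ref{lemma-1} would supply the conversion factor $\sqrt d$.
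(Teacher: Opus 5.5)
Your proof is correct and is essentially the paper's own argument: you chain $\|g(h(x))-g(h(y))\|\le L_g\|h(x)-h(y)\|\le L_gL_h\|x-y\|$ to obtain the Lipschitz constant $L_gL_h$, exactly as the paper does (with the function names swapped). Your induction to finite compositions and your remarks on affine layers match how the paper later uses the lemma in its FNN argument.
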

\begin{proof}
	Let $f:\mathbb{R}^n \to \mathbb{R}^m$ be Lipschitz continuous with constant $L_f$, satisfying $||f(x) - f(y)|| \le L_f ||x - y||$ for all $x, y \in \mathbb{R}^n$. Similarly, let $g:\mathbb{R}^m \to \mathbb{R}^k$ be Lipschitz continuous with constant $L_g$, satisfying $||g(u) - g(v)|| \le L_g ||u - v||$ for all $u, v \in \mathbb{R}^m$. For the composite function $h = g \circ f$, defined by $h(x) = g(f(x))$, we have
	\begin{equation}
		\label{b}
		||h(x) - h(y)|| = ||g(f(x)) - g(f(y))||
		\le L_g ||f(x) - f(y)|| \le L_g L_f ||x - y||,
	\end{equation}
	which shows that $h$ is Lipschitz continuous and its Lipschitz constant is $L_g L_f$.
\end{proof}

\begin{lemma}
	\label{lemma3}
	If every layer of an FNN has weight matrices with finite spectral norms and the activation function is Lipschitz continuous, then the entire FNN is Lipschitz continuous. 
\end{lemma}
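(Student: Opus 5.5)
The plan is to write a generic FNN as an alternating composition of affine maps and activations and then apply \cref{lemma1} repeatedly. Concretely, write an $L$-layer FNN as
\[
F(x) = W_L\,\sigma\bigl(W_{L-1}\,\sigma(\cdots \sigma(W_1 x + b_1)\cdots) + b_{L-1}\bigr) + b_L,
\]
that is, $F = A_L \circ \sigma \circ A_{L-1} \circ \cdots \circ \sigma \circ A_1$ with $A_l(x) = W_l x + b_l$. Here $\sigma$ is applied coordinatewise, or is any Lipschitz map with constant $L_\sigma$. The proof then reduces to two elementary facts plus induction on the number of layers.

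\textbf{Affine layers.} Each $A_l$ is Lipschitz with constant $||W_l||_2$. Indeed,
\[
||A_l(x) - A_l(y)|| = ||W_l(x-y)|| \le ||W_l||_2\,||x-y||
\]
by the definition of the spectral norm in the Remark. The bias cancels, so no assumption on $b_l$ is needed here. If $W_l = 0$, any positive constant works, which respects the convention $L>0$ in \cref{definition1}.

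\textbf{Activation layers.} The activation is assumed Lipschitz with constant $L_\sigma$. If this is only given for the scalar function and $\sigma$ acts coordinatewise, I would check that the vector map inherits the same constant:
\[
||\sigma(u)-\sigma(v)||_2^2 = \sum_i |\sigma(u_i)-\sigma(v_i)|^2 \le L_\sigma^2 \sum_i |u_i-v_i|^2 .
\]
This is the only step needing care, because the statement speaks of a scalar Lipschitz activation while the lemma uses Euclidean norms on vectors. The coordinatewise computation resolves this without introducing any dimension factor, so there is no need to route through \cref{lemma-1}.

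\textbf{Induction.} Applying \cref{lemma1} inductively over the $2L-1$ factors gives that $F$ is Lipschitz with constant
\[
\prod_{l=1}^{L} ||W_l||_2 \cdot L_\sigma^{L-1}.
\]
This is finite because each spectral norm is finite and $L$ is finite. I would record this explicit constant, since the later theorems need explicit Lipschitz constants for the FNNs in \cref{ti-ode}.
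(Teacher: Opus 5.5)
Your proposal is correct and follows essentially the same route as the paper: each affine map $x\mapsto W_l x+b_l$ is Lipschitz with constant $\|W_l\|_2$, the activation is Lipschitz, and the composition lemma (\cref{lemma1}) chains the layers together. Your coordinatewise check for the vector-valued activation and your explicit constant are sound refinements that the paper leaves implicit; note only that the paper's FNN also applies $\sigma$ after the last affine map, which changes your constant to $L_\sigma^{L}\prod_{l=1}^{L}\|W_l\|_2$.
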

\begin{remark}
	Note that this continuity holds for most common activation functions, such as ReLU, sigmoid, and tanh. Proofs for these functions are provided in the \textcolor{blue}{\cref{Proof of Lipschitz continuity for some activation functions}}.
\end{remark}
\begin{proof}	
	First, we show that the linear mapping $f(x) = Wx + b$ is Lipschitz continuous. For any $x, y$, we have
	\begin{equation}
		\label{c}
		||f(x) - f(y)|| = ||W(x - y)|| \le ||W|| \cdot ||x - y||,
	\end{equation}
	where $||W|| = ||W||_2$ denotes the spectral norm of the matrix $W$. Since $||W||$ is a finite constant, this linear transformation is Lipschitz continuous, and its Lipschitz constant is $||W||$. Next, consider an FNN consisting of $L$ layers, which can be written as
	\begin{equation}
		\label{d}
		f(x) = 
		\sigma(W_L \sigma(W_{L-1} \sigma(\cdots \sigma(W_1 x + b_1)\cdots ) + b_{L-1}) + b_L),
	\end{equation}
	
	where $\sigma$ is a Lipschitz continuous activation function. According to \textcolor{blue}{\cref{lemma1}}, the composition of a linear mapping and a Lipschitz continuous activation is also Lipschitz continuous. Therefore, each layer of the FNN is Lipschitz continuous. Since the entire FNN is a composition of these layers, \textcolor{blue}{\cref{lemma1}} implies that the FNN is Lipschitz continuous.
\end{proof}
\begin{lemma}
	\label{lemma4}
	Let $f, g: \mathbb{R}^n \to \mathbb{R}^m$ be bounded Lipschitz continuous functions, where $n$ and $m$ are fixed finite dimensions. Then the product function $h(x) = f(x) g(x)$ is Lipschitz continuous.
\end{lemma}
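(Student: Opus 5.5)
Since $f,g$ are bounded, there are constants $M_f,M_g<\infty$ with $||f(x)||\le M_f$ and $||g(x)||\le M_g$ for all $x$. By \cref{lemma0}, it does not matter which norm is used to state this. Let $L_f, L_g$ be their Lipschitz constants. The product $h(x)=f(x)g(x)$ is read componentwise, i.e.\ $h_i(x)=f_i(x)g_i(x)$ (the Hadamard product, which is how $w_{i,j,k}^t$ multiplies vector-valued quantities in \cref{ti-ode}). If either factor is scalar-valued, the same argument applies verbatim with ordinary scalar multiplication.

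The plan is the usual add-and-subtract trick. For any $x,y$, write
\begin{equation*}
h(x)-h(y) = f(x)\odot\big(g(x)-g(y)\big) + \big(f(x)-f(y)\big)\odot g(y),
\end{equation*}
and apply the triangle inequality. The key step is the bound $||a\odot b||_2\le ||a||_\infty\,||b||_2$, which follows from $\sum_i a_i^2b_i^2\le \max_i a_i^2\sum_i b_i^2$. Using it on each term and then \cref{lemma-1} to pass from $||\cdot||_\infty$ to $||\cdot||_2$ gives
\begin{equation*}
||h(x)-h(y)|| \le ||f(x)||_\infty L_g||x-y|| + ||g(y)||_\infty L_f||x-y|| \le (M_f L_g + M_g L_f)\,||x-y||.
\end{equation*}
Hence $h$ is Lipschitz with constant $M_fL_g+M_gL_f$.

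The only step that needs care is the bound on the Hadamard product. It is here that boundedness is essential: without it, $x\mapsto x\cdot x$ would be a counterexample. Fixing the dimension makes the constants from \cref{lemma-1} finite; if one uses the cruder bound $||a||_\infty\le||a||_2$, no dimension factor appears at all. I would also record the explicit constant, since \cref{Robust} later uses constants of exactly this form ($C_{r,k}L_{w,k}$ and $C_{w,k}L_{r,k}$). The factors $d$ and $\sqrt d$ that appear there arise from choosing to measure the bounds in the $\infty$-norm and converting them via \cref{lemma-1}.
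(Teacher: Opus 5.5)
Your proof is correct and follows the paper's argument. Both use the same add-and-subtract decomposition and the triangle inequality, bound each term by the supremum of one factor times the Lipschitz constant of the other, and arrive at the constant $M_fL_g+M_gL_f$. Your only refinement is to fix the product as the Hadamard product and to justify the step $\|f(x)\odot(g(x)-g(y))\|\le\|f(x)\|_\infty\|g(x)-g(y)\|$ explicitly. The paper takes that step for granted, and your version also avoids the paper's dimension factor (which the paper writes as $\sqrt{n}$ although the codomain is $\mathbb{R}^m$).
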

\begin{proof}
	Since $f(x)$ and $g(x)$ are Lipschitz continuous, let $L_f$ and $L_g$ denote their respective Lipschitz constants. Given that $f(x)$ and $g(x)$ are bounded, we assume $||f(x)||_\infty \le M$ and $||g(x)||_\infty \le M'$ for all $x$. By \textcolor{blue}{\cref{lemma0}}, it follows that $||f(x)|| \le M_1$ and $||g(x)|| \le M_2$, where $M_1 = \sqrt{n}\, M$ and $M_2 = \sqrt{n}\, M'$. Consider the function $h(x) = f(x) g(x)$. For any $x, y$, we obtain:
	\begin{multline}
		\label{e}
		||h(x) - h(y)|| = ||f(x) g(x) - f(y) g(y)||
		= ||f(x) g(x) - f(x) g(y) + f(x) g(y) - f(y) g(y)||\\
		\le ||f(x)|| \cdot ||g(x) - g(y)|| + ||g(y)|| \cdot ||f(x) - f(y)||\\
		\le M_1 \cdot L_g ||x - y|| + M_2 \cdot L_f ||x - y||
		= (M_1 L_g + M_2 L_f)\, ||x - y||,
	\end{multline}
	which satisfies the Lipschitz condition. Thus, $h(x)$ is Lipschitz continuous.
\end{proof}

\begin{lemma}
	\label{lemma5}	
	If the activation functions in an FNN are Lipschitz continuous and all weight matrices and biases are bounded, then for any bounded input $x \in \mathbb{R}^{d_0}$ (where $d_0$ is a fixed finite dimension), the output of the FNN is also bounded. 
\end{lemma}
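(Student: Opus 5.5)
The argument is an induction over the layers of the FNN, written as in \cref{d}, carrying an explicit bound on the norm of each hidden state. Set $h_0 = x$ and $h_l = \sigma(W_l h_{l-1} + b_l)$ for $l = 1,\dots,L$. By hypothesis $\|x\| \le C_0$, $\|W_l\|_2 \le \omega_l < \infty$, and $\|b_l\| \le \beta_l < \infty$; if boundedness of the input or biases is given componentwise, \cref{lemma0} converts it to a Euclidean bound. The goal is a constant $C_L$, depending only on these quantities, with $\|h_L\| \le C_L$.

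First, the affine part. By the triangle inequality and the definition of the spectral norm, as in \cref{c},
\begin{equation*}
\|W_l h_{l-1} + b_l\| \le \omega_l \|h_{l-1}\| + \beta_l .
\end{equation*}

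Next, the activation. Let $L_\sigma$ be the Lipschitz constant of $\sigma$ in the sense of \cref{definition1}, applied coordinatewise. Write $\sigma(u) = \sigma(0) + (\sigma(u) - \sigma(0))$, where $\sigma(0)$ denotes the vector whose entries all equal the scalar $\sigma(0)$. This gives
\begin{equation*}
\|\sigma(u)\| \le \|\sigma(0)\| + L_\sigma \|u\| \le \sqrt{d_l}\,|\sigma(0)| + L_\sigma \|u\|,
\end{equation*}
where the last step uses \cref{lemma-1} and $d_l$ is the (finite) width of layer $l$. Combining the two estimates yields the recursion
\begin{equation*}
C_l = \sqrt{d_l}\,|\sigma(0)| + L_\sigma(\omega_l C_{l-1} + \beta_l).
\end{equation*}
Each $C_l$ is finite because it is built by finitely many sums and products of finite numbers. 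Induction on $l$ then gives $\|h_L\| \le C_L$. If the output layer has no activation, the last step simply omits $\sigma$.

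The main subtlety is that Lipschitz continuity alone does not bound $\sigma$. We need finiteness of $\sigma(0)$, which is automatic for a real-valued function, and the finite width of each layer, so that the constant vector $\sigma(0)$ has finite norm. Both points should be stated explicitly. For bounded activations such as sigmoid and tanh, one could instead bound $\|h_l\| \le \sqrt{d_l}$ directly, but the general recursion above already covers ReLU.
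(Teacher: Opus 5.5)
Your proof is correct and follows essentially the same route as the paper: a layer-by-layer induction driven by the affine growth bound $\|\sigma(u)\| \le L_\sigma\|u\| + \|\sigma(0)\|$ that Lipschitz continuity gives. The only difference is which norm you track. You use Euclidean norms with spectral-norm bounds on the weights, whereas the paper uses $\infty$-norms with entrywise bounds and obtains the recursion $C_l = L_\sigma\sqrt{n}\,(d_{l-1}M_W C_{l-1} + M_b) + C_\sigma$; your version also avoids the paper's somewhat loose switching between vector and componentwise inequalities.
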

\begin{proof}
	Consider an $L$-layer FNN defined by:
	\begin{equation}
		\label{f}
		{x^{(0)}} = x \in \mathbb{R}^{d_0}, 
		x^{(l)} = \sigma(W_l x^{(l-1)} + b_l), \; l = 1,2,\ldots,L.
	\end{equation}
	
	Since the activation function $\sigma$ is Lipschitz continuous, it satisfies $||\sigma(u) - \sigma(v)|| \le L_\sigma ||u - v||$ for all $u, v.$ In particular, setting $v = 0$ yields
	\begin{equation}
		\label{g}
		||\sigma(u) - \sigma(0)|| \le L_\sigma ||u|| 
		\Rightarrow	||\sigma(u)|| \le L_\sigma ||u|| + C_\sigma,
	\end{equation}
	where $C_\sigma = ||\sigma(0)||$ is a constant. Assume that all weight matrices $W_l$ and bias vectors $b_l$ are bounded, such that $||W_l||_\infty \le M_W$ and $||b_l||_\infty \le M_b.$ Furthermore, assume the input $x \in \mathbb{R}^{d_0}$ is bounded, satisfying $||x||_\infty \le M_x.$ We aim to show that the FNN output $x^{(L)}$ is bounded. That is, there exists a constant $C > 0$ such that
	\begin{equation}
		\label{h}
		||x^{(L)}||_\infty \le C, \quad  \forall x, ||x||_\infty \le M_x.
	\end{equation}
	
	We prove this by induction. For layer $0$, we have $||x^{(0)}||_\infty \le M_x$. For layer $1$, since $x^{(1)} = \sigma(W_1 x^{(0)} + b_1)$ and the entries satisfy $|W_{1,ij}| \le M_W$ and $|x^{(0)}_j| \le M_x$, each component satisfies
	\begin{equation}
		\label{i}
		| (W_1 x^{(0)})_i | 
		= \left| \sum_{j=1}^{d_0} W_{1,ij} x^{(0)}_j \right| 
		\le \sum_{j=1}^{d_0} |W_{1,ij}|\, |x^{(0)}_j|
		\le d_0 M_W M_x, \\[4pt]
		\Rightarrow | (W_1 x^{(0)} + b_1)_i | \le d_0 M_W M_x + M_b.
	\end{equation}
	
	Given that $||\sigma(u)|| \le L_\sigma ||u|| + C_\sigma$, \textcolor{blue}{\cref{lemma-1}} and \textcolor{blue}{\cref{lemma0}} imply that
	\begin{equation}
		\label{j}
		||\sigma(u)||_\infty \le ||\sigma(u)|| \le L_\sigma \sqrt{n}\, ||u||_\infty + C_\sigma.
	\end{equation}
	
	Therefore, we obtain
	\begin{equation}
		\label{k}
		|x^{(1)}_i|
		= |\sigma( (W_1 x^{(0)} + b_1)_i )|  
		\le L_\sigma \sqrt{n}\, |(W_1 x^{(0)} + b_1)_i| + C_\sigma,
	\end{equation}
	which leads to
	\begin{equation}
		\label{l}
		|x^{(1)}_i|
		\le L_\sigma \sqrt{n} (d_0 M_W M_x + M_b) + C_\sigma
		\;  \Rightarrow\; 
		||x^{(1)}||_\infty \le C_1,
	\end{equation}
	where $C_1 = L_\sigma \sqrt{n}(d_0 M_W M_x + M_b) + C_\sigma.$ Now, proceeding by induction, assume that the output of layer $l-1$ is bounded by $C_{l-1}$, so that $||x^{(l-1)}||_\infty \le C_{l-1}.$Layer $l$ is given by $x^{(l)} = \sigma(W_l x^{(l-1)} + b_l)$ and satisfies
	\begin{equation}
		\label{m}
		|(W_l x^{(l-1)} + b_l)_i| 
		\le d_{l-1} M_W C_{l-1} + M_b
		\; \Rightarrow\;
		|x^{(l)}_i|
		\le L_\sigma \sqrt{n}\, (d_{l-1} M_W C_{l-1} + M_b) + C_\sigma.
	\end{equation}
	
	Thus,
	\begin{equation}
		\label{n}
		||x^{(l)}||_\infty \le C_l,
	\end{equation}
	where $C_l = L_\sigma \sqrt{n}(d_{l-1} M_W C_{l-1} + M_b) + C_\sigma.$ This completes the proof.
\end{proof}

\begin{lemma}
	\label{lemma6}
	The product of two bounded functions $f, g: \mathbb{R}^n \to \mathbb{R}^m$ is also a bounded function, where $n$ and $m$ are fixed finite dimensions.
\end{lemma}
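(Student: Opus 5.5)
The statement is elementary: we need a uniform bound on $h(x)=f(x)g(x)$ over $\mathbb{R}^n$ from uniform bounds on $f$ and $g$. As in \cref{lemma4}, read the product componentwise (Hadamard), so that $h(x)_i = f(x)_i\, g(x)_i$ for $i=1,\dots,m$; the same argument also covers the scalar-times-vector case used for $w_{i,j,k}^t f_k^r(z_i^t,z_j^t)$ in \cref{ti-ode}.

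The plan is to work in the $\infty$-norm and transfer back to the Euclidean norm with \cref{lemma0}.

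\textbf{Step 1.} By \cref{lemma0}, boundedness of $f$ and $g$ in the Euclidean norm gives constants $M, M'$ with $\|f(x)\|_\infty \le M$ and $\|g(x)\|_\infty \le M'$ for all $x\in\mathbb{R}^n$. Concretely, one may take $M=\sup_x\|f(x)\|_2$ and $M'=\sup_x\|g(x)\|_2$, using the left inequality of \cref{lemma-1}.

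\textbf{Step 2.} For each component,
\[
|h(x)_i| = |f(x)_i|\,|g(x)_i| \le \|f(x)\|_\infty\,\|g(x)\|_\infty \le M M'.
\]
Hence $\|h(x)\|_\infty \le MM'$ uniformly in $x$.

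\textbf{Step 3.} The right inequality of \cref{lemma-1}, with dimension $m$, gives $\|h(x)\|_2 \le \sqrt{m}\,MM'$. So $h$ is bounded.

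In the scalar-times-vector case the bound is even simpler: $\|h(x)\| = |f(x)|\,\|g(x)\| \le MM'$ directly.

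There is no real obstacle here. The only point needing care is fixing what "product" means for vector-valued maps, since the statement leaves this implicit. The result should be stated for the componentwise or scalar product, because these are the forms that appear in the weighted interaction terms of \cref{ti-ode}. The finite dimension $m$ is what keeps the norm-conversion constant $\sqrt{m}$ finite.
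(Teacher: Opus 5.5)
Your proof is correct and takes essentially the same route as the paper: bound $f$ and $g$ uniformly in the $\infty$-norm, then use $\|f(x)g(x)\|_\infty \le \|f(x)\|_\infty\|g(x)\|_\infty \le M_f M_g$. Your explicit componentwise reading of the product and your final conversion to the Euclidean norm via \cref{lemma-1} only spell out what the paper leaves implicit through \cref{lemma0}.
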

\begin{proof}
	Since the functions $f$ and $g$ are bounded, there exist constants $M_f, M_g > 0$ such that $||f(x)||_\infty \le M_f$ and $||g(x)||_\infty \le M_g$ for all $x$. Let $h(x) = f(x) g(x)$. Then, for any $x$, we have$$||h(x)||_\infty = ||f(x) g(x)||_\infty \le ||f(x)||_\infty \cdot ||g(x)||_\infty \le M_f M_g,$$which implies that $h(x)$ is also a bounded function.
\end{proof}
\begin{lemma}
	\label{lemma7}
	The sum of two Lipschitz continuous functions remains Lipschitz continuous.
\end{lemma}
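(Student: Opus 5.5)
The plan is to verify the Lipschitz condition of \cref{definition1} directly for $h=f+g$, using the triangle inequality. Let $f,g:\mathbb{R}^n\to\mathbb{R}^m$ be Lipschitz continuous with constants $L_f$ and $L_g$, and set $h(x)=f(x)+g(x)$.

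For arbitrary $x,y\in\mathbb{R}^n$, first regroup the difference as
\begin{equation*}
h(x)-h(y)=\bigl(f(x)-f(y)\bigr)+\bigl(g(x)-g(y)\bigr).
\end{equation*}
Applying the triangle inequality for the Euclidean norm and then the two Lipschitz bounds separately gives
\begin{equation*}
||h(x)-h(y)||\le ||f(x)-f(y)||+||g(x)-g(y)||\le (L_f+L_g)\,||x-y||.
\end{equation*}
Since $L_f+L_g>0$ is a finite constant independent of $x$ and $y$, $h$ satisfies \cref{definition1} with constant $L_f+L_g$.

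By induction on the number of summands, any finite sum $\sum_k h_k$ of Lipschitz functions is Lipschitz with constant $\sum_k L_{h_k}$. This is the form needed later for the double sum over $j\in V$ and $k=1,\dots,K$ in \cref{ti-odea}, and for the linear term $-z_i^t$.

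There is no real obstacle here. The only point to watch is that both functions share the same domain and codomain, so the sum is well defined and a single norm applies throughout. Finiteness of the index sets in the theorem is what guarantees the summed constant stays finite.
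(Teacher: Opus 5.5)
Your proof is correct and matches the paper's argument: regroup $h(x)-h(y)$, apply the triangle inequality, and bound each piece by its Lipschitz constant to get $L_f+L_g$. The extension to finite sums by induction is a harmless addition, and the paper uses it implicitly when it handles the double sum in the existence proof.
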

\begin{proof}
	Suppose that the functions $f(x)$ and $g(x)$ are both Lipschitz continuous. Then there exist constants $L_f, L_g > 0$ such that for all $x, y$,
	\begin{align}
		||f(x) - f(y)|| \le L_f ||x - y||, \\
		||g(x) - g(y)|| \le L_g ||x - y||.
	\end{align}
	
	Define $h(x) = f(x) + g(x)$. For any $x, y$, we have
	\begin{multline}
		\label{p}
		||h(x) - h(y)|| 
		= ||f(x) + g(x) - f(y) - g(y)|| 
		= ||(f(x) - f(y)) + (g(x) - g(y))|| \\
		\le ||f(x) - f(y)|| + ||g(x) - g(y)|| 
		\le L_f ||x - y|| + L_g ||x - y|| 
		= (L_f + L_g) ||x - y||,
	\end{multline}
	which shows that $h(x)$ satisfies the Lipschitz condition. Therefore, $h(x)$ is Lipschitz continuous.
\end{proof}
\begin{theorem}[Picard–Lindelöf Theorem \citep{hartman2002ordinary}] 
	\label{Picard–Lindelöf Theorem}
	Let $y,f \in {\mathbb{R}^n};f(t,y)$ continuous on a parallelepiped $R: - a \le t - {t_0} \le a,||y - {y_0}|{|_\infty } \le b$  and Lipschitz continuous with respect to $y$. Let ${M_{||f||}}$  be a bound for $||f(t,y)|{|_\infty }$ on $R;\alpha  = \min \{ a,b/{M_{||f||}}\} $. Then ${y'} = f(t,y),y({t_0}) = {y_0}$	has a unique solution $y = y(t)$ on $[{t_0} - \alpha ,{t_0} + \alpha ]$.
\end{theorem}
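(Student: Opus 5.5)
We prove the Picard–Lindelöf theorem (\cref{Picard–Lindelöf Theorem}) by the standard fixed-point argument for the equivalent integral equation. Throughout, let $I=[t_0-\alpha,t_0+\alpha]$ and let $L$ be the Lipschitz constant of $f$ in $y$. Because of \cref{lemma-1}, a Lipschitz bound in the Euclidean norm gives one in the $\infty$-norm, with a constant changed by at most a factor depending only on $n$. So we work with $\|\cdot\|_\infty$ throughout, matching the box $R$.

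\textbf{Step 1: integral reformulation.} Since $f$ is continuous on $R$, the fundamental theorem of calculus shows that a continuous $y:I\to\mathbb{R}^n$ whose graph lies in $R$ solves $y'=f(t,y)$, $y(t_0)=y_0$ if and only if
\[
y(t)=y_0+\int_{t_0}^{t} f(s,y(s))\,ds \quad \text{for all } t\in I .
\]
Let $X$ be the set of continuous $y:I\to\mathbb{R}^n$ with $\|y(t)-y_0\|_\infty\le b$ for all $t\in I$. Define $(Ty)(t)$ to be the right-hand side above.

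\textbf{Step 2: $T$ maps $X$ into $X$.} For $y\in X$, the map $Ty$ is continuous. Moreover
\[
\|(Ty)(t)-y_0\|_\infty\le |t-t_0|\,M_{\|f\|}\le \alpha M_{\|f\|}\le b,
\]
where the last inequality is exactly the choice $\alpha=\min\{a,b/M_{\|f\|}\}$. Since $\alpha\le a$, the time constraint in $R$ also holds.

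\textbf{Step 3: contraction.} Equip $X$ with the Bielecki norm
\[
\|y\|_B=\sup_{t\in I} e^{-2L|t-t_0|}\|y(t)\|_\infty .
\]
This norm is equivalent to the sup norm, so $X$ is a closed subset of a Banach space and hence complete. For $y,z\in X$ and $t\ge t_0$ (the case $t<t_0$ is symmetric),
\[
\|(Ty)(t)-(Tz)(t)\|_\infty\le L\int_{t_0}^{t}\|y(s)-z(s)\|_\infty\,ds\le L\|y-z\|_B\int_{t_0}^{t}e^{2L(s-t_0)}ds\le \tfrac12 e^{2L(t-t_0)}\|y-z\|_B .
\]
Hence $\|Ty-Tz\|_B\le\frac12\|y-z\|_B$. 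The Banach fixed-point theorem then gives a unique fixed point in $X$, which is a solution on $I$. As a backup that avoids the weighted norm, one can instead show by induction that $\|T^k y-T^k z\|_\infty\le \frac{(L\alpha)^k}{k!}\|y-z\|_\infty$, so some iterate $T^k$ is a contraction.

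\textbf{Step 4: uniqueness among all solutions (main obstacle).} Steps 1–3 only give uniqueness within $X$. We must still show that any solution $\tilde y$ on $I$ (or on any subinterval containing $t_0$) never leaves the box $R$, so that it lies in $X$.

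Suppose it does leave. Let $t_1$ be the first exit time (take $t_1>t_0$; the other side is symmetric), and assume $t_1\le t_0+\alpha$. On $[t_0,t_1]$ we have $\|\tilde y'\|_\infty\le M_{\|f\|}$. If $M_{\|f\|}\alpha<b$, this gives $\|\tilde y(t_1)-y_0\|_\infty<b$, contradicting that $t_1$ is the exit time. The borderline case $\|\tilde y(t_1)-y_0\|_\infty=b$ with $t_1=t_0+\alpha$ is harmless, since the solution is then still in $R$.

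An alternative to this exit-time argument is Gronwall's inequality applied to $\|\tilde y(t)-y(t)\|_\infty$, on the maximal interval where both solutions lie in $R$. This forces the difference to vanish there, and then a connectedness argument extends the conclusion to all of $I$.

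I expect this "stays in the box" continuity argument to be the only delicate step. The rest is routine once $\alpha$ has been chosen as in the statement.
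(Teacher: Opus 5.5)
Your proof is correct and is the standard self-contained argument. The paper does not prove this statement at all: it quotes it from Hartman's textbook and uses it as a black box in the proof of \cref{Existence unique}. The difference is therefore that you supply what the paper delegates, adapted to its conventions. You work in $\|\cdot\|_\infty$, converting a Euclidean Lipschitz constant via \cref{lemma-1} at the cost of a factor $\sqrt{n}$, and you work directly on the two-sided interval. The Bielecki-norm contraction gives existence and uniqueness in $X$ in one stroke. Your backup bound $\|T^ky-T^kz\|_\infty\le \frac{(L\alpha)^k}{k!}\|y-z\|_\infty$ is the classical Picard-iteration route, and either works.

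Your tacit assumption that one constant $L$ serves for all $t$ is essential. It is the usual hypothesis (``uniformly Lipschitz in $y$''), which the paper's transcription leaves implicit. With $t$-dependent constants uniqueness fails: take $f(t,y)=2y/t$ for $|y|\le t^2$, $f(t,y)=2t\,\mathrm{sgn}(y)$ for $|y|>t^2$, and $f(0,y)=0$. This $f$ is continuous and Lipschitz in $y$ for each fixed $t$, yet it admits the solutions $y=ct^2$, $|c|\le 1$, through $(0,0)$.

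One sentence in Step 4 needs repair. In the typical case $\alpha=b/M_{\|f\|}$ one has $M_{\|f\|}\alpha=b$ exactly. Your condition ``if $M_{\|f\|}\alpha<b$'' therefore fails precisely there, and the borderline remark only covers $t_1=t_0+\alpha$. No case split is needed. The set of $t\in(t_0,t_0+\alpha]$ with $\|\tilde y(t)-y_0\|_\infty>b$ is relatively open. So if it is nonempty, its infimum $t_1$ satisfies $t_0<t_1<t_0+\alpha$ and $\|\tilde y(t_1)-y_0\|_\infty=b$. On the other hand, $\|\tilde y(t_1)-y_0\|_\infty\le M_{\|f\|}(t_1-t_0)<b$. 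The inequality is strict since $t_1-t_0<\alpha\le b/M_{\|f\|}$ when $M_{\|f\|}>0$, and it is trivial when $M_{\|f\|}=0$.

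Note also that if $f$ is defined only on $R$, as in the statement, every solution on $I$ has its graph in $R$ by definition. Step 4 is then automatic and only matters when $f$ is given on a larger domain.
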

This theorem establishes that the initial value problem admits solutions, provided that the right-hand side is Lipschitz continuous with respect to $y$ and continuous in $t$. The proof of \textcolor{blue}{\cref{Existence unique}} is provided below:
\begin{proof}	
	Let $z(t) = [z_1^t, z_2^t, \ldots, z_N^t] \in \mathbb{R}^{N \times d}$ denote the node hidden state at time $t$. The equations can be rewritten as a vector-valued differential equation of the form
	\begin{equation}
		\label{q}
		\frac{dz(t)}{dt} = \mathcal{F}(t, z(t)) := f^a(\mathcal{G}(t, z(t))) - z(t),
	\end{equation}
	where
	\begin{equation}
		\label{r}
		\mathcal{G}(t, z(t)) = \sum_{j \in V} \sum_{k=1}^K w_{i,j,k}^t \, f_k^r(z_i^t, z_j^t).
	\end{equation}
	
	Note that the explicit dependence on $t$ in the right-hand side arises only from the time-encoding component. Since the time-encoding function is continuous in $t$, the entire right-hand side is continuous with respect to $t$. Therefore, to guarantee the existence of the solution, it remains to show that $\mathcal{F}(t, z(t))$ is Lipschitz continuous with respect to $z(t)$. 
	
	For simplicity, we temporarily disregard the linear term $-z(t)$, as it can be absorbed into the preceding term $f^a(\mathcal{G}(t, z(t)))$ without affecting the final conclusion. We first analyze $f_k^r(z_i^t, z_j^t)$. In practice, this term is computed by concatenating $z_i^t$ and $z_j^t$ into a vector in $\mathbb{R}^{2d}$ and passing it through an FNN. Thus, it can be viewed as a function of the single variable $(z_i^t, z_j^t) \in \mathbb{R}^{2d}$, and its Lipschitz continuity depends on the corresponding FNN. Assume that all FNNs in the equations have bounded weights and biases and that their activation functions are Lipschitz continuous. According to \textcolor{blue}{\cref{lemma3}} and \textcolor{blue}{\cref{lemma5}}, such FNNs are Lipschitz continuous and bounded. Hence, $f_k^r(z_i^t, z_j^t)$ is Lipschitz continuous with bounded output. Similarly, since $\overline{w}_{i,k}^t$ and $\hat{w}_{i,k}^t$ are also outputs of FNNs, they are Lipschitz continuous and bounded. Moreover, \textcolor{blue}{\cref{lemma4}} and \textcolor{blue}{\cref{lemma6}} imply that their product $w_{i,j,k}^t$ remains Lipschitz continuous and bounded. Since both $w_{i,j,k}^t$ and $f_k^r(z_i^t, z_j^t)$ are Lipschitz continuous and bounded, \textcolor{blue}{\cref{lemma4}} ensures that their product is also Lipschitz continuous. Additionally, finite summation preserves Lipschitz continuity (\textcolor{blue}{\cref{lemma7}}), which implies that $\mathcal{G}(t, z(t))$ is Lipschitz continuous. Finally, as $f^a$ is implemented by an FNN, the property of composite functions (\textcolor{blue}{\cref{lemma1}}) confirms that $\mathcal{F}(t, z(t))$ is Lipschitz continuous with respect to $z(t)$. Consequently, the equation satisfies the conditions required in \textcolor{blue}{\cref{Picard–Lindelöf Theorem}}.
\end{proof}
\section{Proof of \textcolor{blue}{\cref{Robust}}}
\label{theorem2}

\begin{proof}
	Analogous to the proof of Theorem IV.1, we omit the term $- z_i^t$. Let $z_i^t$ denote the original system state, $\tilde z_i^t$ denote the perturbed system state, and the error be defined as
	\begin{equation}
		\label{s}
		e_i^t = \tilde z_i^t - z_i^t.
	\end{equation}
	
	The evolution of the perturbed system is given by:
	\begin{equation}
		\label{t}
		\frac{{d\tilde z_i^t}}{{dt}} = {f^a}(\sum\limits_j {\sum\limits_{k = 1}^K {\tilde w _{i,j,k}^t f_k^r(\tilde z _i^t,\tilde z _j^t)} } ) 
		\Rightarrow \frac{{de_i^t}}{{dt}} = \Delta _i^t: = {f^a}({\tilde A} _i) - {f^a}({A_i}),
	\end{equation}
	where
	\begin{align}
		\label{u}
		{ \tilde A _i} = \sum\limits_j {\sum\limits_{k = 1}^K {\tilde w _{i,j,k}^t f_k^r(\tilde z _i^t,\tilde z _j^t)} }, \\
		{A_i} = \sum\limits_j {\sum\limits_{k = 1}^K {w_{i,j,k}^tf_k^r(z_i^t,z_j^t)} }.
	\end{align}
	
	Given the Lyapunov function	$V(t) = \frac{1}{2} ||e^t||^2 = \frac{1}{2} \sum_i (e_i^t)^2,$ we obtain
	\begin{equation}
		\label{w}
		\frac{dV}{dt}
		= \sum_i \langle e_i^t, \Delta_i^t \rangle
		= \sum_i \sum_{l=1}^d e_{i,l}^t \cdot \Delta_{i,l}^t
		= \sum_i (e_i^t)^\top \Delta_i^t.
	\end{equation}
	
	By the Cauchy–Schwarz inequality, it follows that
	\begin{equation}
		\label{x}
		\frac{dV}{dt} \le \sum_i ||e_i^t|| \cdot ||\Delta_i^t||.
	\end{equation}
	
	Since the Lipschitz constant of $f^a$ is $L_a$, we can derive an upper bound for the perturbation term $||\Delta_i^t||$ as follows:
	\begin{equation}
		\label{y}
		||\Delta _i^t|| \le {L_a} \cdot ||{\tilde A _i} - {A_i}||.
	\end{equation}
	
	We further decompose this term as follows:
	\begin{multline}
		\label{z}
		||{{\tilde A }_i} - {A_i}|| \le \sum\limits_j {\sum\limits_{k = 1}^K {||\tilde w _{i,j,k}^tf_k^r(\tilde z _i^t,\tilde z _j^t) - w_{i,j,k}^tf_k^r(z_i^t,z_j^t)||} } \\
		\le \sum\limits_j {\sum\limits_{k = 1}^K {(||\tilde w _{i,j,k}^t - w_{i,j,k}^t|| \cdot ||f_k^r(\tilde z _i^t,\tilde z _j^t)||}} 
		+ ||w_{i,j,k}^t|| \cdot ||f_k^r(\tilde z _i^t,\tilde z _j^t) - f_k^r(z_i^t,z_j^t)||) .
	\end{multline}
	
	Given that the Lipschitz constants of $f_k^r$ and $w_{i,j,k}^t$ are $L_{r,k}$ and $L_{w,k}$, respectively, we obtain:
	\begin{multline}
		\label{aa}
		||f_k^r(\tilde z _i^t,\tilde z _j^t) - f_k^r(z_i^t,z_j^t)|| 
		\le {L_{r,k}}||[\tilde z _i^t - z_i^t,\tilde z _j^t - z_j^t]|| 
		= {L_{r,k}}\sqrt {||\tilde z _i^t - z_i^t|{|^2} + ||\tilde z _j^t - z_j^t|{|^2}} \\
		\le {L_{r,k}}(||\tilde z _i^t - z_i^t|| + ||\tilde z _j^t - z_j^t||) 
		\le {L_{r,k}}(||e_i^t|| + ||e_j^t||),
	\end{multline}
	\begin{multline}
		\label{ab}
		||\tilde w _{i,j,k}^t - w_{i,j,k}^t|| 
		= ||w_{i.j,k}^t(\tilde z _i^t,\tilde z _j^t) - w_{i.j,k}^t(z_i^t,z_j^t)||
		= ||\overline w_{i,k}^t(\tilde z _i^t) \cdot \hat w_{j,k}^t(\tilde z _j^t) - \overline w_{i,k}^t(z_i^t) \cdot \hat w_{j,k}^t(z_j^t)||\\
		= ||\overline w_{i,k}^t(\tilde z _i^t) \cdot \hat w_{j,k}^t(\tilde z _j^t) - \overline w_{i,k}^t(z_i^t) \cdot \hat w_{j,k}^t(\tilde z _j^t) 
		+ \overline w_{i,k}^t(z_i^t) \cdot \hat w_{j,k}^t(\tilde z _j^t) - \overline w_{i,k}^t(z_i^t) \cdot \hat w_{j,k}^t(z_j^t)||\\
		\le ||\hat w_{j,k}^t(\tilde z _j^t)|| \cdot ||\overline w_{i,k}^t(\tilde z _i^t) - \overline w_{i,k}^t(z_i^t)|| 
		+ ||\overline w_{i,k}^t(z_i^t)|| \cdot ||\hat w_{j,k}^t(\tilde z _j^t) - \hat w_{j,k}^t(z_j^t)||\\
		\le {L_{w,k}}||e_i^t|| \cdot ||\hat w_{j,k}^t(\tilde z _j^t)|| + {L_{w,k}}||e_j^t|| \cdot ||\overline w_{i,k}^t(z_i^t)||
		\le {L_{w,k}}\sqrt d (||e_i^t|| + ||e_j^t||).
	\end{multline}
	
	The inequality $L_{w,k} ||e_i^t|| \cdot \bigl\|\hat w_{j,k}^t(\tilde z_j^t)\bigr\|
	\;+\;
	L_{w,k} ||e_j^t|| \cdot \bigl\|\overline w_{i,k}^t(z_i^t)\bigr\|
	\;\le\;
	L_{w,k} \sqrt{d}\,\bigl(||e_i^t|| + ||e_j^t||\bigr)$
	is valid because $\sigma$ is the sigmoid activation function. Consequently, $\bigl\|\hat w_{j,k}^t(\tilde z_j^t)\bigr\|
	\le \sqrt{d}\,\bigl\|\hat w_{j,k}^t(\tilde z_j^t)\bigr\|_\infty
	\le \sqrt{d},$ and similarly, $\bigl\|\overline w_{i,k}^t(z_i^t)\bigr\|
	\le \sqrt{d}\,\bigl\|\overline w_{i,k}^t(z_i^t)\bigr\|_\infty
	\le \sqrt{d}.$ Furthermore, since $f_k^r(z_i^t, z_j^t)$ and $w_{i,j,k}^t$ are bounded for all $k$, \textcolor{blue}{\cref{lemma0}} ensures their Euclidean norms are bounded by $\sqrt{d}\, C_{r,k}$ and $\sqrt{d}\, C_{w,k}$. This yields the perturbation bound:
	\begin{equation}
		\label{ac}
		||\Delta_i^t|| 
		\le \sum_j \sum_{k=1}^KL_a \bigl(d C_{r,k} L_{w,k} + \sqrt{d}\, C_{w,k} L_{r,k}\bigr)\bigl(||e_i^t|| + ||e_j^t||\bigr) \
		\le \sum_{k=1}^K \alpha_k ||e^t||,
	\end{equation}
	where $\alpha_k = 2N \cdot L_a \bigl(d C_{r,k} L_{w,k} + \sqrt{d}\, C_{w,k} L_{r,k}\bigr).$
	Inserting this into the Lyapunov derivative, we obtain
	\begin{equation}
		\label{ad}
		\frac{dV}{dt}
		\le \sum_i ||e_i^t|| \cdot ||\Delta_i^t||
		\le ||e^t|| \sum_i ||e_i^t|| \cdot \sum_{k=1}^K \alpha_k.
	\end{equation}
	
	Applying the Cauchy–Schwarz inequality again gives
	\begin{equation}
		\label{ae}
		\frac{dV}{dt}
		\le ||e^t||^2 \cdot \sqrt{N \Bigl(\sum_{k=1}^K \alpha_k\Bigr)^2}
		= ||e^t||^2 \sqrt{N} \sum_{k=1}^K \alpha_k.
	\end{equation}
	
	Similarly, for the mean-interaction system,
	\begin{equation}
		\label{af}
		\frac{dV_{\mathrm{average}}}{dt}
		\le ||e^t||^2 \sqrt{N \alpha^2}
		= ||e^t||^2 \sqrt{N}\, \alpha.
	\end{equation}
	
	As $L^k = d C_{r,k} L_{w,k} + \sqrt{d}\, C_{w,k} L_{r,k}$ and $\sum_k L^k \le L_r$ by assumption, it implies that $\sum_{k=1}^K \alpha_k \le \alpha.$
\end{proof}

\section{Proof of Lipschitz continuity for some activation functions}
\label{Proof of Lipschitz continuity for some activation functions}
\begin{lemma}\citep{lebl2009basic}
	\label{lemma2}
	If a function $f$ is differentiable and its derivative is bounded (specifically, there exists $M > 0$ such that $|f'(x)| \le M$ for all $x \in \mathbb{R}$), then $f$ is Lipschitz continuous, and its Lipschitz constant is at most $M$.
\end{lemma}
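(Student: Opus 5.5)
The lemma to prove is Lemma~\ref{lemma2}: a differentiable $f$ with bounded derivative is Lipschitz with constant at most $M$. The plan is a direct application of the Mean Value Theorem, which converts a pointwise bound on $f'$ into a bound on difference quotients.

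\textbf{Step 1.} Fix arbitrary $x,y\in\mathbb{R}$. If $x=y$, the Lipschitz inequality $|f(x)-f(y)|\le M|x-y|$ holds trivially, since both sides are zero. So assume without loss of generality that $x<y$.

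\textbf{Step 2.} Since $f$ is differentiable on all of $\mathbb{R}$, it is continuous on $[x,y]$ and differentiable on $(x,y)$. The Mean Value Theorem therefore gives some $\xi\in(x,y)$ with
\[
f(y)-f(x)=f'(\xi)(y-x).
\]

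\textbf{Step 3.} Take absolute values and apply the hypothesis $|f'(\xi)|\le M$:
\[
|f(x)-f(y)|=|f'(\xi)|\,|x-y|\le M|x-y|.
\]
Because $x,y$ were arbitrary, this is exactly the condition of Definition~\ref{definition1} with constant $M$. The optimal Lipschitz constant is the smallest such constant, so it is at most $M$.

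For the paper's purposes, the vector-valued activations are applied componentwise. A scalar $L$-Lipschitz map applied entrywise is $L$-Lipschitz in the Euclidean norm, since
\[
\sum_i |\sigma(u_i)-\sigma(v_i)|^2\le L^2\sum_i|u_i-v_i|^2 .
\]
This is what makes the lemma usable in Lemma~\ref{lemma3}.

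The only point needing care is the applicability of the Mean Value Theorem, which is guaranteed by differentiability everywhere. For ReLU, which fails to be differentiable at $0$, the plan is instead to verify $|\max(0,a)-\max(0,b)|\le|a-b|$ directly by checking the sign cases of $a$ and $b$.
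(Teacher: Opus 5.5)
Your Mean Value Theorem argument is correct, and it is the standard proof of this fact. The paper does not prove the lemma itself but cites it from Lebl, where it is derived in exactly this way. Your componentwise extension to the Euclidean norm matches how the paper then applies the lemma to sigmoid and tanh, and your direct sign-case treatment of ReLU matches the paper's separate handling of that non-differentiable case.
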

Next, we provide proofs that several commonly used activation functions are Lipschitz continuous:
\begin{itemize}
	\item ReLU: $f(x) = \max(0,x)$
	\begin{proof}
		For any $x, y \in \mathbb{R}^d$, he inequality
		\begin{equation}
			\label{ag}
			|\max(0, x_i) - \max(0, y_i)| \le |x_i - y_i|,
		\end{equation}
		holds for each component. Hence, we have
		\begin{equation}
			\label{ah}
			||\mathrm{ReLU}(x) - \mathrm{ReLU}(y)||^2 
			= \sum_{i=1}^d |\max(0, x_i) - \max(0, y_i)|^2 
			\le \sum_{i=1}^d |x_i - y_i|^2 
			= ||x - y||^2.
		\end{equation}
		
		Taking the square root on both sides yields
		\begin{equation}
			\label{ai}
			||\mathrm{ReLU}(x) - \mathrm{ReLU}(y)|| \le ||x - y||.
		\end{equation}
		
		Therefore, ReLU is Lipschitz continuous.
	\end{proof}
	\item Sigmoid: $\sigma(x) = \frac{1}{1 + e^{-x}}$
	\begin{proof}
		The derivative of the scalar function $\sigma(x)$ is given by
		\begin{equation}
			\label{aj}
			\sigma'(x) = \sigma(x)\bigl(1 - \sigma(x)\bigr).
		\end{equation}
		
		Since $0 < \sigma(x) < 1$ for all $x \in \mathbb{R}$, it follows that $\sigma'(x) \in (0, 0.25]$. Thus, for any $x, y \in \mathbb{R}^d$, \textcolor{blue}{\cref{lemma2}} implies that
		\begin{equation}
			\label{ak}
			|\sigma(x_i) - \sigma(y_i)| \le 0.25\, |x_i - y_i|.
		\end{equation}
		
		Consequently, we have
		\begin{equation}
			\label{al}
			||\sigma(x) - \sigma(y)||^2 
			= \sum_{i=1}^d |\sigma(x_i) - \sigma(y_i)|^2 
			\le 0.25^2 \sum_{i=1}^d |x_i - y_i|^2
			= 0.25^2\, ||x - y||^2.
		\end{equation}
		
		Taking the square root of both sides yields
		\begin{equation}
			\label{am}
			||\sigma(x) - \sigma(y)|| \le 0.25\, ||x - y||.
		\end{equation}
		
		Therefore, the sigmoid function is Lipschitz continuous.
	\end{proof}
	\item Tanh: $f(x) = \tanh(x)$
	\begin{proof}
		The derivative of the scalar function $\tanh(x)$ is given by
		\begin{equation}
			\label{an}
			\tanh'(x) = 1 - \tanh^2(x).
		\end{equation}
		
		Since $-1 < \tanh(x) < 1$, it follows that $0 < \tanh'(x) \le 1$, where equality holds if and only if $\tanh(x) = 0$. Thus, for any $x, y \in \mathbb{R}^d$, \textcolor{blue}{\cref{lemma2}} implies
		\begin{equation}
			\label{ao}
			|\tanh(x_i) - \tanh(y_i)| \le |x_i - y_i|.
		\end{equation}
		
		Consequently, we have
		\begin{equation}
			\label{ap}
			||\tanh(x) - \tanh(y)||^2
			= \sum_{i=1}^d |\tanh(x_i) - \tanh(y_i)|^2 
			\le \sum_{i=1}^d |x_i - y_i|^2
			= ||x - y||^2.
		\end{equation}
		
		Taking the square root of both sides yields
		\begin{equation}
			\label{aq}
			||\tanh(x) - \tanh(y)|| \le ||x - y||.
		\end{equation}
		
		Therefore, the $\tanh$ function is Lipschitz continuous.
	\end{proof}
\end{itemize}
\printcredits

\bibliographystyle{cas-model2-names}

\bibliography{cas-refs}


\end{document}